\def\tU {\tilde{U}}
\newenvironment{packed_enum}{
\begin{enumerate}
\setlength{\itemsep}{1pt}
\setlength{\parskip}{0pt}
\setlength{\parsep}{0pt}
}{\end{enumerate}}
\newcommand{\figsize}{0.3}
\newcommand{\figsizeb}{0.45}
\begin{document} 

\title{Low-Rank Matrix and Tensor Completion via Adaptive Sampling}

\author[1]{
Akshay Krishnamurthy
\thanks{akshaykr@cs.cmu.edu}}
\author[2]{
Aarti Singh
\thanks{aarti@cs.cmu.edu}}

\affil[1]{Computer Science Department\\
Carnegie Mellon University}
\affil[2]{Machine Learning Department\\
Carnegie Mellon University}

\maketitle

\begin{abstract} 
We study low rank matrix and tensor completion and propose novel algorithms that employ adaptive sampling schemes to obtain strong performance guarantees. 
Our algorithms exploit adaptivity to identify entries that are highly informative for learning the column space of the matrix (tensor) and consequently, our results hold even when the row space is highly coherent, in contrast with previous analyses.
In the absence of noise, we show that one can exactly recover a $n \times n$ matrix of rank $r$ from merely $\Omega(n r^{3/2}\log(r))$ matrix entries.
We also show that one can recover an order $T$ tensor using $\Omega(n r^{T-1/2}T^2 \log(r))$ entries.
For noisy recovery, our algorithm consistently estimates a low rank matrix corrupted with noise using $\Omega(n r^{3/2} \textrm{polylog}(n))$ entries.
We complement our study with simulations that verify our theory and demonstrate the scalability of our algorithms.
%% Our algorithms for noiseless recovery are \emph{sequential}, meaning they process the columns (sub-tensors) one at a time, making them incredibly fast in practice.%% We study low rank matrix and tensor completion and propose novel algorithms with
%% the best known sample complexity guarantees for these problems. Our algorithms
%% are \emph{active} in that they interact with the sampling mechanism to obtain
%% informative measurements. They are also \emph{sequential}, processing the
%% columns (sub-tensors) one at a time, and can easily be implemented in a
%% streaming setting. For matrix completion, we show that one can exactly recover
%% an $n \times n$ matrix of rank $r$ using $O(r^2n \log(r))$ observations and for
%% tensor completion, one can recover an order-$T$ tensor using $O(r^{2(T-1)} T^2
%% n\log(r))$ observations. We also establish a necessary condition for exact
%% tensor completion from random observations. 
\end{abstract} 

\section{Introduction}

Recently, the machine learning and signal processing communities have focused considerable attention toward understanding the benefits of adaptive sensing.
This theme is particularly relevant to modern data analysis, where adaptive sensing has emerged as an efficient alternative to obtaining and processing the large data sets associated with scientific investigation.
These empirical observations have lead to a number of theoretical studies characterizing the performance gains offered by adaptive sensing over conventional, passive approaches. 
In this work, we continue in that direction and study the role of adaptive data acquisition in low rank matrix and tensor completion problems. 

Our study is motivated not only by prior theoretical results in favor of adaptive sensing but also by several applications where adaptive sensing is feasible.
In recommender systems, obtaining a measurement amounts to asking a user about an item, an interaction that has been deployed in production systems.
Another application pertains to network tomography, where a network operator is interested in inferring latencies between hosts in a communication network while injecting few packets into the network. 
The operator, being in control of the network, can adaptively sample the matrix of pair-wise latencies, potentially reducing the total number of measurements.
In particular, the operator can obtain full columns of the matrix by measuring from one host to all others, a sampling strategy we will exploit in this paper.

Yet another example centers around gene expression analysis, where the object of interest is a matrix of expression levels for various genes across a number of conditions. 
There are typically two types of measurements: low-throughput assays provide highly reliable measurements of single entries in this matrix while high-throughput microarrays provide expression levels of all genes of interest across operating conditions, thus revealing entire columns. 
The completion problem can be seen as a strategy for learning the expression matrix from both low- and high-throughput data while minimizing the total measurement cost.

\subsection{Contributions}
%% We study three low-rank completion problems and provide algorithms with theoretical performance guarantees for each. 
We develop algorithms with theoretical guarantees for three low-rank completion problems. 
The algorithms find a small subset of columns of the matrix (tensor) that can be used to reconstruct or approximate the matrix (tensor). 
We exploit adaptivity to focus on highly informative columns, and this enables us to do away with the usual incoherence assumptions on the row-space while achieving competitive (or in some cases better) sample complexity bounds.
Specifically our results are:
\begin{packed_enum}
\item{} In the absence of noise, we develop a streaming algorithm that enjoys both low sample requirements and computational overhead.
In the matrix case, we show that $\Omega(nr^{3/2} \log r)$ adaptively chosen samples are sufficient for exact recovery, improving on the best known bound of $\Omega(nr^2 \log^2 n)$ in the passive setting~\cite{recht2011simpler}.
This also gives the first guarantee for matrix completion with coherent row space. 
%% Moreover, the algorithm runs in $O(n^2r + r^5)$ time, which for small $r$ improves on the running time of exisiting approaches that compute singular value decompositions.
\item{} In the tensor case, we establish that $\Omega(n r^{T-1/2}T^2 \log r)$ adaptively chosen samples are sufficient for recovering a $n \times \ldots \times n$ order $T$ tensor of rank $r$.
We complement this with a necessary condition for tensor completion under random sampling, showing that our adaptive strategy is competitive with \emph{any} passive algorithm. 
These are the first sample complexity upper and lower bounds for exact tensor completion.
\item{} %% We also consider the noisy setting for matrix completion. 
In the noisy matrix completion setting, we modify the adaptive column subset selection algorithm of Deshpande \emph{et al.}~\cite{deshpande2006matrix} to give an algorithm that finds a rank-$r$ approximation to a matrix using $\Omega(nr^{3/2} \textrm{polylog}(n))$ samples. 
As before, the algorithm does not require an incoherent row space but we are no longer able to process the matrix sequentially.
%% While the algorithm is closely related to the sequential algorithm for exact matrix completion, 
\item{} Along the way, we improve on existing results for subspace detection from missing data, the problem of testing if a partially observed vector lies in a known subspace.
\end{packed_enum}

\section{Related Work}
The matrix completion problem has received considerable attention in recent years. 
A series of papers \cite{candes2009exact,candes2010power,gross2011recovering,recht2011simpler}, culminating in Recht's elegent analysis of the nuclear norm minimization program, address the exact matrix completion problem through the framework of convex optimization, establishing that $\Omega((n_1+n_2)r \max\{\mu_0, \mu_1^2\}\log^2(n_2))$ randomly drawn samples are sufficient to exactly identify an $n_1\times n_2$ matrix with rank $r$. 
Here $\mu_0$ and $\mu_1$ are parameters characterizing the \emph{incoherence} of the row and column spaces of the matrix, which we will define shortly. 
Candes and Tao \cite{candes2010power} proved that under random sampling $\Omega(n_1 r \mu_0\log(n_2))$ samples are necessary, showing that nuclear norm minimization is near-optimal.

The noisy matrix completion problem has also received considerable attention~\cite{candes2010matrix,keshavan2010matrix,negahban2012restricted}. 
The majority of these results also involve some parameter that quantifies how much information a single observation reveals, in the same vein as incoherence.

Tensor completion, a natural generalization of matrix completion, is less studied. 
One challenge stems from the NP-hardness of computing most tensor decompositions, pushing researchers to study alternative structure-inducing norms in lieu of the nuclear norm~\cite{gandy2011tensor,tomioka2010estimation}.
Both papers derive algorithms for tensor completion, but neither provide sample complexity bounds for the noiseless case.
%% Tomioka \emph{et al.}~\cite{tomioka2011statistical} do derive bounds for the noisy setting, but to our knowledge, our results are the first such guarantees for exact tensor recovery.

Our approach involves adaptive data acquisition, and consequently our work is closely related to a number of papers focusing on using adaptive measurements to estimate a sparse vector~\cite{davenport2012compressive,haupt2009compressive}.
In these problems, specifically, problems where the sparsity basis is known a priori, we have a reasonable understanding of how adaptive sampling can lead to performance improvements. 
As a low rank matrix is sparse in its unknown eigenbasis, the completion problem is coupled with learning this basis, which poses a new challenge for adaptive sampling procedures.

Another relevant line of work stems from the \emph{matrix approximations} literature. 
Broadly speaking, this research is concerned with efficiently computing a structured matrix, i.e. sparse or low rank, that serves as a good approximation to a fully observed input matrix.
Two methods that apply to the missing data setting are the Nystrom method~\cite{gittens2011spectral,kumar2012sampling} and entrywise subsampling~\cite{achlioptas2007fast}.
While the sample complexity bounds match ours, the analysis for the Nystrom method has focused on positive-semidefinite kernel matrices and requires incoherence of both the row and column spaces.
On the other hand, entrywise subsampling is applicable, but the guarantees are weaker than ours.

It is also worth briefly mentioning the vast body of literature on column subset selection, the task of approximating a matrix by projecting it onto a few of its columns. 
While the best algorithms, namely volume sampling~\cite{guruswami2012optimal} and sampling according to statistical leverages~\cite{boutsidis2009improved}, do not seem to be readily applicable to the missing data setting, some algorithms are. 
Indeed our procedure for noisy matrix completion is an adaptation of an existing column subset selection procedure~\cite{deshpande2006matrix}. 

Our techniques %% , which learn the column space of the matrix by sequentially processing the columns, 
are also closely related to ideas employed for subspace detection -- testing whether a vector lies in a known subspace -- and subspace tracking -- learning a time-evolving low-dimensional subspace from vectors lying close to that subspace. 
Balzano \emph{et al.} \cite{balzano2010high} prove guarantees for subspace detection with known subspace and a partially observed vector, and we will improve on their result en route to establishing our guarantees. 
Subspace tracking from partial information has also been studied~\cite{he2011online}, but little is known theoretically about this problem.

\section{Definitions and Preliminaries}
Before presenting our algorithms, we clarify some notation and definitions. 
Let $M \in \mathbb{R}^{n_1 \times n_2}$ be a rank $r$ matrix with singular value decomposition $U\Sigma V^T$. 
Let $c_1, \ldots c_{n_2}$ denote the columns of $M$.

Let $\mathbb{M} \in \mathbb{R}^{n_1 \times \ldots \times n_T}$ denote an order $T$ tensor with canonical decomposition: 
\begin{eqnarray}
\mathbb{M} = \sum_{k=1}^ra_k^{(1)} \otimes a_k^{(2)} \otimes \ldots \otimes a_k^{(T)}
\end{eqnarray}
where $\otimes$ is the outer product. 
Define $\textrm{rank}(\mathbb{M})$ to be the smallest value of $r$ that establishes this equality. 
Note that the vectors $\{a_k^{(t)}\}_{k=1}^r$ need not be orthogonal, nor even linearly independent.

The mode-$t$ subtensors of $\mathbb{M}$, denoted $\mathbb{M}_i^{(t)}$, are order $T-1$ tensors obtained by fixing the $i$th coordinate of the $t$-th mode. 
For example, if $\mathbb{M}$ is an order $3$ tensor, then $\mathbb{M}_{i}^{(3)}$ are the frontal slices.

We represent a $d$-dimensional subspace $U \subset \mathbb{R}^n$ as a set of orthonormal basis vectors $U = \{u_i\}_{i=1}^d$ and in some cases as $n \times d$ matrix whose columns are the basis vectors. 
The interpretation will be clear from context.
Define the \textbf{orthogonal projection} onto $U$ as $\mathcal{P}_Uv = U(U^TU)^{-1}U^Tv$.
%% \begin{eqnarray*}
%% \mathcal{P}_U v \triangleq \sum_{i=1}^d u_i \frac{\langle u_i, v\rangle}{\langle
%%   u_i, u_i\rangle^2}
%% \end{eqnarray*}
%% Where $\langle \cdot, \cdot \rangle$ is the usual notion of inner product. 

For a set $\Omega \subset [n]$\footnote{We write $[n]$ for $\{1, \ldots, n\}$}, $c_{\Omega} \in \mathbb{R}^{|\Omega|}$ is the vector whose elements are $c_i, i \in \Omega$ indexed lexicographically. 
Similarly the matrix $U_{\Omega} \in \mathbb{R}^{|\Omega| \times d}$ has rows indexed by $\Omega$ lexicographically. 
Note that if $U$ is a orthobasis for a subspace, $U_{\Omega}$ is a $|\Omega| \times d$ matrix with columns $u_{i \Omega}$ where $u_i \in U$, rather than a set of orthonormal basis vectors.
In particular, the matrix $U_{\Omega}$ need not have orthonormal columns. 

These definitions extend to the tensor setting with slight modifications. 
We use the $\texttt{vec}$ operation to unfold a tensor into a single vector and define the inner product $\langle x, y \rangle = \texttt{vec}(x)^T \texttt{vec}(y)$. 
For a subspace $U \subset \mathbb{R}^{\otimes n_i}$, we write it as a $\left(\prod n_i\right) \times d$ matrix whose columns are $\texttt{vec}(u_i)$, $u_i \in U$. 
We can then define projections and subsampling as we did in the vector case.

As in recent work on matrix completion \cite{candes2010power,recht2011simpler}, we will require a certain amount of incoherence between the column space associated with $M$ ($\mathbb{M}$) and the standard basis. 
\begin{definition}
The \textbf{coherence} of an $r$-dimensional subspace $U \subset \mathbb{R}^n$
is:
\begin{eqnarray}
\mu(U) \triangleq \frac{n}{r} \max_{1 \le j \le n} ||\mathcal{P}_U e_j||^2
\end{eqnarray}
where $e_j$ denotes the $j$th standard basis element.
\end{definition}
In previous analyses of matrix completion, the \emph{incoherence assumption} is that both the row and column spaces of the matrix have coherences upper bounded by $\mu_0$. 
When both spaces are incoherent, each entry of the matrix reveals roughly the same amount of information, so there is little to be gained from adaptive sampling, which typically involves looking for highly informative measurements.
Thus the power of adaptivity for these problems should center around relaxing the incoherence assumption, which is the direction we take in this paper.
Unfortunately, even under adaptive sampling, it is impossible to identify a rank one matrix that is zero in all but one entry without observing the entire matrix, implying that we cannot completely eliminate the assumption.
Instead, we will retain incoherence on the column space, but remove the restrictions on the row space.

%% Cand\'{e}s and Recht \cite{candes2009exact} demonstrated that some notion of incoherence is necessary to obtain non-trivial guarantees for matrix completion under random sampling. 
%% Even under active sampling, it is impossible to identify a rank one matrix that is zero in all but one entry without observing the entire matrix. 
%% Requiring that the matrix subspaces have low coherence precludes these degenerate cases.
%% Our results however will only assume a bound of this form on the column space. 

\section{Exact Completion Problems}
%% The main idea behind the noiseless algorithm is that we can test if a vector lives in a subspace from a random sample of coordinates of the vector.
%% In the matrix case, this intuition can be translated into a sequential algorithm that builds up the column space of the matrix by selecting a few columns to observe in their entirety.
In the matrix case, our sequential algorithm builds up the column space of the matrix by selecting a few columns to observe in their entirety.
In particular, we maintain a candidate column space $\tU$ and test whether a column $c_i$ lives in $\tU$ or not, choosing to completely observe $c_i$ and add it to $\tU$ if it does not. 
Balzano \emph{et al.}~\cite{balzano2010high} observed that we can perform this test with a subsampled version of $c_i$, meaning that we can recover the column space using few samples.
Once we know the column space, recovering the matrix, even from few observations, amounts to solving determined linear systems. 

%% The fact that we can perform this test from a subsampled version of $c_i$, which was first observed by Balzano \emph{et al.}~\cite{balzano2010high}, means that we can recover the column space using very few samples.

For tensors, the algorithm becomes recursive in nature.
At the outer level of the recursion, the algorithm maintains a candidate subspace $\mathcal{U}$ for the mode $T$ subtensors $\mathbb{M}_{i}^{(T)}$.
For each of these subtensors, we test whether $\mathbb{M}_{i}^{(T)}$ lives in $\mathcal{U}$ and recursively complete that subtensor if it does not.
Once we complete the subtensor, we add it to $\mathcal{U}$ and proceed at the outer level.
When the subtensor itself is just a column; we observe the columns in its entirety.

%% In some more detail, for each mode-$T$ subtensor $\mathbb{M}_{i}^{(T)}$, we randomly sample a set $\Omega$ of $m$ coordinates of the subtensor and compute the projection of $\mathbb{M}_{i \Omega}^{(T)}$ onto the orthogonal complement of $\tilde{\mathbb{S}}_{\Omega}$.
%% This projection serves to test whether $\mathbb{M}_{i}^{(T)} \in \tilde{\mathbb{S}}$ or not.
%% Specifically, if $||(I - \mathcal{P}_{\tilde{\mathbb{S}}_{\Omega}})\mathbb{M}_{i \Omega}^{(T)}||_2^2 = 0$ then we can be confident that $\mathbb{M}_{i}^{(T)} \in \tilde{\mathbb{S}}$ and the projection itself serves as a good estimate of the subtensor.
%% Otherwise, we can recursively complete $\mathbb{M}_{i}^{(T)}$, orthogonalize with $\tilde{\mathbb{S}}$ via the Gram-Schmidt process to form an updated candidate subspace.

\begin{algorithm}[t]
\caption{Sequential Tensor Completion $(\mathbb{M}, \{m_t\}_{t=1}^T)$}
\begin{packed_enum}
\item{} Let $\mathcal{U} = \emptyset$.
\item{} Randomly draw entries $\Omega \subset \prod_{t=1}^{T-1} [n_t]$ uniformly with replacement w. p. $m_T/\prod_{t=1}^{T-1}n_t$. 
\item{} For each mode-$T$ subtensor $\mathbb{M}_i^{(T)}$ of $\mathbb{M}$, $i \in
  [n_T]$:
\begin{packed_enum}
\item{} If $||\mathbb{M}_{i\Omega}^{(T)} - \mathcal{P}_{\mathcal{U}_{\Omega}}\mathbb{M}_{i\Omega}^{(t)}||_2^2 > 0$:
\begin{packed_enum}
\item{} $\hat{\mathbb{M}}_i^{(T)} \gets $ recurse on ($\mathbb{M}_i^{(T)}$, $\{m_t\}_{t=1}^{T-1}$)
\item{} $\mathbb{U}_i \gets \frac{\mathcal{P}_{\mathcal{U}^\perp} \hat{\mathbb{M}}_i^{(T)}}{||\mathcal{P}_{\mathcal{U}^\perp} \hat{\mathbb{M}}_i^{(T)}||}$. $\mathcal{U} \gets \mathcal{U} \cup \mathbb{U}_i$.
%% \item{} Resample $\Omega \subset \prod_{t=1}^{T-1} [n_t]$ with $|\Omega| = m_T$.
\end{packed_enum}
\item{} Otherwise $\hat{\mathbb{M}}_i^{(T)} \gets \mathcal{U}(\mathcal{U}_{\Omega}^T
  \mathcal{U}_{\Omega})^{-1} \mathcal{U}_{\Omega} \mathbb{M}_{i\Omega}^{(T)}$
\end{packed_enum}
\item{} Return $\hat{\mathbb{M}}$ with mode-$T$ subtensors $\hat{\mathbb{M}_i}^{(T)}$.
\end{packed_enum}
\label{alg:tc}
\end{algorithm}

The pseudocode of the algorithm is given in Algorithm~\ref{alg:tc}.
Our first main result characterizes the performance of the tensor completion algorithm.
We defer the proof to the appendix.
\begin{theorem}
Let $\mathbb{M} = \sum_{i=1}^r \otimes_{t=1}^Ta_j^{(t)}$ be a rank $r$ order-$T$ tensor with subspaces $A^{(t)} = \textrm{span}(\{a_j^{(t)}\}_{j=1}^r)$.
Suppose that all of $A^{(1)}, \ldots  A^{(T-1)}$ have coherence bounded above by $\mu_0$.
Set $m_t = 36 r^{t-1/2}\mu_0^{t-1} \log(2r/\delta)$ for each $t$.
Then with probability $\ge 1-5\delta T r^T$, Algorithm~\ref{alg:tc} exactly recovers $\mathbb{M}$ and has expected sample complexity
\begin{eqnarray}
36(\sum_{t=1}^T n_t)r^{T-1/2}\mu_0^{T-1}\log(2r/\delta)
\end{eqnarray}
\label{thm:tc}
\end{theorem}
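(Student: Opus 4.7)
The plan is to induct on the tensor order $T$, with the case $T=1$ being trivial since Algorithm~\ref{alg:tc} fully observes every mode-$1$ subtensor (column). For the inductive step, fix the outer level and let $S_T \subseteq \mathbb{R}^{\prod_{t<T} n_t}$ be the tensor-product subspace $A^{(1)}\otimes\cdots\otimes A^{(T-1)}$. A short calculation using $\|\mathcal{P}_{S_T} e_{j_1,\ldots,j_{T-1}}\|^2 = \prod_{t<T} \|\mathcal{P}_{A^{(t)}} e_{j_t}\|^2$ shows that $\dim S_T \le r^{T-1}$ and $\mu(S_T) \le \mu_0^{T-1}$. Every mode-$T$ subtensor $\mathbb{M}_i^{(T)}$ lies in $S_T$, and the algorithm's running candidate $\mathcal{U}$ is spanned by (completed) such subtensors, so $\mathcal{U} \subseteq S_T$ inherits the same coherence bound. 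This coherence inheritance is the structural fact that powers the subsequent concentration arguments.

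Conditioned on this coherence control, I would verify two high-probability events at the outer level. First, a matrix Bernstein bound applied to the rank-one summands $e_\omega e_\omega^T$ shows that the Gram matrix $\mathcal{U}_\Omega^T \mathcal{U}_\Omega$ is close to $(m_T/N) I$ and in particular invertible, where $N = \prod_{t<T} n_t$; this guarantees that whenever $\mathbb{M}_i^{(T)} \in \textrm{span}(\mathcal{U})$, the least-squares formula in Step 3(b) returns $\mathbb{M}_i^{(T)}$ exactly. Second, the strengthened subspace-detection lemma (improving on Balzano \emph{et al.}~\cite{balzano2010high}) ensures that whenever $\mathbb{M}_i^{(T)}\notin \textrm{span}(\mathcal{U})$, the residual $\|\mathbb{M}_{i\Omega}^{(T)} - \mathcal{P}_{\mathcal{U}_\Omega}\mathbb{M}_{i\Omega}^{(T)}\|_2^2$ concentrates near $(m_T/N)\|\mathcal{P}_{\mathcal{U}^\perp}\mathbb{M}_i^{(T)}\|^2$, which is strictly positive. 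The arithmetic to check is that $m_T = 36 r^{T-1/2}\mu_0^{T-1}\log(2r/\delta)$ dominates the lemma's threshold $C\,\mu(\mathcal{U})\,\dim(\mathcal{U})\,\log(\dim(\mathcal{U})/\delta)$, using $\dim(\mathcal{U})\le r$ and $\mu(\mathcal{U})\le \mu_0^{T-1}$; the extra $\sqrt{r}$ factor is the slack produced by the detection lemma. Under both events the subspace test triggers \emph{exactly} when recursion is required, and the recursive call on $\mathbb{M}_i^{(T)}$ (an order-$(T-1)$, rank-$\le r$ tensor whose mode-$t$ factor subspaces still sit inside $A^{(t)}$ and thus have coherence at most $\mu_0$) succeeds by the inductive hypothesis.

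For sample complexity, the critical observation is that the mode-$T$ subtensors $\{\mathbb{M}_i^{(T)}\}_{i=1}^{n_T}$ span an $r$-dimensional subspace of $S_T$, so at most $r$ of them can be linearly independent and therefore at most $r$ recursive calls are issued per invocation. Unrolling, level $t$ is invoked at most $r^{T-t}$ times, each invocation draws in expectation $m_t$ samples per mode-$t$ subtensor across $n_t$ such subtensors, for a contribution of $r^{T-t}\, n_t\, m_t = 36\, n_t\, r^{T-1/2}\,\mu_0^{t-1}\,\log(2r/\delta)$; the fully-observed vectors at the base case cost at most $r^{T-1} n_1$, which is absorbed by the $t=1$ term. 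Summing over $t$ and bounding $\mu_0^{t-1}\le \mu_0^{T-1}$ yields the stated expectation $36(\sum_t n_t)r^{T-1/2}\mu_0^{T-1}\log(2r/\delta)$. The probability bound follows from a union bound: at level $t$ there are $O(r^{T-t+1})$ failure events (at most $r$ ``new'' basis additions per invocation, times $r^{T-t}$ invocations, each contributing $O(1)$ events for Gram-matrix invertibility and subspace detection), summing to $O(Tr^T)$ events across $T$ levels, each controlled at level $\delta$.

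The main obstacle is proving the improved subspace-detection lemma that buys the $\sqrt{r}$ savings over Balzano \emph{et al.} This reduces to a concentration statement for the quadratic form $v_\Omega^T(I - \mathcal{P}_{U_\Omega})v_\Omega$ around its mean $(m/N)\|\mathcal{P}_{U^\perp}v\|^2$, which is delicate because $\mathcal{P}_{U_\Omega}$ itself depends on $\Omega$. The cleanest route is to first pin down $\mathcal{U}_\Omega^T\mathcal{U}_\Omega$ via matrix Bernstein (leveraging $\mu(\mathcal{U})\le \mu_0^{T-1}$), then apply a scalar Bernstein to the cross-term $\mathcal{U}_\Omega^T v_\Omega$, and finally combine the two estimates to lower-bound the residual. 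All remaining steps are routine once this lemma is in hand.
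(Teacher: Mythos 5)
Your overall architecture --- induction on the order, the subspace-detection test at the outer level, at most $r$ recursive calls per invocation, and the two recurrences for failure probability and expected samples --- is the same as the paper's proof, and your unrolled sample count and union bound reproduce the stated quantities. The genuine gap is in the coherence bookkeeping. Coherence is \emph{not} inherited by subspaces: if $W \subset S$ one can only conclude $\mu(W) \le \frac{\dim(S)}{\dim(W)}\,\mu(S)$ (this dimension-ratio penalty is exactly part (a) of Lemma~\ref{lem:incoherence_properties}; e.g.\ $\textrm{span}(e_1)\subset\mathbb{R}^n$ has coherence $n$ even though $\mathbb{R}^n$ has coherence $1$). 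So from $\mathcal{U}\subseteq S_T=A^{(1)}\otimes\cdots\otimes A^{(T-1)}$ with $\mu(S_T)\le\mu_0^{T-1}$ and $\dim(S_T)\le r^{T-1}$ you cannot conclude $\mu(\mathcal{U})\le\mu_0^{T-1}$; the correct route is that $\mathcal{U}$ sits inside the span of the mode-$T$ subtensors, which has dimension at most $r$ and coherence at most $r^{T-2}\mu_0^{T-1}$. Moreover, your threshold check never controls the coherence $\mu(v)$ of the residual $v=\mathcal{P}_{\mathcal{U}^\perp}\mathbb{M}_i^{(T)}$, which enters Theorem~\ref{thm:laura_new} through $\alpha$ and $\beta$ and can only be bounded by $\mu(v)\le r\cdot r^{T-2}\mu_0^{T-1}=r^{T-1}\mu_0^{T-1}$.

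This is not cosmetic, because with the correct bounds the binding constraint in the detection test is the cross term in $\beta$: one needs $d\mu(\mathcal{U})\,\beta\lesssim m$ with $\beta = 6\log(d/\delta)+\tfrac{4}{3}\tfrac{d\mu(v)}{m}\log^2(d/\delta)$, whose second piece forces $m \gtrsim \sqrt{d\mu(\mathcal{U})\cdot d\mu(v)}\,\log(r/\delta) \approx \sqrt{r^{T-1}\mu_0^{T-1}\cdot r^{T}\mu_0^{T-1}}\,\log(r/\delta)= r^{T-1/2}\mu_0^{T-1}\log(r/\delta)$. In other words, the $\sqrt{r}$ is not ``slack produced by the detection lemma'' as you assert; it is exactly what the choice $m_t=36\,r^{t-1/2}\mu_0^{t-1}\log(2r/\delta)$ is there to satisfy, and your arithmetic with $\mu(\mathcal{U})\le\mu_0^{T-1}$ and no bound on $\mu(v)$ would (incorrectly) suggest $m\sim r\mu_0^{T-1}\log(r/\delta)$ suffices, which the stated theorem does not claim and your argument does not establish. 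Once you replace the false inheritance claim with the two bounds above (via Lemma~\ref{lem:incoherence_properties}) and feed them into $\alpha$, $\beta$, $\gamma$ exactly as in the matrix-case Lemma~\ref{lem:our_lemma}, the remainder of your proposal --- the Gram-matrix invertibility (your matrix-Bernstein route is an acceptable substitute for the cited lemma of Balzano \emph{et al.}), the at-most-$r$ recursion bound, and the recurrences $\tau_t=(4r+1)\delta+r\tau_{t-1}$ and $m_t = r\,m_{t-1}+36\,n_t r^{t-1/2}\mu_0^{t-1}\log(2r/\delta)$ --- goes through and yields the theorem.
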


In the special case of a $n\times \ldots \times n$ tensor of order $T$, the algorithm succeeds with high probability using $\Omega(nr^{T-1/2}\mu_0^{T-1}T^2 \log(Tr/\delta))$ samples, exhibiting a linear dependence on the tensor dimensions. 
%% Let us compare Theorem~\ref{thm:tc} to related results.
In comparison, the only guarantee we are aware of shows that $\Omega\left(\left(\prod_{t=2}^{T_1}n_t\right) r\right)$ samples are sufficient for consistent estimation of a noisy tensor, exhibiting a much worse dependence on tensor dimension~\cite{tomioka2011statistical}.
%% This result has poor dependence on the tensor dimensions in comparison with our algorithm.
In the noiseless scenario, one can unfold the tensor into a $n_1 \times \prod_{t=2}^{T} n_t$ matrix and apply any matrix completion algorithm.
Unfortunately, without exploiting the additional tensor structure, this approach will scale with $\prod_{t=2}^Tn_t$, which is similarly much worse than our guarantee.
Note that the na\"{i}ve procedure that does not perform the recursive step has sample complexity scaling with the product of the dimensions and is therefore much worse than the our algorithm.

%% To our knowledge, the only known theoretical analysis of tensor completion is by Tomioka \emph{et al.} \cite{tomioka2011statistical}, who study the noisy version of the problem.
%% They study an optimization involving the nuclear norms of the $T$ different unfoldings of the tensor, but their results imply consistency in Frobenius norm only $\Omega\left(r\prod_{t=2}^Tn_t\right)$ samples.
%% Again this sample complexity has significantly worse dependence on the tensor dimensions than our algorithm.

The most obvious specialization of Theorem~\ref{thm:tc} is to the matrix completion problem:
\begin{corollary}
Let $M := U\Sigma V^T \in \mathbb{R}^{n_1\times n_2}$ have rank $r$, and fix $\delta > 0$.
Assume $\mu(U) \le \mu_0$.
Setting $m \triangleq m_2 \ge 36 r^{3/2} \mu_0 \log(\frac{2r}{\delta})$, the sequential algorithm exactly recovers $M$ with probability at least $1 - 4r\delta+\delta$ while using in expectation
\begin{eqnarray}
36n_2r^{3/2} \mu_0\log(2r/\delta) + rn_1
\end{eqnarray}
observations.
The algorithm runs in $O(n_1n_2r + r^3m)$ time.
\label{cor:mc}
\end{corollary}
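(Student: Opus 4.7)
The plan is to specialize Theorem~\ref{thm:tc} to $T = 2$ while tightening the probability constant and spelling out a runtime analysis that the general theorem omits. Plugging $T = 2$ into the general sample-complexity expression gives exactly the two summands in the corollary: the outer loop over the $n_2$ columns contributes $n_2 m_2 = 36 n_2 r^{3/2}\mu_0\log(2r/\delta)$ samples, and the at most $r$ recursive calls, which at the base case $T=1$ simply observe a column in its entirety, contribute $r n_1$ samples in expectation. So the work reduces to verifying correctness and producing the stated probability and runtime bounds.

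For correctness I would argue in two cases, using throughout that $\textrm{span}(\tilde U) \subseteq \textrm{col}(M)$ by construction and hence $\mu(\tilde U) \le \mu_0$. When $c_i \in \textrm{span}(\tilde U)$, writing $c_i = \tilde U\alpha$ yields $c_{i\Omega} = \tilde U_\Omega \alpha \in \textrm{range}(\tilde U_\Omega)$, so the residual is deterministically zero and no false positive can occur; the pseudoinverse reconstruction then returns $c_i$ exactly, provided $\tilde U_\Omega$ has full column rank. When $c_i \notin \textrm{span}(\tilde U)$, the residual must be strictly positive with high probability, and this is where I would invoke the improved subspace-detection lemma alluded to in the contributions list: with $m \ge 36 r^{3/2}\mu_0\log(2r/\delta)$ samples, the sampled residual is positive with probability at least $1 - 2\delta$. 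Because at most $r$ columns are genuinely new, a union bound over this branch contributes $2r\delta$ to the failure probability. The full column-rank requirement used in the first branch follows from a matrix Bernstein/Chernoff bound on $\tilde U_\Omega^T\tilde U_\Omega$ under the coherence bound $\mu(\tilde U)\le \mu_0$; this holds at each of the at most $r$ distinct values of $\tilde U$ with failure probability $\le 2\delta$, contributing another $2r\delta$, and a final $\delta$ absorbs the sampling-set concentration, yielding the stated $4r\delta+\delta$.

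The principal obstacle is the subspace-detection lemma underlying the second branch: obtaining the $r^{3/2}$ scaling, rather than the weaker $r^2$ one gets from a direct application of the Balzano \emph{et al.} bound, requires balancing the condition number of $\tilde U_\Omega$ against the energy of $\mathcal P_{\tilde U^\perp} c_i$ on the subsampled coordinates. Everything else reduces to a deterministic linear-algebra check or a standard concentration inequality. The runtime analysis is then largely bookkeeping: maintaining and updating a Cholesky factor of $\tilde U_\Omega^T \tilde U_\Omega$ costs $O(r^3 m)$ amortized over the at most $r$ updates to $\tilde U$, each residual computation uses $O(m r)$ operations, and each reconstruction of a full column uses $O(n_1 r)$ operations; summing over the $n_2$ columns gives $O(n_1 n_2 r + r^3 m)$ as claimed.
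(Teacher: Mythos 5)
Your proposal follows essentially the same route as the paper: build the column space sequentially, use the improved subspace-detection bound (Theorem~\ref{thm:laura_new}) to show each out-of-span column triggers a strictly positive subsampled residual, union bound over the at most $r$ informative columns, handle invertibility of $\tU_{\Omega}^T\tU_{\Omega}$ for exact reconstruction of in-span columns, and do the same runtime bookkeeping, with only cosmetic differences in how the $4r\delta+\delta$ failure probability is apportioned. The one step you flag as the principal obstacle is exactly the paper's short lemma: plug into Theorem~\ref{thm:laura_new}, use $\mu(v)\le r\mu(U)\le r\mu_0$ (since $v=\mathcal{P}_{\tU^\perp}c_i\in\mathrm{span}(U)$) and $d\mu(\tU)\le r\mu_0$ to bound $\alpha,\gamma$ by constants and reduce positivity to $3r\mu_0\beta\le m$, which yields the $m\gtrsim r^{3/2}\mu_0\log(r/\delta)$ threshold.
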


A few comments are in order.
Recht \cite{recht2011simpler} guaranteed exact recovery for the nuclear norm minimization procedure as long as the number of observations exceeds $32(n_1+n_2) r \max\{\mu_0, \mu_1^2\}\beta \log^2(2n_2)$ where $\beta$ controls the probability of failure and $||UV^T||_{\infty} \le \mu_1 \sqrt{r/(n_1n_2)}$ with $\mu_1$ as another coherence parameter.
Without additional assumptions, $\mu_1$ can be as large as $\mu_0\sqrt{r}$.
In this case, our bound improves on his in its the dependence on $r, \mu_0$ and logarithmic terms.
%% matches his exactly except in logarithmic terms and constants, where our bound brings about substantial improvement.

The Nystrom method can also be applied to the matrix completion problem, albeit under non-uniform sampling. 
Given a PSD matrix, one uses a randomly sampled set of columns and the corresponding rows to approximate the remaining entries.
Gittens showed that if one samples $O(r \log r)$ columns, then one can exactly reconstruct a rank $r$ matrix~\cite{gittens2011spectral}.
This result requires incoherence of both row and column spaces, so it is more restrictive than ours.
Almost all previous results for exact matrix completion require incoherence of both row and column spaces.

The one exception is a recent paper by Chen \emph{et al.} that we became aware of while preparing the final version of this work~\cite{chen2013coherent}.
They show that sampling the matrix according to statistical leverages of the rows and columns can eliminate the need for incoherence assumptions.
Specifically, when the matrix has incoherent column space, they show that by first estimating the leverages of the columns, sampling the matrix according to this distribution, and then solving the nuclear norm minimization program, one can recover the matrix with $\Omega(n r \mu_0 \log^2 n)$ samples. 
Our result improves on theirs when $r$ is small compared to $n$, specifically when $\sqrt{r}\log r \le \log^2n$, which is common.

Our algorithm is also very computationally efficient. 
Existing algorithms involve successive singular value decompositions ($O(n_1n_2r)$ per iteration), resulting in much worse running times.
%% In Section~\ref{sec:simulations}, we empirically empirically verify this advantage.

%% It is also worth emphasizing that we make no assumptions other than the incoherence of the column space $U$.
%% In particular, the row space $V$ can be highly coherentwithout affecting the performance of our algorithm.
%% All previous results require that $V$ is also incoherent.

The key ingredient in our proofs is a result pertaining to subspace detection, the task of testing if a subsampled vector lies in a subspace.
This result, which improves over the results of Balzano \emph{et al.}~\cite{balzano2010high}, is crucial in obtaining our sample complexity bounds, and may be of independent interest.
\begin{theorem}
Let $U$ be a $d$-dimensional subspace of $\mathbb{R}^n$ and $y = x+v$ where $x \in U$ and $v \in U^\perp$.
Fix $\delta > 0$, $m \ge \frac{8}{3} d \mu(U) \log\left(\frac{2d}{\delta}\right)$ and let $\Omega$ be an index set with entries sampled uniformly with replacement with probability $m/n$. 
Then with probability at least $1-4\delta$:
\begin{eqnarray}
\frac{m(1-\alpha) - d \mu(U) \frac{\beta}{(1-\gamma)}}{n} ||v||_2^2 \le ||y_{\Omega} - \mathcal{P}_{U_{\Omega}} y_{\Omega}||_2^2 \le (1+\alpha)\frac{m}{n}||v||_2^2
\label{eq:detection_lwr_bd_new}
\end{eqnarray}
Where $\alpha = \sqrt{2 \frac{\mu(v)}{m} \log(1/\delta)} + 2 \frac{\mu(v)}{3m}\log(1/\delta)$, $\beta= 6 \log(d/\delta) + \frac{4}{3}\frac{d\mu(v)}{m}\log^2(d/\delta)$, $\gamma = \sqrt{\frac{8d\mu(U)}{3m}\log(2d/\delta)}$ and $\mu(v) = n ||v||_{\infty}^2/||v||_2^2$.
\label{thm:laura_new}
\end{theorem}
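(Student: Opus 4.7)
The plan is to use the orthogonal decomposition $y = x + v$ with $x \in U$ and $v \in U^\perp$. Writing $x = Uc$ for some $c \in \mathbb{R}^d$ gives $x_\Omega = U_\Omega c$, so on the event that $U_\Omega$ has full column rank (which we will verify as part of the argument), $x_\Omega$ lies in the range of $U_\Omega$ and is fixed by $\mathcal{P}_{U_\Omega}$. Consequently $\|y_\Omega - \mathcal{P}_{U_\Omega} y_\Omega\|_2^2 = \|v_\Omega - \mathcal{P}_{U_\Omega} v_\Omega\|_2^2 = \|v_\Omega\|_2^2 - \|\mathcal{P}_{U_\Omega} v_\Omega\|_2^2$. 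The upper bound will follow immediately from this equality (projections are contractions) combined with a concentration bound on $\|v_\Omega\|_2^2$, while the lower bound requires additionally upper-bounding the ``leakage'' $\|\mathcal{P}_{U_\Omega} v_\Omega\|_2^2$. Modeling $\Omega$ via independent Bernoulli$(m/n)$ indicators $\xi_i$, the whole argument reduces to three concentration inequalities, each holding with high probability on its own event, which we will union-bound at the end.

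The first inequality concentrates $\|v_\Omega\|_2^2 = \sum_i \xi_i v_i^2$, a sum of independent bounded summands with mean $(m/n)\|v\|_2^2$, per-term bound $\|v\|_\infty^2 = \mu(v)\|v\|_2^2/n$, and variance on the order of $(m/n)\|v\|_\infty^2 \|v\|_2^2$. A scalar Bernstein inequality then yields two-sided control $\|v_\Omega\|_2^2 \in [(1-\alpha),(1+\alpha)] \cdot (m/n)\|v\|_2^2$ with $\alpha$ of the stated form. This handles the upper bound and supplies the $m(1-\alpha)/n$ term in the lower bound.

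The second inequality controls the conditioning of $U_\Omega$ via a matrix Chernoff bound applied to $U_\Omega^T U_\Omega = \sum_i \xi_i u_i u_i^T$, where $u_i \in \mathbb{R}^d$ is the $i$-th row of $U$. The summands are independent, rank-one, mean $(m/n)I_d$, and by the definition of coherence satisfy $\|u_i u_i^T\| = \|u_i\|_2^2 \leq d\mu(U)/n$. A Tropp-style matrix Chernoff gives $\lambda_{\min}(U_\Omega^T U_\Omega) \geq (1-\gamma)m/n$ with $\gamma$ as stated, precisely under the sample-size hypothesis $m \geq \tfrac{8}{3}d\mu(U)\log(2d/\delta)$. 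This simultaneously certifies that $U_\Omega$ has rank $d$ (validating the opening decomposition) and gives $\|(U_\Omega^T U_\Omega)^{-1}\| \leq n/[(1-\gamma)m]$.

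The third and most delicate inequality bounds $\|U_\Omega^T v_\Omega\|_2^2$. Because $v \perp U$, the vector $U_\Omega^T v_\Omega = \sum_i \xi_i u_i v_i$ has mean $(m/n)U^T v = 0$, so after centering we can apply a matrix Bernstein inequality (on the Hermitian dilation) with variance proxy of order $md\mu(U)\|v\|_2^2/n^2$ and per-term operator norm bounded by $\|u_i\|_2 |v_i| \lesssim \sqrt{d\mu(U)\mu(v)}\,\|v\|_2/n$. The resulting tail bound splits into a sub-Gaussian piece and a heavy-tail piece whose squared sum, once multiplied by $n/[(1-\gamma)m]$ via $\|\mathcal{P}_{U_\Omega} v_\Omega\|_2^2 \leq \lambda_{\min}(U_\Omega^T U_\Omega)^{-1}\|U_\Omega^T v_\Omega\|_2^2$, matches $d\mu(U)\beta\|v\|_2^2/[n(1-\gamma)]$. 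A union bound over the four concentration events yields the theorem. The principal technical difficulty lies in this last step: carefully tracking the coherence factors $\mu(U)$ and $\mu(v)$ through the variance and uniform-bound terms of matrix Bernstein, and thereby obtaining the precise form of $\beta$, is where the improvement over Balzano et al.\ emerges.
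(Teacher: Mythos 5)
Your proposal is correct and its skeleton coincides with the paper's: the reduction $\|y_{\Omega}-\mathcal{P}_{U_{\Omega}}y_{\Omega}\|_2^2=\|v_{\Omega}\|_2^2-v_{\Omega}^TU_{\Omega}(U_{\Omega}^TU_{\Omega})^{-1}U_{\Omega}^Tv_{\Omega}$, a scalar Bernstein bound on $\|v_{\Omega}\|_2^2$ (producing $\alpha$), a bound $\|(U_{\Omega}^TU_{\Omega})^{-1}\|_2\le n/((1-\gamma)m)$ (producing $\gamma$), and a bound on $\|U_{\Omega}^Tv_{\Omega}\|_2^2$ (producing $\beta$), combined exactly as you indicate. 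The substantive divergence is in the third step: the paper applies scalar Bernstein separately to each coordinate $\sum_{i\in\Omega}u_{ji}v_i$, $j=1,\dots,d$, union-bounds over the $d$ coordinates, and sums the squares -- that is precisely where the form $\beta=6\log(d/\delta)+\frac{4}{3}\frac{d\mu(v)}{m}\log^2(d/\delta)$ comes from -- whereas you invoke a matrix Bernstein inequality on the Hermitian dilation of the centered vector sum. Your route works and is, if anything, slightly sharper: with per-term norm $\sqrt{d\mu(U)\mu(v)}\,\|v\|_2/n$ and variance proxy $\frac{m}{n}\frac{d\mu(U)}{n}\|v\|_2^2$, the heavy-tail contribution comes out of order $\frac{\mu(v)}{m}\log^2(d/\delta)$ rather than $\frac{d\mu(v)}{m}\log^2(d/\delta)$, so the stated $\beta$ is recovered with room to spare (your constants differ from the paper's, so you would need to track them if you want the theorem verbatim rather than up to constants). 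Likewise, where you re-derive the conditioning of $U_{\Omega}^TU_{\Omega}$ by matrix Chernoff under the coherence bound $\|u_iu_i^T\|\le d\mu(U)/n$, the paper simply cites the corresponding lemma of Balzano \emph{et al.}; the content is the same. Two small points to reconcile: the paper's lemmas use $m$ indices drawn i.i.d.\ uniformly with replacement (so $\|v_{\Omega}\|_2^2=\sum_{i=1}^m v_{\Omega(i)}^2$ with mean $\frac{m}{n}\|v\|_2^2$), not independent Bernoulli$(m/n)$ indicators -- the Bernstein computations are identical in form in either model, but you should fix one model consistently; and $x_{\Omega}=U_{\Omega}c$ lies in the column span of $U_{\Omega}$ unconditionally, so the reduction to $v_{\Omega}$ does not require the full-rank event (that event is only needed to write $\mathcal{P}_{U_{\Omega}}$ via $(U_{\Omega}^TU_{\Omega})^{-1}$ and to supply the $1/(1-\gamma)$ factor).
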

This theorem shows that if $m = \Omega(\max\{\mu(v), d \mu(U), d \sqrt{\mu(U)\mu(v)}\}\log d)$ then the orthogonal projection from missing data is within a constant factor of the fully observed one.
In contrast, Balzano \emph{et al.}~\cite{balzano2010high} give a similar result that requires $m = \Omega(\max\{\mu(v)^2, d \mu(U), d \mu(U) \mu(v)\} \log d)$ to get a constant factor approximation. 
In the matrix case, this improved dependence on incoherence parameters brings our sample complexity down from $nr^2 \mu_0^2\log r$ to $nr^{3/2} \mu_0 \log r$. 
We conjecture that this theorem can be further improved to eliminate another $\sqrt{r}$ factor from our final bound.

\subsection{Lower Bounds for Uniform Sampling}

We adapt the proof strategy of Candes and Tao~\cite{candes2010power} to the tensor completion problem and establish the following lower bound for uniform sampling:

\begin{theorem}[Passive Lower Bound]
Fix $1 \le m, r \le \min_{t}n_t$ and $\mu_0 > 1$. Fix $0 < \delta < 1/2$ and suppose that
we do not have the condition:
\begin{eqnarray}
-\log\left(1 - \frac{m}{\prod_{i=1}^Tn_i}\right) \ge
\frac{\mu_0^{T-1}r^{T-1}}{\prod_{i=2}^Tn_i}\log\left(\frac{n_1}{2\delta}\right)
\label{eq:lower_bound}
\end{eqnarray}
Then there exist infinitely many pairs of distinct $n_1 \times \ldots \times n_T$ order-$T$ tensors $\mathbb{M} \ne \mathbb{M}'$ of rank $r$ with coherence parameter $ \le \mu_0$ such that $\mathcal{P}_{\Omega}(\mathbb{M}) = \mathcal{P}_{\Omega}(\mathbb{M}')$ with probability at least $\delta$.
Each entry is observed independently with probability $T = \frac{m}{\prod_{i=1}^T n_i}$.
\label{thm:lower_bound}
\end{theorem}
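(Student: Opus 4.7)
The plan is to adapt the matrix completion lower bound of Candes and Tao \cite{candes2010power} to the tensor case by unfolding the tensor and carefully tracking coherences through the tensor-product structure.

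I would first construct an adversarial family of rank-$r$ tensors with coherence bound $\mu_0$ on every mode. For each $t = 2, \ldots, T$, fix an $r$-dimensional subspace $A^{(t)} \subset \mathbb{R}^{n_t}$ of coherence $\mu_0$ realized by an orthonormal basis $\{a_k^{(t)}\}_{k=1}^r$ whose supports $J_k^{(t)} \subset [n_t]$ are \emph{disjoint} across $k$, each of size $n_t/(\mu_0 r)$ with constant nonzero entries. Let the mode-1 factors $a_k^{(1)}$ range freely within any fixed $r$-dimensional subspace of coherence at most $\mu_0$. Each resulting $\mathbb{M} = \sum_k \bigotimes_t a_k^{(t)}$ has rank $r$ and coherence at most $\mu_0$ on every mode. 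Unfolding $\mathbb{M}$ along mode 1 produces a matrix $\tilde M \in \mathbb{R}^{n_1 \times N}$, $N = \prod_{t \ge 2} n_t$, whose row space $\tilde V$ is spanned by the rank-1 vectors $b_k = \texttt{vec}(\bigotimes_{t \ge 2} a_k^{(t)})$. Because the supports $\prod_t J_k^{(t)}$ are disjoint across $k$, only one basis vector contributes at each point of $\mathrm{supp}(\tilde V)$, so
\[
\max_j \|\mathcal{P}_{\tilde V} e_j\|^2 \le \prod_{t \ge 2} \frac{\mu_0 r}{n_t} = \frac{(\mu_0 r)^{T-1}}{N},
\]
giving $\mu(\tilde V) \le r^{T-2}\mu_0^{T-1}$.

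Next I would invoke the Candes-Tao coupon-collector argument applied to $\tilde M$: when $p = m/\prod_t n_t$ is too small, with probability at least $\delta$ there is a row $i^* \in [n_1]$ for which fewer than $r$ of the observed entries in row $i^*$ fall inside $\mathrm{supp}(\tilde V)$, so the $r$ coefficients $(a_k^{(1)})_{i^*}$ parameterizing the $i^*$-th mode-1 slice admit a nonzero perturbation leaving all observations unchanged. Crucially, such a perturbation modifies only the $i^*$-th row of the mode-1 factor matrix and preserves the factors in modes $2,\ldots,T$, so the perturbed object remains a rank-$r$ tensor, and scaling the perturbation small keeps the coherence of the mode-1 subspace below $\mu_0$. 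This produces infinitely many distinct rank-$r$ tensors agreeing on $\Omega$. Substituting $\mu(\tilde V) = r^{T-2}\mu_0^{T-1}$ into the matrix sample-complexity bound $m \gtrsim n_1 r\, \mu(\tilde V)\log(n_1/\delta)$ produces $m \gtrsim n_1 r^{T-1}\mu_0^{T-1}\log(n_1/\delta)$, which rearranges, using $p \le -\log(1-p)$, into the theorem's condition \eqref{eq:lower_bound}.

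The main obstacles are twofold. First, the coherence calculation for $\tilde V$ must exploit the disjoint-support construction, since a generic orthonormal basis would give only the weaker estimate $\mu(\tilde V) \le r^{T-1}\mu_0^{T-1}$ and lose a factor of $r$. Second, and more delicate, is extracting the full factor of $r^{T-1}$ from the coupon-collector step: the naive bound $P(\mathrm{Bin}(N',p) < r) \ge P(\mathrm{Bin}(N',p) = 0) = (1-p)^{N'}$ only delivers $r^{T-2}$, so one must use the sharper estimate $P(\mathrm{Bin}(N',p) < r) \ge P(\mathrm{Bin}(N',p) = r-1) \gtrsim (N'p)^{r-1}(1-p)^{N'}/(r-1)!$ and propagate the combinatorial coefficient through the algebra, exactly as in the original Candes-Tao analysis.
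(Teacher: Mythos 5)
Your construction and your identifiability criterion are essentially the paper's (disjoint block supports of size $n_t/(\mu_0 r)$ in modes $2,\dots,T$, free mode-1 coefficients; a row becomes unidentifiable as soon as one of its $r$ disjoint block supports is entirely unobserved). The gap is in the probabilistic step. The event you propose to bound --- fewer than $r$ observed entries in the whole row support $\mathrm{supp}(\tilde V)$, which has size $N' = rV_b$ with $V_b = \prod_{t\ge 2} n_t/(\mu_0 r)^{T-1}$ --- is a strict subset of the non-identifiability event, and your lower bound $P(\mathrm{Bin}(N',p)<r)\ge P(\mathrm{Bin}(N',p)=r-1)$ is quantitatively far too small in the regime that matters. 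When condition~\ref{eq:lower_bound} just fails we have $pV_b \approx \log(n_1/2\delta)$, hence $(1-p)^{N'}\approx (2\delta/n_1)^r$ and $P(\mathrm{Bin}(N',p)=r-1)\approx \frac{(r\log(n_1/2\delta))^{r-1}}{(r-1)!}\,(2\delta/n_1)^r$; after the union over the $n_1$ rows this gives an overall failure probability of order $n_1^{1-r}\delta^r$ times polylog factors, which is nowhere near the required $\delta$ once $r\ge 2$. So the final claim that the estimate ``rearranges into the theorem's condition'' does not go through: no refinement of the binomial tail for the event ``$<r$ observations in the whole support'' can recover the missing factor of $r$, because that event itself is too rare.

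The factor $r^{T-1}\mu_0^{T-1}$ should not come from collecting $r$ coupons per row at all; it comes from the volume of a \emph{single} block, and this is exactly how the paper argues. For each row fix one designated block support of size $V_b=\prod_{t\ge2} n_t/(\mu_0 r)^{T-1}$; the probability that it receives no observation is $(1-p)^{V_b}$, which exceeds $2\delta/n_1$ precisely when \ref{eq:lower_bound} fails, and one empty block frees the corresponding mode-1 coefficient, yielding infinitely many rank-$r$, $\mu_0$-coherent tensors agreeing on $\Omega$. Multiplying over the $n_1$ independent rows and using $\log(1-\delta)\ge-2\delta$ for $\delta<1/2$ finishes the proof; the paper additionally makes the mode-1 factors block-supported so each row meets only one block, but your denser mode-1 factors would also work since a single empty block suffices. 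Your observation that the ``naive'' $\mathrm{Bin}=0$ bound only delivers $r^{T-2}$ is an artifact of applying it to the whole row support $rV_b$ rather than to one block $V_b$; applied per block it delivers exactly $r^{T-1}$. Finally, note that you cannot invoke the matrix lower bound as a black box on the mode-1 unfolding with $\mu(\tilde V)\le r^{T-2}\mu_0^{T-1}$, since the lower-bound witnesses must remain rank-$r$ tensors with all mode coherences at most $\mu_0$; your remark that the perturbation touches only mode-1 coefficients is the right way to handle this, but it means the argument must be carried out on the tensor family directly, as above.
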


Theorem~\ref{thm:lower_bound} implies that as long as the right hand side of Equation~\ref{eq:lower_bound} is at most $\epsilon < 1$, and:
\begin{eqnarray}
m \le n_1 r^{T-1}\mu_0^{T-1}\log\left(\frac{n_1}{2\delta}\right) (1-\epsilon/2)
\label{eq:lower_bound_translated}
\end{eqnarray}
then with probability at least $\delta$ there are infinitely many matrices that agree on the observed entries.
This gives a necessary condition on the number of samples required for tensor completion.
Note that when $T=2$ we recover the known lower bound for matrix completion.

%% We make some remarks about the lower bound:
%% \vspace{-0.25cm}
%% \begin{packed_enum}
%% \item{} With $T=2$, we recover the known lower bound for matrix completion.
%% \item{} The lower bound holds under the \emph{Bernoulli} sampling model rather than the \emph{Uniform-at-random} model.
%% Candes and Tao \cite{candes2010power} show how to translate results between the two and since their proof works here as stated, we do not dive into these details.
%% \end{packed_enum}

Theorem~\ref{thm:lower_bound} gives a necessary condition under uniform sampling.
Comparing with Theorem~\ref{thm:tc} shows that our procedure outperforms any passive procedure in its dependence on the tensor dimensions.
However, our guarantee is suboptimal in its dependence on $r$.
%% These extra factors of $r$ arise from the fact that while each subspace is incoherent, each individual sub-tensor may not be; making the stronger assumption that each subtensor is also incoherent would improve the dependence on $r$.
The extra factor of $\sqrt{r}$ would be eliminated by a further improvement to Theorem~\ref{thm:lower_bound}, which we conjecture is indeed possible.

For adaptive sampling, one can obtain a lower bound via a parameter counting argument.
Observing the $(i_1, \ldots, i_T)$th entry leads to a polynomial equation of the form $\sum_{k} \prod_{t} a_k^{(t)}(i_t) = M_{i_1,\ldots,i_T}$.
If $m < r(\sum_t n_t)$, this system is underdetermined showing that $\Omega((\sum_t n_t)r)$ observations are necessary for exact recovery, even under adaptive sampling. 
Thus, our algorithm enjoys sample complexity with optimal dependence on matrix dimensions.

\section{Noisy Matrix Completion}
Our algorithm for noisy matrix completion is an adaptation of the column subset selection (CSS) algorithm analyzed by Deshpande \emph{et al.} \cite{deshpande2006matrix}. 
The algorithm builds a candidate column space in rounds; at each round it samples additional columns with probability proportional to their projection on the orthogonal complement of the candidate column space.
%% The probability distribution changes from round to round and hence the procedure is described as adaptive.

To concretely describe the algorithm, suppose that at the beginning of the $l$th round we have a candidate subspace $U_l$. 
Then in the $l$th round, we draw $s$ additional columns according to the distribution where the probability of drawing the $i$th column is proportional to $||\Pcal_{U_l^\perp}c_i||_2^2$.
Observing these $s$ columns in full and then adding them to the subspace $U_l$ gives the candidate subspace $U_{l+1}$ for the next round.
We initialize the algorithm with $U_1 = \emptyset$.
After $L$ rounds, we approximate each column $c$ with $\hat{c} = U_L(U_{L\Omega}^TU_{L\Omega})^{-1}U^T_{L\Omega} c_\Omega$ and concatenate these estimates to form $\hat{M}$.

%% The challenge when applying this algorithm to the missing data setting is that the algorithm cannot compute the sampling probabilities without observing entries of the matrix.
The challenge is that the algorithm cannot compute the sampling probabilities without observing entries of the matrix. 
However, our results show that with reliable estimates, which can be computed from few observations, the algorithm still performs well.
%% However, with reliable estimates for the sampling probabilities, we expect that the algorithm will perform fairly well.
%% Our results here show that this is indeed the case.
%% In particular, we argue that with few observations, we can estimate the sampling probabilities leading to a performance guarantee for the algorithm.

We assume that the matrix $M \in \mathbb{R}^{n_1 \times n_2}$ can be decomposed as a rank $r$ matrix $A$ and a random gaussian matrix $R$ whose entries are independently drawn from $\mathcal{N}(0, \sigma^2)$. 
%% While the randomness of the lower ranks of $M$ is not usually assumed in CSS, it is essential here to establish incoherence of various subspaces throughout the execution of the algorithm.
%% The noise variance $\sigma^2$ appears in the error bound but actually plays no role in the analysis.
We write $A = U \Sigma V^T$ and assume that $\mu(U) \le \mu_0$.
As before, the incoherence assumption is crucial in guaranteeing that one can estimate the column norms, and consequently sampling probabilities, from missing data.

\begin{theorem}
Let $\Omega$ be the set of all observations over the course of the algorithm, let $U_L$ be the subspace obtained after $L = log(n_1n_2)$ rounds and $\hat{M}$ be the matrix whose columns ${\hat{c}_i = U_L(U_{L\Omega}^TU_{L\Omega})^{-1}U_{L\Omega}^Tc_{\Omega i}}$.
%% With the same notations as in Theorem~\ref{thm:noisy} there are constants $c_1, c_2$ such that:
Then there are constants $c_1, c_2$ such that:
\[
||A - \hat{M}||_F^2 \le \frac{c_1}{(n_1n_2)}||A||_F^2 + c_2 ||R_{\Omega}||_F^2
\]
$\hat{M}$ can be computed from $\Omega((n_1+n_2)r^{3/2}\mu(U) \textrm{polylog}(n_1n_2))$ observations.
In particular, if $||A||_F^2 = 1$ and $R_{ij} \sim \mathcal{N}(0, \sigma^2/(n_1n_2))$, then there is a constant $c_\star$ for which:
\[
||A - \hat{A}||_F^2 \le \frac{c_\star}{n_1n_2}\left(1 + \sigma^2 \left((n_1+n_2)r^{3/2} \mu(U) \textrm{polylog}(n_1n_2)\right)\right)
\]
\label{cor:noisy_mc}
\vspace{-0.25cm}
\end{theorem}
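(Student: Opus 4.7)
The plan is to lift the adaptive column subset selection (CSS) analysis of Deshpande et al.~\cite{deshpande2006matrix} to the missing-data regime, using Theorem~\ref{thm:laura_new} to approximate the ideal sampling probabilities $\|\mathcal{P}_{U_l^\perp} c_i\|_2^2$ from partial observations. With full access to $M$, after $L = \log(n_1 n_2)$ rounds of drawing $s = O(r)$ columns with probabilities proportional to these residuals, the CSS guarantee of~\cite{deshpande2006matrix} yields $\|M - \mathcal{P}_{U_L} M\|_F^2 \le (c/(n_1 n_2)) \|M\|_F^2 + O(\|M - M_r\|_F^2)$, where $M_r$ is the best rank-$r$ approximation. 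Instantiating at $M = A + R$ with $\textrm{rank}(A) = r$ forces $\|M - M_r\|_F^2 \le \|R\|_F^2$, splitting the bound into a term proportional to $\|A\|_F^2/(n_1 n_2)$ (plus a negligible $\|R\|_F^2/(n_1 n_2)$ that is absorbed) and a noise term later refined into $O(\|R_\Omega\|_F^2)$.

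First I would show that the empirical sampling weights computed from partial observations are within constant factors of the ideal ones. For each column $c_i$ observed on an independent index set $\Omega_l$ of size $m = \Theta(r^{3/2} \mu_0 \log(n_1 n_2))$, Theorem~\ref{thm:laura_new} applied with $y = c_i$ and $U = U_l$ gives $\|c_{i,\Omega_l} - \mathcal{P}_{U_{l,\Omega_l}} c_{i,\Omega_l}\|_2^2 = \Theta(m/n_1)\,\|\mathcal{P}_{U_l^\perp} c_i\|_2^2$ with high probability. A union bound over the $n_2 L$ column--round pairs certifies that the empirical distribution is within a constant factor of the true one, and inspection of the CSS recursion shows that it tolerates such a perturbation at the expense of a constant factor in the final error.

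Next I would convert the column-space approximation into a per-column reconstruction bound. Writing $c_i = a_i + r_i$, decompose $\hat{c}_i - a_i$ into (i) $\mathcal{P}_{U_L} a_i - a_i$, (ii) a missing-data completion error $U_L (U_{L,\Omega}^T U_{L,\Omega})^{-1} U_{L,\Omega}^T (a_i - \mathcal{P}_{U_L} a_i)_\Omega$, and (iii) a noise projection $U_L (U_{L,\Omega}^T U_{L,\Omega})^{-1} U_{L,\Omega}^T r_{i,\Omega}$. The lower tail of Theorem~\ref{thm:laura_new} provides $U_{L,\Omega}^T U_{L,\Omega} \succeq (m/(2 n_1))\,I$, which controls the pseudoinverse uniformly across columns. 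Summing (i) and (ii) over $i$ reduces to the CSS bound above and contributes $(c_1/(n_1 n_2))\,\|A\|_F^2$, while summing (iii) contributes $c_2\,\|R_\Omega\|_F^2$, matching the theorem. The sample budget assembles as $L$ rounds of $s n_1 + n_2 m$ samples plus a final $n_2 m$-sample reconstruction pass, totaling $O((n_1 + n_2) r^{3/2} \mu_0 \textrm{polylog}(n_1 n_2))$.

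The main obstacle will be reconciling three simultaneous perturbations: the~\cite{deshpande2006matrix} analysis assumes exact probabilities on a target whose tail spectrum is controlled, whereas here we have only constant-factor-accurate probabilities and a noise-contaminated target. A secondary subtlety is maintaining incoherence of $U_l$ inductively as the rounds proceed, since $U_l$ is built from the fully observed noisy columns $a_i + r_i$ and Theorem~\ref{thm:laura_new} requires $\mu(U_l)$ to remain small; this should follow from concentration of Gaussian projections onto low-dimensional subspaces but must be tracked round by round. The final ``in particular'' statement then drops out by setting $\|A\|_F^2 = 1$ and applying standard Gaussian concentration to obtain $\|R_\Omega\|_F^2 \le O(|\Omega|\,\sigma^2/(n_1 n_2))$ with high probability.
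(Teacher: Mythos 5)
Your outline of the column-subset-selection core matches the paper's route: the paper first proves a general result (Theorem~\ref{thm:noisy}) by redoing Deshpande \emph{et al.}'s argument with perturbed sampling probabilities (Lemma~\ref{lem:css_inter}), chaining rounds by induction (Lemma~\ref{lem:css_induct}), and then using Theorem~\ref{thm:laura_new} together with incoherence control of the evolving subspaces and of the projected columns (Lemmas~\ref{lem:inc_column} and~\ref{lem:inc_subspace}) to show the estimated probabilities are within the constant factors $[2/5,5/2]$ of the true ones (Lemma~\ref{lem:noisy_laura_use}); it also proves a per-column reconstruction lemma showing $\|c_i-\hat c_i\|^2\le \tfrac54\|\mathcal{P}_{U_L^\perp}c_i\|^2$. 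Up to that point your plan and the paper agree.

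The genuine gap is in how the noise term becomes $\|R_\Omega\|_F^2$ rather than $\|R\|_F^2$, which is exactly what gives the ``in particular'' statement its force. You instantiate the CSS bound with $\|M-M_r\|_F^2\le\|R\|_F^2$ (the full noise matrix, of size $\approx\sigma^2$ under the stated scaling) and assert this is ``later refined'' to $O(\|R_\Omega\|_F^2)$, but no mechanism is given, and your per-column decomposition does not supply one: terms (i)+(ii) are controlled through the CSS bound on $M$, which still carries $\|M-M_r\|_F^2$, and term (iii) is not $O(\|R_\Omega\|_F^2)$ by the argument you invoke --- the spectral lower bound $U_{L\Omega}^TU_{L\Omega}\succeq (m/2n_1)I$ only yields $\|U_L(U_{L\Omega}^TU_{L\Omega})^{-1}U_{L\Omega}^Tr_{i\Omega}\|^2\lesssim (n_1/m)^2\|U_{L\Omega}^Tr_{i\Omega}\|^2$, which without an additional argument exploiting incoherence/independence of the noise relative to $U_{L\Omega}$ (in the spirit of Lemma~\ref{lem:laura_2_new}) exceeds $\|r_{i\Omega}\|^2$ by roughly $n_1\dim(U_L)/m^2$, a factor growing with $n_1$; the same issue means step (ii) needs more than the pseudoinverse bound. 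The paper's resolution is different and essential: since the algorithm never observes entries in $\Omega^C$, it declares $M=A+R_\Omega$ (noise set to zero off $\Omega$) before applying Theorem~\ref{thm:noisy}; then $\|M-M_r\|_F\le\|M-A\|_F=\|R_\Omega\|_F$, and the final bound follows from $\|A-\hat M\|_F^2\le 3\|M-\hat M\|_F^2+3\|R_\Omega\|_F^2$ with $\epsilon=1/2$ and $L=\log_2(n_1n_2)$. Without this device (or an equivalent argument), your analysis only yields an error of order $\tfrac{1}{n_1n_2}\|A\|_F^2+\sigma^2$, and the claimed bound $\tfrac{c_\star}{n_1n_2}\bigl(1+\sigma^2(n_1+n_2)r^{3/2}\mu(U)\,\mathrm{polylog}(n_1n_2)\bigr)$ does not follow.
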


%% Corollary~\ref{cor:noisy_mc}, bears more resemblance to results on noisy matrix completion. 
The main improvement in the result is in relaxing the assumptions on the underlying matrix $A$.
Existing results for noisy matrix completion require that the energy of the matrix is well spread out across both the rows and the columns (i.e. incoherence), and the sample complexity guarantees deteriorate significantly without such an assumption~\cite{candes2010matrix,keshavan2010matrix}. 
As a concrete example, Negahban and Wainwright~\cite{negahban2012restricted} use a notion of spikiness, measured as $\sqrt{n_1n_2}\frac{||A||_{\infty}}{||A||_F}$ which can be as large as $\sqrt{n_2}$ in our setup, e.g. when the matrix is zero except for on one column and constant across that column.

The choices of $||A||_F^2 = 1$ and noise variance rescaled by $\frac{1}{n_1n_2}$ enable us to compare our results with related work~\cite{negahban2012restricted}. 
Thinking of $n_1 = n_2 = n$ and the incoherence parameter as a constant, our results imply consistent estimation as long as $\sigma^2 = \omega\left(\frac{n}{r^2 \textrm{polylog}(n)}\right)$. 
On the other hand, thinking of the spikiness parameter as a constant,~\cite{negahban2012restricted} show that the error is bounded by $\frac{\sigma^2 nr \log n}{m}$ where $m$ is the total number of observations. 
Using the same number of samples as our procedure, their results implies consistency as long as $\sigma^2 = \omega(r \textrm{polylog}(n))$. 
For small $r$ (i.e. $r = O(1)$), our noise tolerance is much better, but their results apply even with fewer observations, while ours do not.

\vspace{-0.25cm}
\section{Simulations}
\label{sec:simulations}
\begin{figure*}
\centering{
\subfigure[]{
\includegraphics[scale=\figsize]{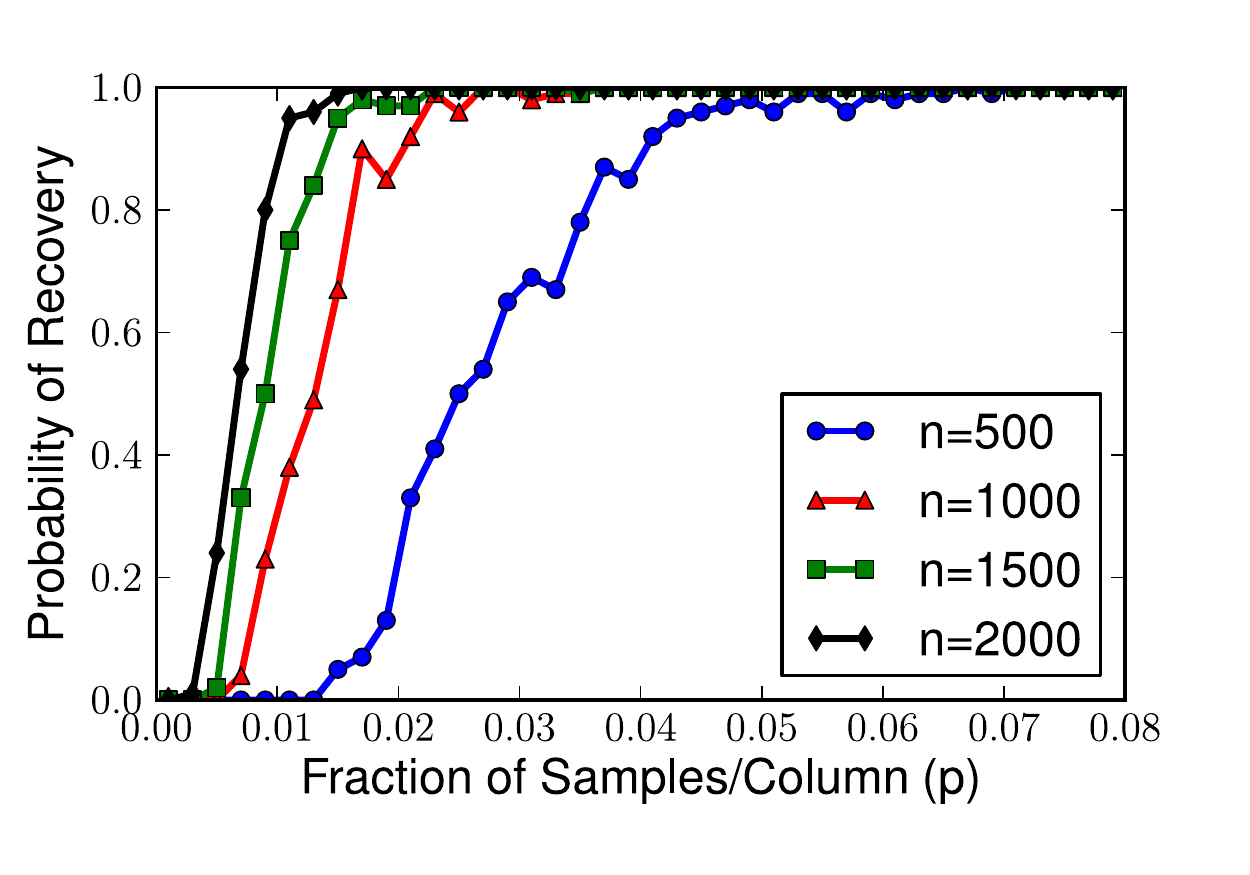}
\label{fig:unscaled_mc}
}
\subfigure[]{
\includegraphics[scale=\figsize]{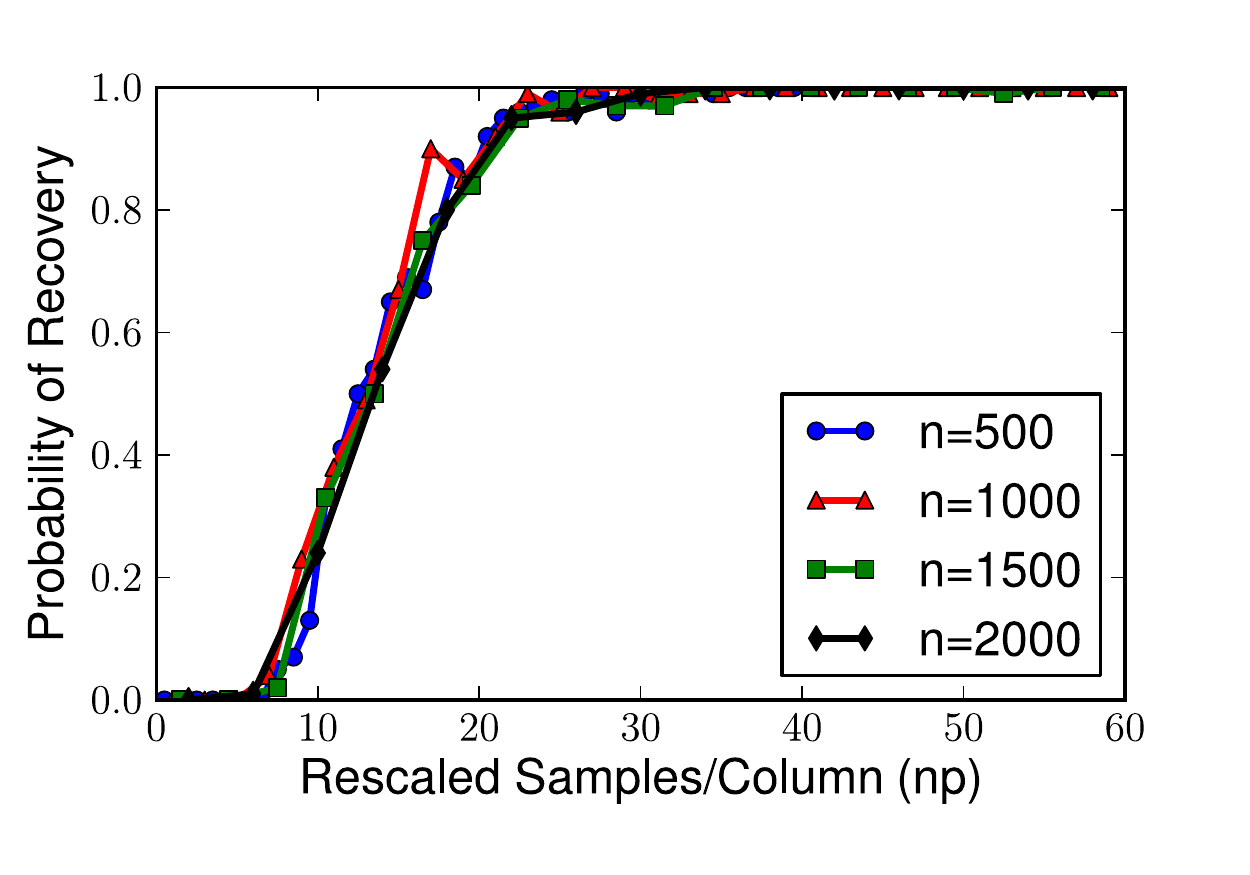}
\label{fig:linear_mc}
}
\subfigure[]{
\includegraphics[scale=\figsize]{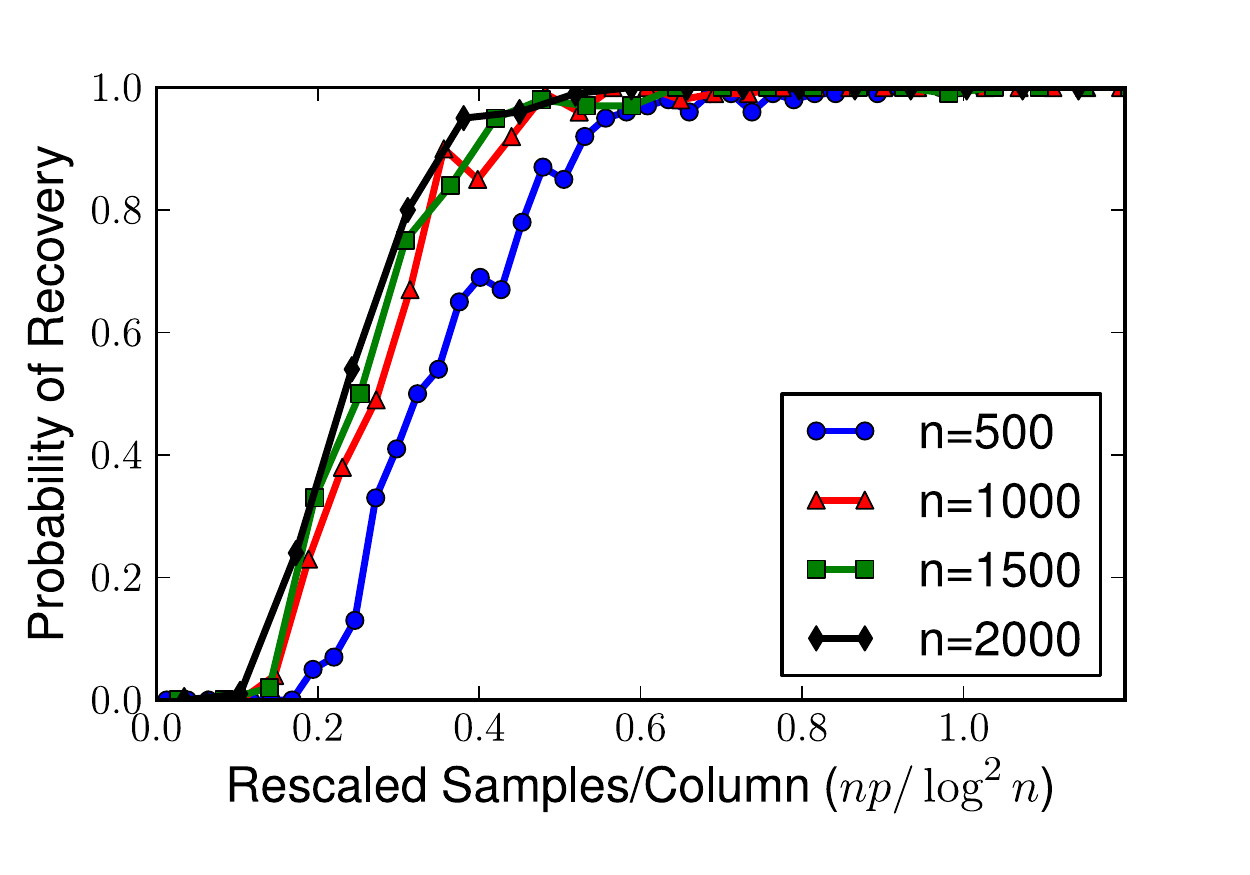}
\label{fig:logarithmic_mc}
}\\
\vspace{-0.4cm}
\subfigure[]{
\includegraphics[scale=\figsize]{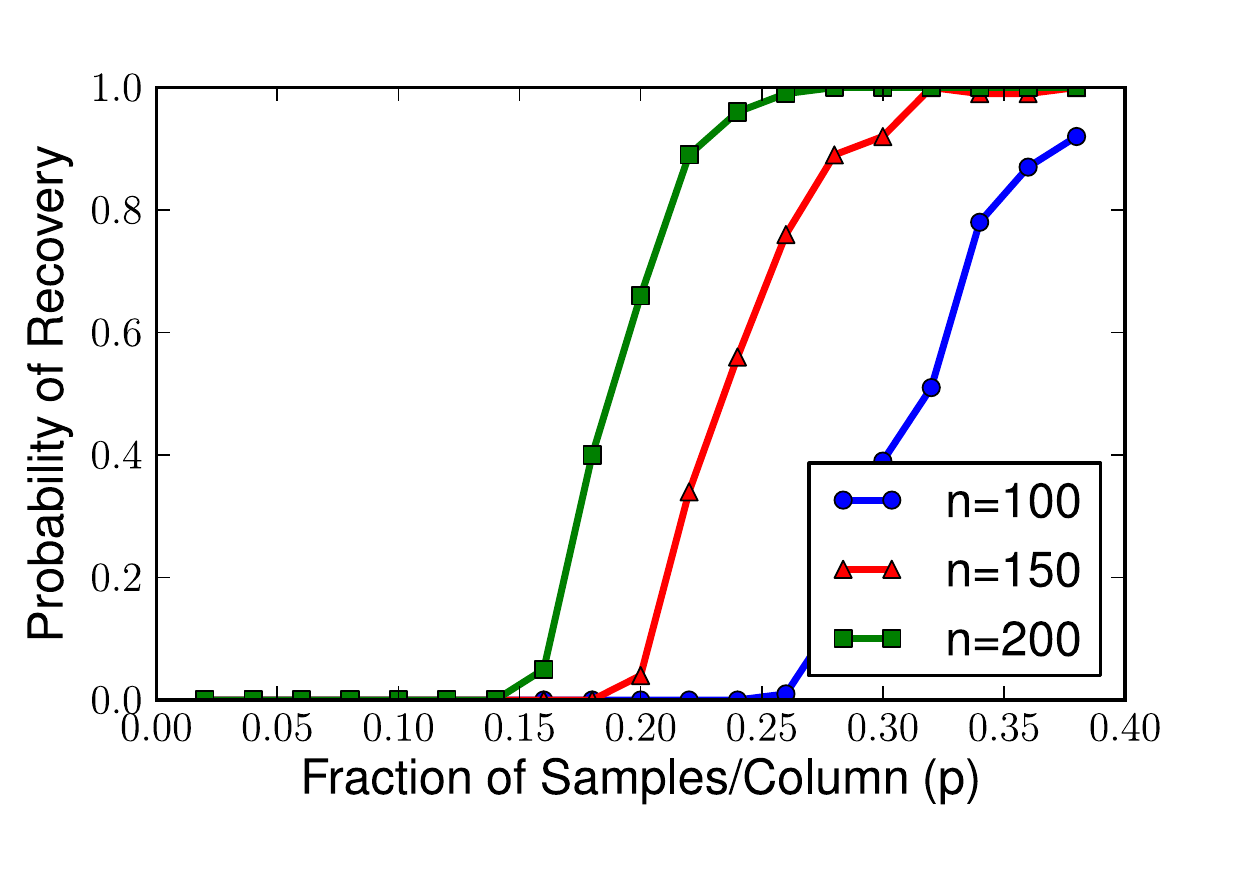}
\label{fig:unscaled_svt}
}
\subfigure[]{
\includegraphics[scale=\figsize]{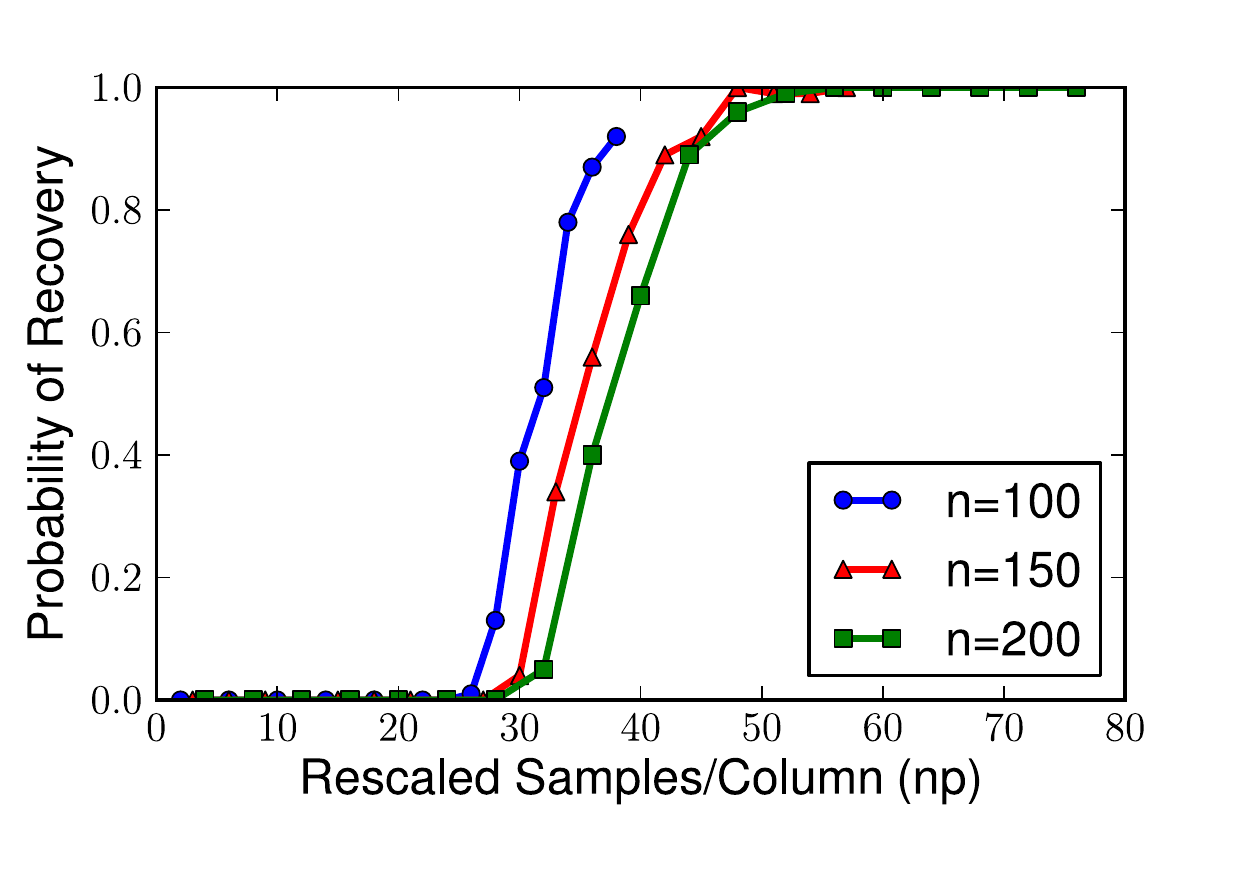}
\label{fig:linear_svt}
}
\subfigure[]{
\includegraphics[scale=\figsize]{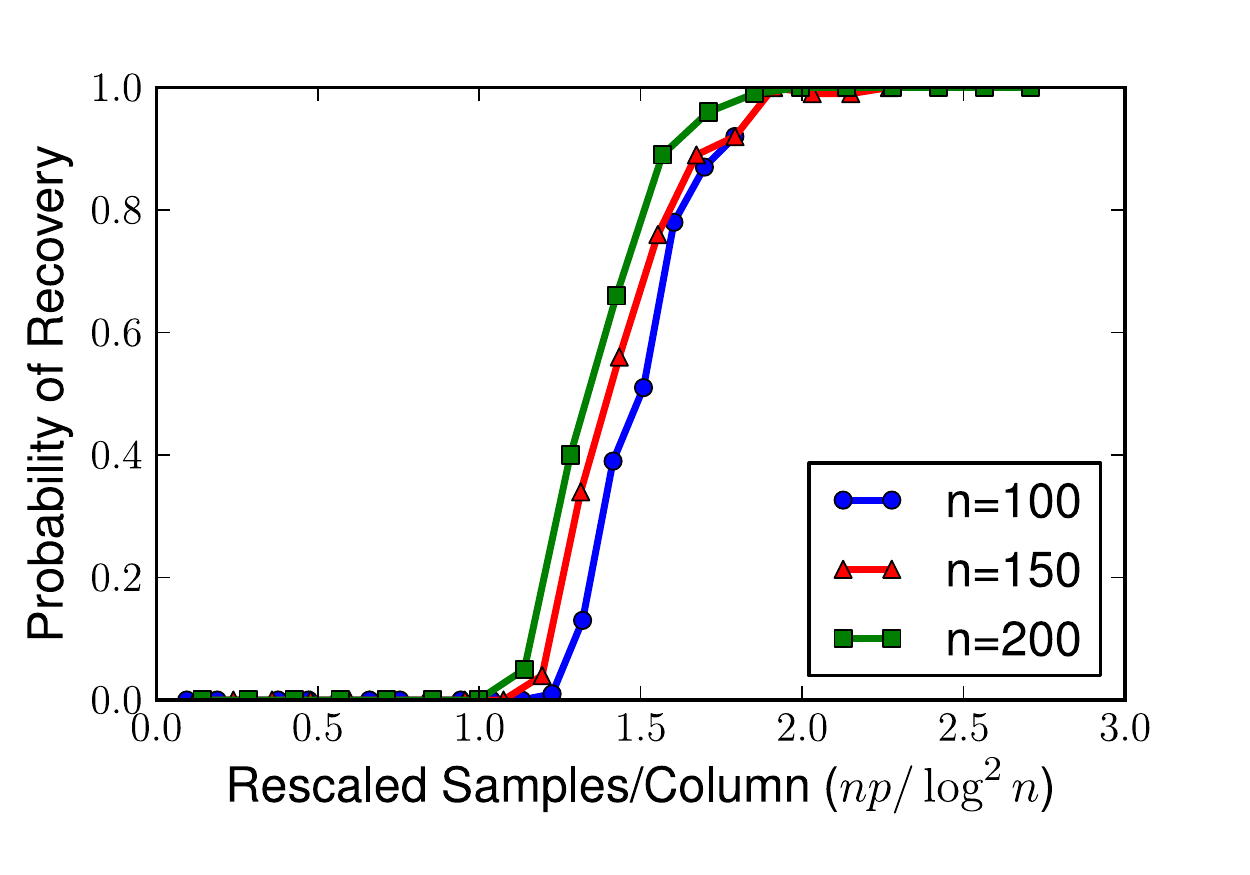}
\label{fig:logarithmic_svt}
}\\
\vspace{-0.4cm}
\subfigure[]{
\includegraphics[scale=\figsize]{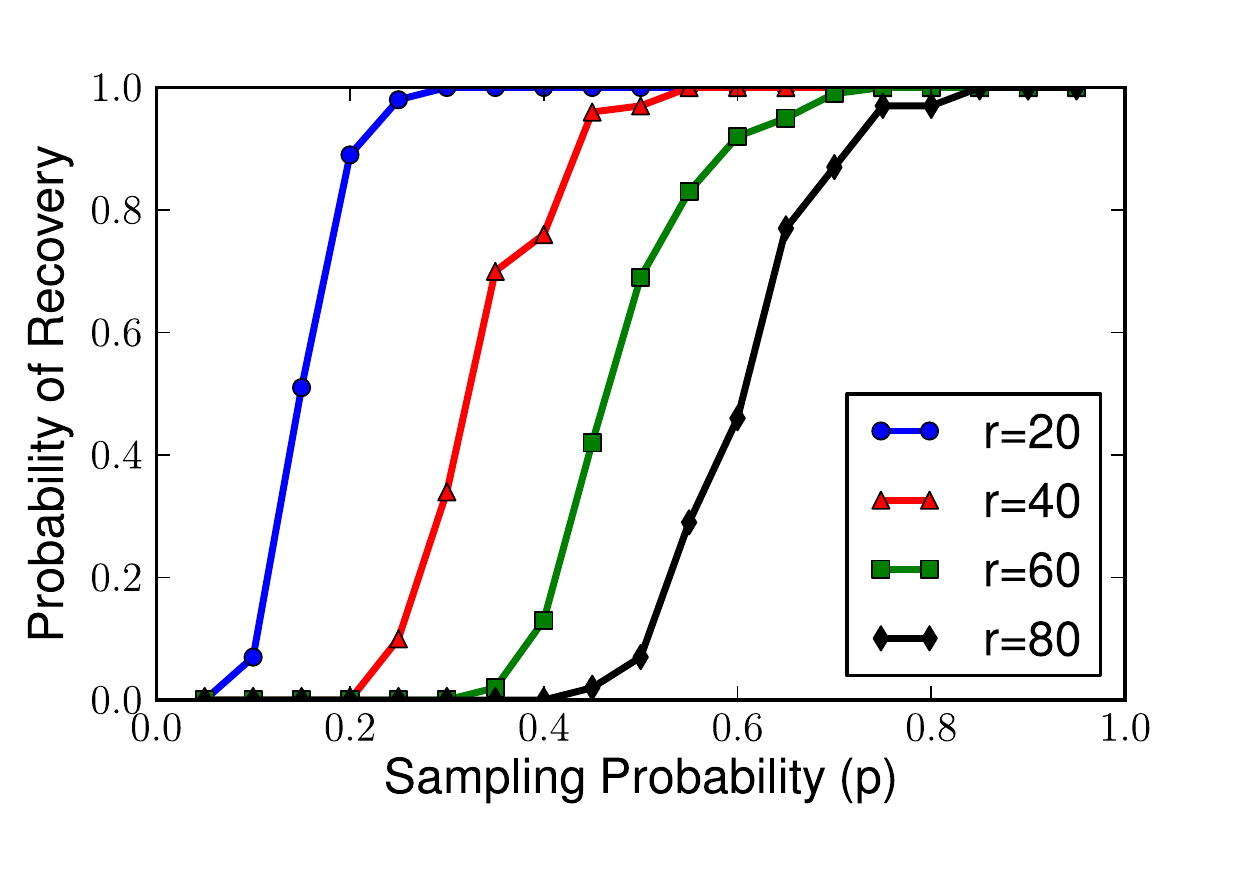}
\label{fig:unscaled_mc_r}
}
\subfigure[]{
\includegraphics[scale=\figsize]{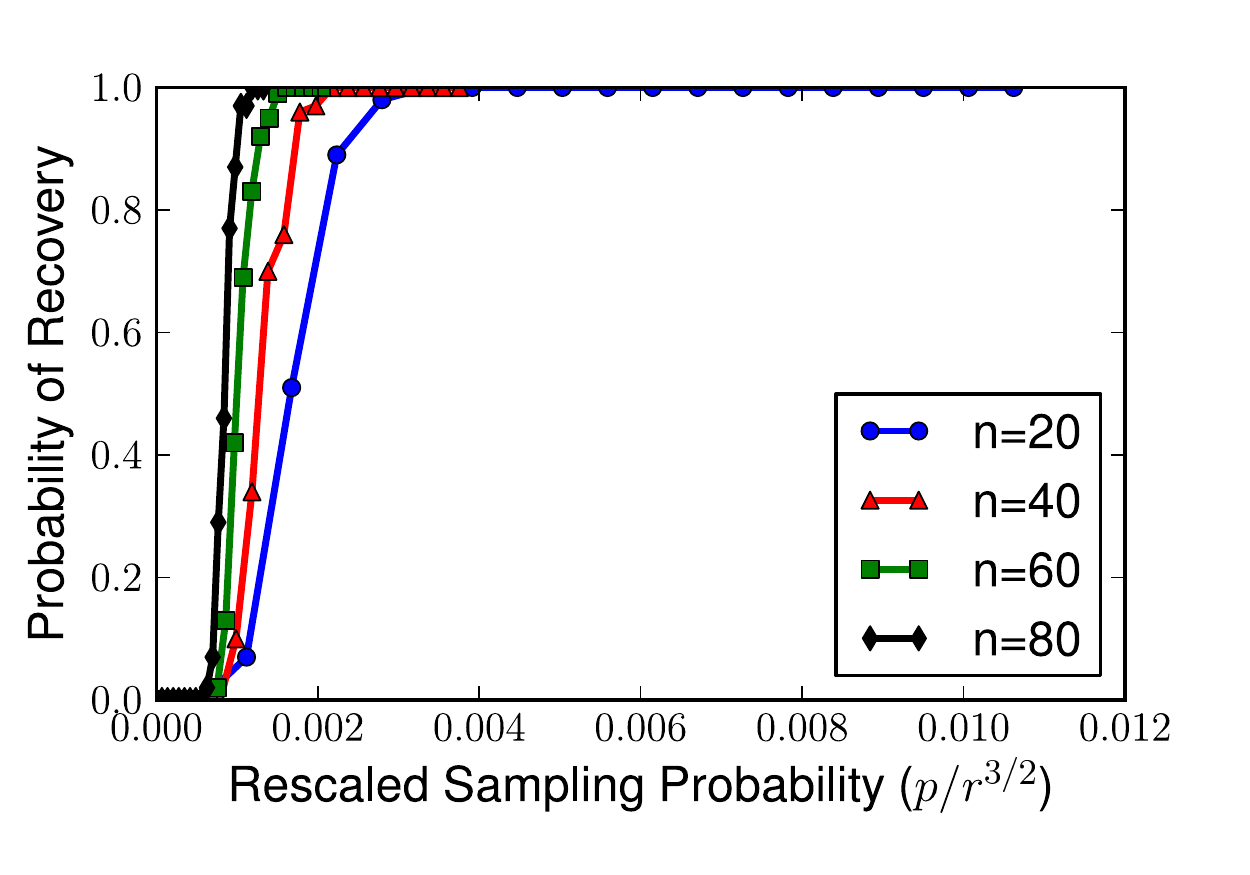}
\label{fig:three_halves_mc_r}
}
\subfigure[]{
\includegraphics[scale=\figsize]{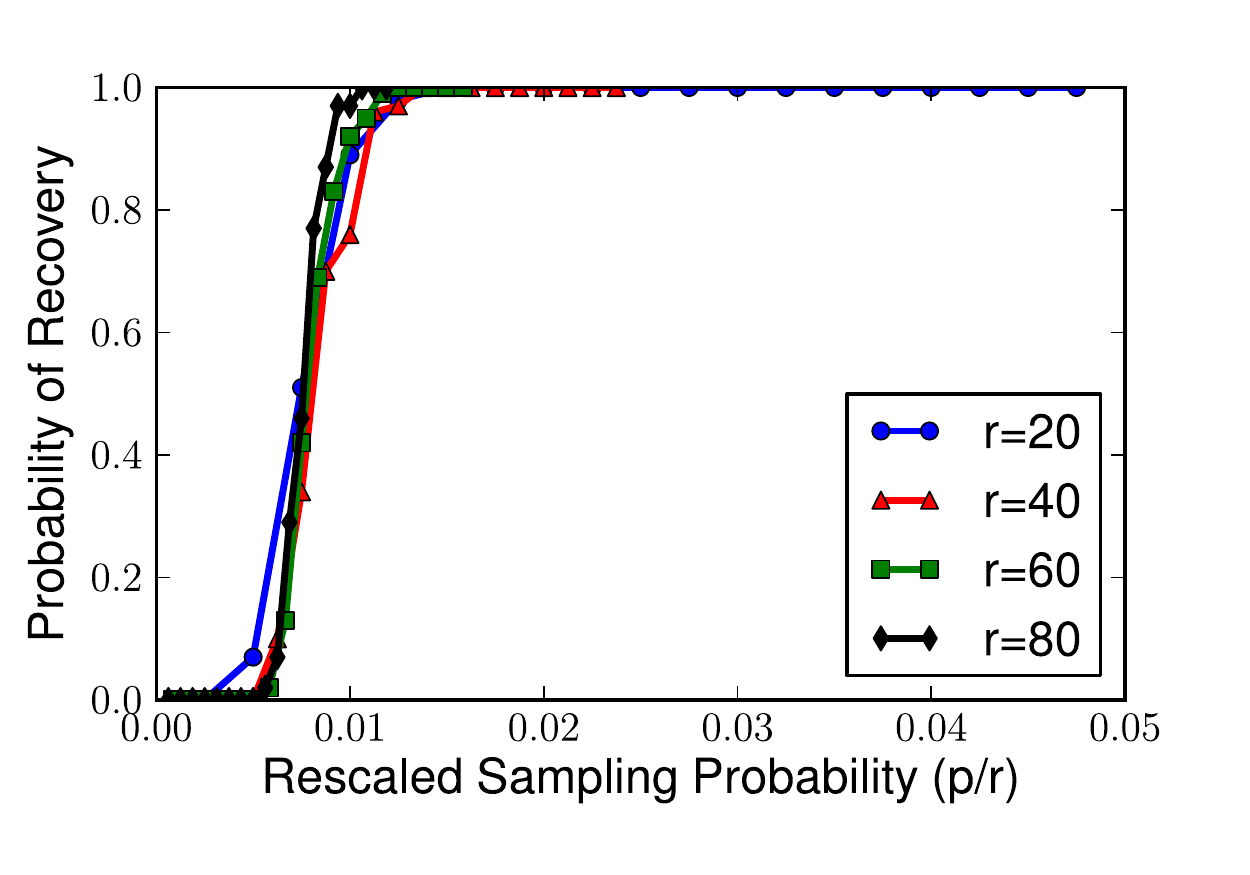}
\label{fig:linear_mc_r}
}
\vspace{-0.4cm}
}
\caption{Probability of success curves for our noiseless matrix completion algorithm (top) and SVT (middle).
Top: Success probability as a function of: Left: $p$, the fraction of samples per column, Center: $np$, total samples per column, and Right: $n p \log^2 n$, expected samples per column for passive completion.
Bottom: Success probability of our noiseless algorithm for different values of $r$ as a function of $p$, the fraction of samples per column (left), $p/r^{3/2}$ (middle) and $p/r$ (right).}
\label{fig:mc_thresholds}
\end{figure*}

%% We verify some of our theoretical results through several simulations.
%% Our primary goal is to empirically demonstrate the sample complexity guarantees for our Algorithms.
%% We also study the scalability of our algorithms and compare to some existing methods for matrix completion.

%% Our goal with simulations is to study the sample complexity and scalability of our algorithms in practice and compare with existing algorithms.

We verify Corollary~\ref{cor:mc}'s linear dependence on $n$ in Figure~\ref{fig:mc_thresholds}, where we empirically compute the success probability of the algorithm for varying values of $n$ and $p=m/n$, the fraction of entries observed per column.
Here we study square matrices of fixed rank $r=5$ with $\mu(U)=1$.
Figure~\ref{fig:unscaled_mc} shows that our algorithm can succeed with sampling a smaller and smaller fraction of entries as $n$ increases, as we expect from Corollary~\ref{cor:mc}.
In Figure~\ref{fig:linear_mc}, we instead plot success probability against total number of observations per column.
The fact that the curves coincide suggests that the samples per column, $m$, is constant with respect to $n$, which is precisely what Corollary~\ref{cor:mc} implies.
Finally, in Figure~\ref{fig:logarithmic_mc}, we rescale instead by $n/\log^2 n$, which corresponds to the passive sample complexity bound \cite{recht2011simpler}.
Empirically, the fact that these curves do not line up demonstrates that our algorithm requires fewer than $\log^2 n$ samples per column, outperforming the passive bound.

The second row of Figure~\ref{fig:mc_thresholds} plots the same probability of success curves for the Singular Value Thresholding (SVT) algorithm \cite{cai2010singular}.
As is apparent from the plots, SVT does not enjoy a linear dependence on $n$;
indeed Figure~\ref{fig:logarithmic_svt} confirms the logarithmic dependency that
we expect for passive matrix completion, and establishes that our algorithm has
empirically better performance.
%% Comparing SVT to our algorithm via these thresholds establishes that our algorithm has empirically better performance.

In the third row, we study the algorithm's dependence on $r$ on $500\times 500$ square matrices. 
In Figure~\ref{fig:unscaled_mc_r} we plot the probability of success of the algorithm as a function of the sampling probability $p$ for matrices of various rank, and observe that the sample complexity increases with $r$.
In Figure~\ref{fig:three_halves_mc_r} we rescale the $x$-axis by $r^{-3/2}$ so that if our theorem is tight, the curves should coincide.
In Figure~\ref{fig:linear_mc_r} we instead rescale the $x$-axis by $r^{-1}$ corresponding to our conjecture about the performance of the algorithm.
Indeed, the curves line up in Figure~\ref{fig:linear_mc_r}, demonstrating that empirically, the number of samples needed per column is linear in $r$ rather than the $r^{3/2}$ dependence in our theorem.

\begin{figure}[t]
\begin{floatrow}
\ffigbox{
\subfigure{
\includegraphics[scale=\figsizeb]{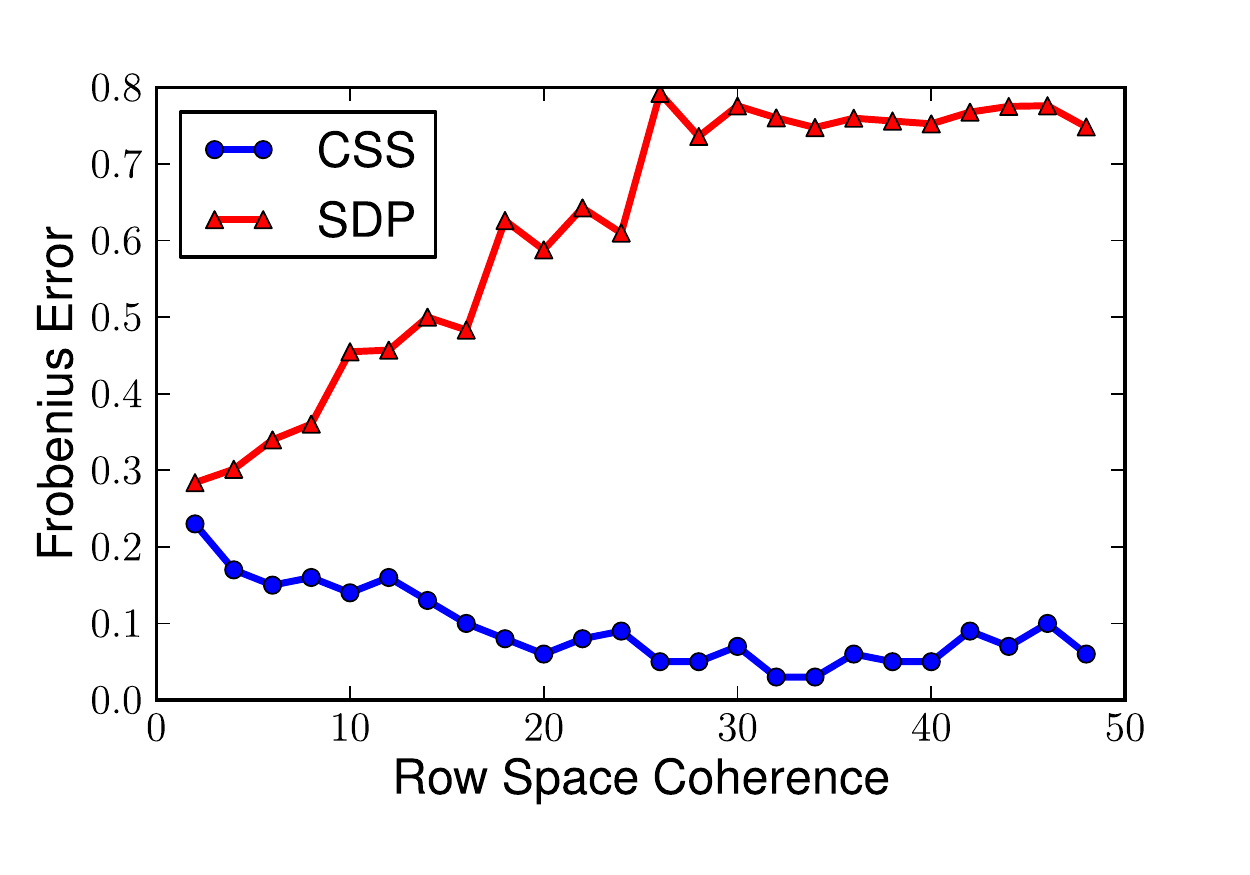}
\label{fig:linear_tc_3}
}
}{
\caption{Reconstruction error as a function of row space incoherence for our noisy algorithm (CSS) and the semidefinite program of~\cite{negahban2012restricted}.}
\label{fig:tc_thresholds}
}
\capbtabbox{
\begin{tabular}{c c c c | c }
\hline
\multicolumn{4}{c|}{Unknown $M$} & \multicolumn{1}{c}{Results}\\
\hline
$n$ & $r$ & $m/d_r$ & $m/n^2$ & time (s) \\ % &$||\hat{M} - M||_F$\\
\multirow{3}{*}{$1000$} & $10$ & $3.4$ & $0.07$ & $16$ \\ % & $2.3e-12$\\ 
& $50$ & $3.3$ & $0.33$ & $29$ \\ % & $4.3e-12$\\ 
& $100$ & $3.2$ & $0.61$ & $45$ \\ % & $4.5e-12$\\ 
\hline
\multirow{3}{*}{$5000$} & $10$ & $3.4$ & $0.01$ & $3$ \\ % & $1.2e-11$\\ 
& $50$ & $3.5$ & $0.07$ & $27$ \\ % & $3.7e-11$\\ 
& $100$ & $3.4$ & $0.14$ & $104$ \\ %& $4.4e-11$\\ 
\hline
\multirow{3}{*}{$10000$} & $10$ & $3.4$ & $0.01$ & $10$ \\ %& $3.6e-11$\\ 
& $50$ & $3.5$ & $0.03$ & $84$ \\ % & $5.6e-11$\\ 
& $100$ & $3.5$ & $0.07$ & $283$ \\ % & $6e-11$\\ 
\end{tabular}
}{\caption{Computational results on large low-rank matrices.
$d_r = r(2n-r)$ is the degrees of freedom, so $m/d_r$ is the oversampling ratio.}
\label{tab:computational_performance}
}
\end{floatrow}
\vspace{-0.5cm}
\end{figure}

To confirm the computational improvement over existing methods, we ran our matrix completion algorithm on large-scale matrices, recording the running time and error in Table~\ref{tab:computational_performance}.
To contrast with SVT, we refer the reader to Table 5.1 in~\cite{cai2010singular}.
As an example, recovering a $10000 \times 10000$ matrix of rank $100$ takes close to 2 hours with the SVT, while it takes less than 5 minutes with our algorithm.

For the noisy algorithm, we study the dependence on row-space incoherence. %% verify that we do not require incoherence on the row space of the matrix, while existing methods do.
In Figure~\ref{fig:tc_thresholds}, we plot the reconstruction error as a function of the row space coherence for our procedure and the semidefinite program of Negahban and Wainwright~\cite{negahban2012restricted}, where we ensure that both algorithms use the same number of observations.
It's readily apparent that the SDP decays in performance as the row space becomes more coherent while the performance of our procedure is unaffected. 

\vspace{-0.25cm}
\section{Conclusions and Open Problems}
\vspace{-0.15cm}
In this work, we demonstrate how sequential active algorithms can offer significant improvements in time, and measurement overhead over passive algorithms for matrix and tensor completion.
We hope our work motivates further study of sequential active algorithms for machine learning.
%% These algorithms are incredibly appealing from a practical perspective, motivating further study of sequential active algorithms for machine learning.

%% One short-coming of our work is that we do not study the noisy versions of matrix and tensor completion.
%% Robustifying our algorithms to noise turns out to be non-trivial; while one can analyze the test $||(I - \mathcal{P}_{\tilde{U}_{\Omega}})c_{\Omega}||^2$ when the matrix is corrupted, the challenge involves controlling the deviation between the recovered column space $\tilde{U}$ and the the true column space $U$.
%% We intend to explore this direction further, in hopes of developing a more practical algorithm.

Several interesting theoretical questions arise from our work:
\vspace{-0.25cm}
\begin{packed_enum}
\item{} Can we tighten the dependence on rank for these problems?
In particular, can we bring the dependence on $r$ down from $r^{3/2}$ to linear?
Simulations suggest this is possible.
\item{} Can one generalize the nuclear norm minimization program for matrix completion to the tensor completion setting while providing theoretical guarantees on sample complexity?
%% \item{} Are there tighter lower bounds for tensor completion under random sampling that scale with the tensor order?
\end{packed_enum}
\vspace{-0.25cm}
We hope to pursue these directions in future work.

\bibliography{bibliography}

\begin{thebibliography}{10}

\bibitem{achlioptas2007fast}
Dimitris Achlioptas and Frank Mcsherry.
\newblock Fast computation of low-rank matrix approximations.
\newblock {\em Journal of the ACM (JACM)}, 54(2):9, 2007.

\bibitem{balzano2010high}
Laura Balzano, Benjamin Recht, and Robert Nowak.
\newblock High-dimensional matched subspace detection when data are missing.
\newblock In {\em Information Theory Proceedings (ISIT), 2010 IEEE
  International Symposium on}, pages 1638--1642. IEEE, 2010.

\bibitem{boutsidis2009improved}
Christos Boutsidis, Michael~W Mahoney, and Petros Drineas.
\newblock An improved approximation algorithm for the column subset selection
  problem.
\newblock In {\em Proceedings of the twentieth Annual ACM-SIAM Symposium on
  Discrete Algorithms}, pages 968--977. Society for Industrial and Applied
  Mathematics, 2009.

\bibitem{cai2010singular}
Jian-Feng Cai, Emmanuel~J Cand{\`e}s, and Zuowei Shen.
\newblock A singular value thresholding algorithm for matrix completion.
\newblock {\em SIAM Journal on Optimization}, 20(4):1956--1982, 2010.

\bibitem{candes2010matrix}
Emmanuel~J Candes and Yaniv Plan.
\newblock Matrix completion with noise.
\newblock {\em Proceedings of the IEEE}, 98(6):925--936, 2010.

\bibitem{candes2009exact}
Emmanuel~J Cand{\`e}s and Benjamin Recht.
\newblock Exact matrix completion via convex optimization.
\newblock {\em Foundations of Computational mathematics}, 9(6):717--772, 2009.

\bibitem{candes2010power}
Emmanuel~J Cand{\`e}s and Terence Tao.
\newblock The power of convex relaxation: Near-optimal matrix completion.
\newblock {\em Information Theory, IEEE Transactions on}, 56(5):2053--2080,
  2010.

\bibitem{chen2013coherent}
Yudong Chen, Srinadh Bhojanapalli, Sujay Sanghavi, and Rachel Ward.
\newblock Coherent matrix completion.
\newblock {\em arXiv preprint arXiv:1306.2979}, 2013.

\bibitem{davenport2012compressive}
Mark~A Davenport and Ery Arias-Castro.
\newblock Compressive binary search.
\newblock In {\em Information Theory Proceedings (ISIT), 2012 IEEE
  International Symposium on}, pages 1827--1831. IEEE, 2012.

\bibitem{deshpande2006matrix}
Amit Deshpande, Luis Rademacher, Santosh Vempala, and Grant Wang.
\newblock Matrix approximation and projective clustering via volume sampling.
\newblock {\em Theory of Computing}, 2:225--247, 2006.

\bibitem{gandy2011tensor}
Silvia Gandy, Benjamin Recht, and Isao Yamada.
\newblock Tensor completion and low-n-rank tensor recovery via convex
  optimization.
\newblock {\em Inverse Problems}, 27(2):025010, 2011.

\bibitem{gittens2011spectral}
Alex Gittens.
\newblock The spectral norm error of the naive nystrom extension.
\newblock {\em arXiv preprint arXiv:1110.5305}, 2011.

\bibitem{gross2011recovering}
David Gross.
\newblock Recovering low-rank matrices from few coefficients in any basis.
\newblock {\em Information Theory, IEEE Transactions on}, 57(3):1548--1566,
  2011.

\bibitem{guruswami2012optimal}
Venkatesan Guruswami and Ali~Kemal Sinop.
\newblock Optimal column-based low-rank matrix reconstruction.
\newblock In {\em Proceedings of the Twenty-Third Annual ACM-SIAM Symposium on
  Discrete Algorithms}, pages 1207--1214. SIAM, 2012.

\bibitem{haupt2009compressive}
Jarvis~D Haupt, Richard~G Baraniuk, Rui~M Castro, and Robert~D Nowak.
\newblock Compressive distilled sensing: Sparse recovery using adaptivity in
  compressive measurements.
\newblock In {\em Signals, Systems and Computers, 2009 Conference Record of the
  Forty-Third Asilomar Conference on}, pages 1551--1555. IEEE, 2009.

\bibitem{he2011online}
Jun He, Laura Balzano, and John Lui.
\newblock Online robust subspace tracking from partial information.
\newblock {\em arXiv preprint arXiv:1109.3827}, 2011.

\bibitem{keshavan2010matrix}
Raghunandan~H Keshavan, Andrea Montanari, and Sewoong Oh.
\newblock Matrix completion from noisy entries.
\newblock {\em The Journal of Machine Learning Research}, 99:2057--2078, 2010.

\bibitem{kumar2012sampling}
Sanjiv Kumar, Mehryar Mohri, and Ameet Talwalkar.
\newblock Sampling methods for the nystr{\"o}m method.
\newblock {\em The Journal of Machine Learning Research}, 98888:981--1006,
  2012.

\bibitem{laurent2000adaptive}
B{\'e}atrice Laurent and Pascal Massart.
\newblock Adaptive estimation of a quadratic functional by model selection.
\newblock {\em The annals of Statistics}, 28(5):1302--1338, 2000.

\bibitem{negahban2012restricted}
Sahand Negahban and Martin~J Wainwright.
\newblock Restricted strong convexity and weighted matrix completion: Optimal
  bounds with noise.
\newblock {\em The Journal of Machine Learning Research}, 2012.

\bibitem{recht2011simpler}
Benjamin Recht.
\newblock A simpler approach to matrix completion.
\newblock {\em The Journal of Machine Learning Research}, 7777777:3413--3430,
  2011.

\bibitem{tomioka2010estimation}
Ryota Tomioka, Kohei Hayashi, and Hisashi Kashima.
\newblock Estimation of low-rank tensors via convex optimization.
\newblock {\em arXiv preprint arXiv:1010.0789}, 2010.

\bibitem{tomioka2011statistical}
Ryota Tomioka, Taiji Suzuki, Kohei Hayashi, and Hisashi Kashima.
\newblock Statistical performance of convex tensor decomposition.
\newblock In {\em Advances in Neural Information Processing Systems}, pages
  972--980, 2011.

\bibitem{vershynin2010introduction}
Roman Vershynin.
\newblock Introduction to the non-asymptotic analysis of random matrices.
\newblock {\em arXiv preprint arXiv:1011.3027}, 2010.

\end{thebibliography}
\bibliographystyle{plain}

\appendix
\section{Proof of Corollary~\ref{cor:mc}}
Corollary~\ref{cor:mc} is considerably simpler to prove than Theorem~\ref{thm:tc}, so we prove the former in its entirety before proceeding to the latter. 
To simplify the presentation, a number of technical lemmas regarding incoherence and concentration of measure are deferred to sections~\ref{sec:inc_props} and~\ref{sec:noise_props}, respectively.

The proof begins by ensuring that for every column $c_j$, if $c_j \notin \tU$ then $||(I - \mathcal{P}_{\tU_\Omega}) c_{j \Omega}||^2 > 0$ with high probability. This property is established in the following lemma:
\begin{lemma}
Suppose that $\tU \subset U$ and a column $c_j \in U$ but $c_j \notin \tU$. 
If $m \ge 36 r^{3/2}\mu_0 \log(2r/\delta)$ then with probability $\ge 1-4\delta$, $||(I - \mathcal{P}_{\tU_\Omega})c_{j \Omega}||^2 > 0$. 
If $c_j \in \tU$ then with probability $1$, $||(I - \mathcal{P}_{\tU_\Omega})c_{j \Omega}||^2= 0$.
\label{lem:our_lemma}
\end{lemma}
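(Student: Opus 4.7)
The second (trivial) half is immediate: if $c_j\in\tU$, then $c_j = \tU\alpha$ for some coefficient vector $\alpha$, hence $c_{j\Omega} = \tU_{\Omega}\alpha$ lies exactly in the column span of $\tU_{\Omega}$ and the orthogonal residual is zero deterministically. So the work is all in showing strict positivity when $c_j\notin\tU$.

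The plan is to reduce the claim to a direct application of Theorem~\ref{thm:laura_new}. Decompose $c_j = x+v$ with $x = \mathcal{P}_{\tU}c_j\in\tU$ and $v = (I-\mathcal{P}_{\tU})c_j\in\tU^{\perp}$. Since $c_j\in U$ and $x\in\tU\subset U$, we have $v\in U$ as well, and by hypothesis $v\neq 0$. To control the incoherence quantities appearing in Theorem~\ref{thm:laura_new}, I would prove (or invoke from the deferred Section~\ref{sec:inc_props}) two monotonicity-type facts: (i) if $\tU\subseteq U$ and $d = \dim(\tU)$, then $d\,\mu(\tU)\le r\,\mu(U)\le r\mu_0$, which follows from $\|\mathcal{P}_{\tU}e_i\|\le\|\mathcal{P}_U e_i\|$; and (ii) for any nonzero $v\in U$, $\mu(v)\le r\,\mu(U)\le r\mu_0$, which follows from Cauchy--Schwarz after writing $v(i) = \langle \alpha, U^T e_i\rangle$ for the coefficient vector $\alpha$ of $v$ in an orthonormal basis of $U$.

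Now apply Theorem~\ref{thm:laura_new} to $y = c_j$ with subspace $\tU$ and decomposition $y = x+v$. This yields, with probability at least $1-4\delta$,
\begin{equation*}
\|c_{j\Omega}-\mathcal{P}_{\tU_{\Omega}}c_{j\Omega}\|_2^2 \;\ge\; \frac{m(1-\alpha) - d\mu(\tU)\,\beta/(1-\gamma)}{n}\,\|v\|_2^2,
\end{equation*}
so it suffices to show the numerator is strictly positive under the sampling budget $m\ge 36\,r^{3/2}\mu_0\log(2r/\delta)$. Plugging in $\mu(v)\le r\mu_0$ gives $\mu(v)/m \lesssim 1/(r^{1/2}\log(2r/\delta))$, so $\alpha = o(1)$ and in particular $1-\alpha\ge 1/2$. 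Similarly, $d\mu(\tU)\le r\mu_0$ gives $\gamma^2 \lesssim 1/r^{1/2}$, so $1-\gamma\ge 1/2$. The only slightly delicate term is $\beta = 6\log(d/\delta)+\tfrac{4}{3}\tfrac{d\mu(v)}{m}\log^2(d/\delta)$: here $d\mu(v)\le r^2\mu_0$, so $d\mu(v)/m \le r^{1/2}/(36\log(2r/\delta))$ and therefore $\beta = O(\sqrt{r}\log(r/\delta))$. Consequently $d\mu(\tU)\beta/(1-\gamma) = O(r^{3/2}\mu_0\log(r/\delta))$, and choosing the constant $36$ in the lower bound on $m$ makes the numerator strictly positive as required.

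The main obstacle is purely bookkeeping: one must verify that the constant $36$ (and the $\log(2r/\delta)$ factor) are large enough to dominate the worst case of the $\beta$ term, where the $\sqrt{r}\log(r/\delta)$ contribution from $\tfrac{4}{3}\tfrac{d\mu(v)}{m}\log^2(d/\delta)$ is the binding one. No other nontrivial step is needed: once the lower bound in Theorem~\ref{thm:laura_new} exceeds $0$ on the event of probability $1-4\delta$, we conclude $\|(I-\mathcal{P}_{\tU_{\Omega}})c_{j\Omega}\|^2>0$ strictly, as claimed.
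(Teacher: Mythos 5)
Your proposal is correct and follows essentially the same route as the paper: decompose $c_j = x+v$ with $v\in\tU^{\perp}\cap U$, invoke Theorem~\ref{thm:laura_new}, bound $\mu(v)\le r\mu_0$ and $d\mu(\tU)\le r\mu_0$ using $\tU\subset U$, and then check that with $m\ge 36\,r^{3/2}\mu_0\log(2r/\delta)$ the $\beta$ term (the binding one, of order $\sqrt{r}\log(r/\delta)$) cannot drive the lower bound in Equation~\ref{eq:detection_lwr_bd_new} below zero. The constant bookkeeping you flag is exactly what the paper's proof carries out explicitly.
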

\begin{proof}[Proof of Lemma~\ref{lem:our_lemma}]
Decompose $c_j = x+v$ where $x \in \tU$ and $v \in \tU^\perp$.
%% Noting that $x_{\Omega} - P_{\tU_{\Omega}}x_{\Omega} = 0$ so that $||c_{j\Omega} - \mathcal{P}_{\tU_{\Omega}}c_{j\Omega}||_2^2 = ||v - \mathcal{P}_{\tU_{\Omega}}v||_2^2$ and that $d \mu(\tU) \le r \mu_0$, 
We can immediately apply Theorem~\ref{thm:laura_new} and are left to verify that the left hand side of Equation~\ref{eq:detection_lwr_bd_new} is strictly positive.
Since $c_j \notin \tU$ we know that $||v||_2^2 > 0$.
Then:
\begin{eqnarray*}
\alpha & = & \sqrt{\frac{2 \mu(v)}{m}\log(1/\delta)} + \frac{2\mu(v)}{3m}\log(1/\delta) \le \sqrt{\frac{2 r \mu_0}{m} \log(1/\delta)} + \frac{2r \mu_0}{3m}\log(1/\delta) < 1/2\\
\end{eqnarray*}
When $m \ge 32 r \mu_0 \log(1/\delta)$.
Here we used that $\mu(v) \le r \mu(U)$ since $v \in \textrm{span}(U)$.
For $\gamma$:
\begin{eqnarray*}
\gamma & = & \sqrt{\frac{8 d \mu(\tU)}{3m} \log\left(\frac{2d}{\delta}\right)}
\le \sqrt{\frac{8 r\mu_0}{3m} \log\left(\frac{2r}{\delta}\right)} \le \frac{1}{3}
\end{eqnarray*}
Whenever $m \ge 24 r \mu_0 \log(2r/\delta)$.
Finally, with the bounds on $\alpha$ and $\gamma$, the expression in Equation~\ref{eq:detection_lwr_bd_new} is strictly positive when $3r \mu_0\beta \le m$ since $d\mu(\tU) \le r \mu_0$.
Plugging in the definition of $\beta$ we require:
\begin{eqnarray*}
6 \log(r/\delta) + \frac{4}{3}\frac{r^2 \mu_0}{m} \log^2(r/\delta) \le \frac{m}{3r \mu_0}
\end{eqnarray*}
Which certainly holds when $m \ge 36 r^{3/2}\mu_0 \log(r/\delta)$, concluding the proof.
%% All requirements on $m$ are satisfied when $m \ge 36 r^{3/2} \mu_0 \log(2r/\delta)$ and this is the requirement in the Lemma.
%% \begin{eqnarray*}
%% \alpha &=& \sqrt{\frac{2\mu(c_i)^2}{m} \log(1/\delta)} \le \sqrt{\frac{2
%%       \mu_0^2r^2}{m} \log(1/\delta)}\\
%% &\le & \sqrt{\frac{2 \mu_0^2 r^2}{m} \log(1/\delta)} < 1/2
%% \end{eqnarray*}
%% Here we first used $\mu(c_i) \le r \mu(U)$ since $c \in \textrm{span}(U)$.
%% Finally, plugging in the definition of $m$ we arrive at $\alpha < 1/2$. 
%% For $\gamma$ we have by Lemma~\ref{lem:incoherence_properties}(a):
%% \begin{eqnarray*}
%% \gamma & = & \sqrt{\frac{8 d \mu(\tU)}{3m} \log\left(\frac{2d}{\delta}\right)}
%% \le \sqrt{\frac{8 r\mu_0}{3m} \log\left(\frac{2d}{\delta}\right)} \le \frac{1}{3}
%% \end{eqnarray*}
%% and for $(1+\beta^2)$ (as long as $\delta < 1/e$):
%% \begin{eqnarray*}
%% (1+\beta)^2 &\le& (1+2r\mu_0\log(1/\delta) + 2 \sqrt{2 r \mu_0 \log(1/\delta)})\\
%% & \le & 6r\mu_0 \log(1/\delta)
%% \end{eqnarray*}
%% so that:
%% \begin{eqnarray*}
%% d\mu(\tU) \frac{(1+\beta)^2}{1-\gamma} \le \frac{3}{2}r \mu (6 r \mu
%% \log(1/\delta)) \le m/2
%% \end{eqnarray*}
%% which completes the proof.
\end{proof}

It is easy to see that if $c_i \in \tU$ then $||(I - \mathcal{P}_{\tU_{\Omega}})c_{i \Omega}||^2 = 0$ deterministically and our algorithm does not further sample these columns. 
We must verify that these columns can be recovered exactly, and this amounts to checking that $\tU_{\Omega}^T\tU_{\Omega}$ is invertible.
Fortunately, this was established as a lemma in \cite{balzano2010high}, and in fact, the failure probability is subsumed by the probability in Theorem~\ref{thm:laura_new}.
Now we argue for correctness: there can be at most $r$ columns for which $||(I - \mathcal{P}_{\tU_{\Omega}})c_{\Omega i}||^2 > 0$ since $\textrm{rank}(M) \le r$. 
For each of these columns, from Lemma~\ref{lem:our_lemma}, we know that with probability $1-4\delta$ $||(I - \mathcal{P}_{\tU_{\Omega}})c_{\Omega i}||^2 > 0$.
By a union bound, with probability $\ge 1 - 4r\delta$ all of these tests succeed, so the subspace $\tU$ at the end of the algorithm is exactly the column space of $M$, namely $U$.
All of these columns are recovered exactly, since we completely sample them.

%% There are at most $r+1$ index sets used throughout the algorithm, as we only update $\Omega$ when we add a column to $\tU$. 
The probability that the matrices $\tU_{\Omega}^T\tU_{\Omega}$ are invertible is subsumed by the success probability of Theorem~\ref{thm:laura_new}, except for the last matrix. 
In other words, the success of the projection test depends on the invertibility of these matrices, so the fact that we recovered the column space $U$ implies that these matrices were invertible.
The last matrix is invertible except with probability $\delta$ by Lemma~\ref{lem:invertibility}. 

If these matrices are invertible, then since $c_i \in \tU$, we can write $c_i = \tU\alpha_i$ and we have:
\begin{eqnarray*}
\hat{c}_i = \tU(\tU_{\Omega}^T\tU_{\Omega})^{-1}\tU_{\Omega}^T \tU_\Omega\alpha_i
= \tU\alpha_i = c_i
\end{eqnarray*}
So these columns are all recovered exactly.
This step only adds a factor of $\delta$ to the failure probability, leading to the final term in the failure probability of the theorem.

For the running time, per column, the dominating computational costs involve the projection $\mathcal{P}_{\tU_\Omega}$ and the reconstruction procedure. 
The projection involves several matrix multiplications and the inversion of a $r \times r$ matrix, which need not be recomputed on every iteration. 
Ignoring the matrix inversion, this procedure takes at most $O(n_1 r)$ per column for a total running time of $O(n_1n_2r)$. 
At most $r$ times, we must recompute $(U_{\Omega}^TU_{\Omega})^{-1}$, which takes $O(r^2m)$, contributing a factor of $O(r^3m)$ to the total running time. 
Finally, we run the Gram-Schmidt process once over the course of the algorithm, which takes $O(n_1r^2)$ time.
This last factor is dominated by $n_1n_2r$.

\section{Proof of Theorem~\ref{thm:tc}}
We first focus on the recovery of the tensor in total, expressing this in terms of failure probabilities in the recursion.
Then we inductively bound the failure probability of the entire algorithm. 
Finally, we compute the total number of observations. 
For now, define $\tau_{T}$ to be the failure probability of recovering a $T$-order tensor. 

By Lemma~\ref{lem:incoherence_properties}, the subspace spanned by the mode-$T$ tensors has incoherence at most $r^{T-2}\mu_0^{T-1}$ and rank at most $r$ and each slice has incoherence at most $r^{T-1}\mu_0^{T-1}$.
By the same argument as Lemma~\ref{lem:our_lemma}, we see that with $m \ge 36 r^{T-1/2}\mu_0^{T-1}\log(2r/\delta)$ the projection test succeeds in identifying informative subtensors (those not in our current basis) with probability $\ge 1- 4\delta$. 
With a union bound over these $r$ subtensors, the failure probability becomes $\le 4r\delta + \delta$, not counting the probability that we fail in recovering these subtensors, which is $r\tau_{T-1}$.
%% Call this latter failure probability $r\tau_{T-1}$.

For each order $T-1$ tensor that we have to recover, the subspace of interest has incoherence at most $r^{T-3}\mu^{T-2}$ and with probability $\ge 1- 4r\delta$ we correctly identify each informative subtensor as long as $m \ge 36 r^{T-3/2}\mu^{T-2}\log(2r/\delta)$. 
Again the failure probability is $\le 4r\delta + \delta + r\tau_{T-2}$.

To compute the total failure probability we proceed inductively. 
$\tau_1 = 0$ since we completely observe any one-mode tensor (vector). 
The recurrence relation is:
\begin{eqnarray}
\tau_{t} = 4r\delta + \delta + r\tau_{t-1}
\end{eqnarray}
which solves to:
\begin{eqnarray}
\tau_{T} = \delta + 4r^{T-1}\delta + \sum_{t=1}^{T-2}5r^t\delta \le 5\delta
Tr^{T}
\end{eqnarray}

We also compute the sample complexity inductively.
Let $m_T$ denote the number of samples needed to complete a $T$-order tensor.
Then $m_1 = n_1$ and:
\begin{eqnarray}
m_t = r m_{t-1} + 36 n_t r^{t-1/2} \mu_0^{t-1} \log(2r/\delta)
\end{eqnarray}
So that $m_T$ is upper bounded as:
\begin{eqnarray*}
m_T &=& r^{T-1}n_1 + \sum_{t=2}^{T} r^{T-t} 36 n_t r^{t-1/2}\mu_0^{t-1}\log(2r/\delta) \le 36(\sum_{t=1}^T n_t)r^{T-1/2}\mu_0^{T-1}\log(2r/\delta)
\end{eqnarray*}

The running time is computed in a similar way to the matrix case.
Assume that the running time to complete an order $t$ tensor is:
\begin{eqnarray*}
O(r(\prod_{i=1}^tn_i) + \sum_{i=2}^tm_ir^{3+t-i})
\end{eqnarray*}
Note that this is exactly the running time of our Algorithm in the matrix case.

Per order $T-1$ subtensor, the projection and reconstructions take $O(r \prod_{t=1}^{T-1} n_t)$, which in total contributes a factor of $O(r \prod_{t=1}^T n_t)$.
At most $r$ times, we must complete an order $T-1$ subtensor, and invert the matrix $U_{\Omega}^TU_{\Omega}$. 
These two together take in total:
\begin{eqnarray*}
O\left(r\left[r(\prod_{t=1}^{T-1}n_t) + \sum_{t=2}^{T-1}m_tr^{3+T-1-t}\right] + r^3m_{T}\right)
\end{eqnarray*}
Finally the cost of the Gram-schmidt process is $r^2 \prod_{t=1}^{T-1}n_t$ which is dominated by the other costs. 
In total the running time is:
\begin{eqnarray*}
O\left(r \left(\prod_{t=1}^T n_t\right) + r^2\prod_{t=1}^{T-1}n_t + \sum_{t=2}^Tm_tr^{3+T-t}\right) = O\left(r \left(\prod_{t=1}^T n_t\right)+\sum_{t=2}^Tm_tr^{3+T-t}\right)\\
\end{eqnarray*}
since $r \le n_T$. 
Now plugging in that $m_i = \tilde{O}(r^{2(i-1)})$, the terms in the second sum are each $\tilde{O}(r^{T+t+1})$ meaning that the sum is $\tilde{O}(r^{2T+1})$.
This gives the computational result.

\section{Proof of Theorem~\ref{cor:noisy_mc}}
We will first prove a more general result and obtain Theorem~\ref{cor:noisy_mc} as a simple consequence. 

\begin{theorem}
Let $M = A +R$ where $A= U\Sigma V^T$ and $R_{ij} \sim \mathcal{N}(0, \sigma^2)$.
Let $M_r$ denote the best rank $r$ approximation to $M$.
Assume that $A$ is rank $r$ and $\mu(U) \le \mu_0$.
For every $\delta, \epsilon \in (0,1)$ sample a set of size $s = \frac{5Lr}{2\delta \epsilon}$ at each of the $L$ rounds of the algorithm and compute $\hat{M}$ as prescribed.
Then with probability $\ge 1-9\delta$:
\[
||M - \hat{M}||_F^2\le 5/4 \left(\frac{1}{(1-\epsilon)} ||M-M_r||_F^2 + \epsilon^L||M||_F^2\right)
\]
and the algorithm has expected sample complexity:
\[
\Omega\left(\frac{L^2r}{\delta \epsilon}\left( n_1 + \mu_0 n_2 \sqrt{r} \log^2 \left(\frac{n_1 n_2 Lr}{\delta \epsilon}\right)\right)\right)
\]
\label{thm:noisy}
\end{theorem}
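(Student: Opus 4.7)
The plan is to port the analysis of adaptive column subset selection by Deshpande \emph{et al.}~\cite{deshpande2006matrix} to the missing-data, noisy setting, using Theorem~\ref{thm:laura_new} as the bridge between full-data projections and their partially-observed counterparts. The exact-data CSS algorithm samples columns with probability proportional to the squared residual norm $||\mathcal{P}_{U_l^\perp} c_i||_2^2$ and, after $L$ rounds of drawing $s$ columns each, satisfies in expectation a multiplicative-plus-additive bound of exactly the form appearing in the theorem. The challenge is to show that (i) these sampling weights can be estimated accurately from only a few entries of each column, and (ii) once the subspace $U_L$ is assembled, column-wise least squares from missing data produces a reconstruction whose error is comparable to the full-data projection error $||M - \mathcal{P}_{U_L} M||_F^2$.

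First I would instantiate Theorem~\ref{thm:laura_new} with the decomposition $c_i = \mathcal{P}_{U_l} c_i + \mathcal{P}_{U_l^\perp} c_i$ to show that the observable quantity $||c_{i\Omega_l} - \mathcal{P}_{U_{l\Omega_l}} c_{i\Omega_l}||_2^2$ is, up to a constant, equal to $(|\Omega_l|/n_1)\, ||\mathcal{P}_{U_l^\perp} c_i||_2^2$ whenever the per-column subsample has size $m = \Omega(\mu_0 r^{3/2} \log^2(n_1 n_2 L r/(\delta\epsilon)))$. Union-bounding over all $n_2$ columns and all $L$ rounds gives a uniform constant-factor estimate of every sampling weight. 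Feeding these approximate weights into the Deshpande \emph{et al.} recurrence yields the same expected error decay up to a constant factor (which gets absorbed into the $5/4$). A per-round Markov step, with $s$ chosen proportional to $Lr/(\delta\epsilon)$ so that each of the $L$ rounds contracts the residual as required, converts the expectation bound into the claimed high-probability statement, contributing roughly $L\delta$ to the failure probability.

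Next I would bound the reconstruction error. Applying the upper inequality of Theorem~\ref{thm:laura_new} to each column $c_i$ with subspace $U_L$ and final observation set $\Omega$ shows that $||c_{i\Omega} - \mathcal{P}_{U_{L\Omega}} c_{i\Omega}||_2^2 \le (1+\alpha)(|\Omega|/n_1)\, ||\mathcal{P}_{U_L^\perp} c_i||_2^2$. A short linear-algebra calculation starting from the closed form $\hat{c}_i = U_L(U_{L\Omega}^T U_{L\Omega})^{-1} U_{L\Omega}^T c_{i\Omega}$, combined with a lower bound on the smallest eigenvalue of $U_{L\Omega}^T U_{L\Omega}$ (again supplied by Theorem~\ref{thm:laura_new}), yields $||c_i - \hat{c}_i||_2^2 \le C\, ||\mathcal{P}_{U_L^\perp} c_i||_2^2$ for an absolute constant $C$. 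Summing over $i$ and chaining with the CSS bound on $||M - \mathcal{P}_{U_L} M||_F^2$ produces the stated inequality, with the various failure probabilities accumulating to at most $9\delta$.

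The sample complexity is a bookkeeping step: each of the $L$ rounds uses $m n_2$ entries for weight estimation and $s n_1$ entries for fully observing the selected columns, for a total of $L(m n_2 + s n_1) = O((L^2 r / (\delta \epsilon))(n_1 + \mu_0 n_2 \sqrt{r}\, \log^2(n_1 n_2 L r / (\delta\epsilon))))$. The main technical obstacle is bounding the coherence of the evolving subspace $U_l$: since $U_l$ is spanned by noisy columns of $M = A + R$ rather than columns of $A$ itself, Theorem~\ref{thm:laura_new} cannot be applied to it using the hypothesis $\mu(U)\le\mu_0$ alone. Resolving this requires a separate high-probability argument exploiting the Gaussianity of $R$ to control $\max_j ||\mathcal{P}_{U_l} e_j||^2$, either by orthogonalizing against $U$ first and handling the orthogonal component via concentration for Gaussian column spaces, or by invoking a net argument over the (random, adaptively chosen) set of selected columns. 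This coherence control is what drives the polylogarithmic factor in $m$ and ties together the weight-estimation, CSS, and reconstruction steps.
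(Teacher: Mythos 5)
Your proposal follows essentially the same route as the paper: a perturbed version of the Deshpande \emph{et al.} CSS analysis that tolerates constant-factor errors in the sampling weights, Theorem~\ref{thm:laura_new} (union-bounded over all columns and rounds) both to certify those weight estimates and to bound the missing-data least-squares reconstruction by a constant times $||\mathcal{P}_{U_L^\perp}c_i||_2^2$, plus a Gaussianity-based argument (orthogonalize against $U$, control the random part) for the coherence of the evolving subspaces and projected residuals, which is exactly what the paper's incoherence lemmas do. The only discrepancy is minor bookkeeping: the per-column sample size for the projection tests must scale with $\dim(U_l)\le sL = O(L^2 r/(\delta\epsilon))$ times coherence, not with $\mu_0 r^{3/2}\log^2(\cdot)$ alone, a factor your final sample-complexity tally already reflects.
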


The proof of this result involves some modifications to the analysis in~\cite{deshpande2006matrix}.
We will follow their proof, allowing for some error in the sampling probabilities, and arrive at a recovery guarantee.
Then we will show how these sampling probabilities can be well-approximated from limited observations. 

The first Lemma analyzes a single round of the algorithm, while the second gives an induction argument to chain the first across all of the rounds.
These are extensions of Theorems 2.1 and Theorems 1.2, respectively, from~\cite{deshpande2006matrix}.

\begin{lemma}
Let $M = U\Sigma V^T \in \mathbb{R}^{n_1 \times n_2}$ and let $\tU$ be a subspace of $\mathbb{R}^{n_1}$.
Let $E = M - \mathcal{P}_{\tU}M$ and let $S$ be a random sample of $s$ columns of $M$, sampled according to the distribution $\hat{p}_i$ with:
\[
\frac{1-\alpha_1}{1+\alpha_2} \frac{||E_i||^2}{||E||_F^2} \le \hat{p}_i \le \frac{1+\alpha_2}{1-\alpha_1}\frac{||E_i||^2}{||E||_F^2}
\]
Then with probability $\ge 1-\delta$ we have:
\[
||M - \mathcal{P}_{\tU \cup \textrm{span}(S), r} M||_F^2 \le \frac{r}{s\delta} \frac{1+\alpha_2}{1-\alpha_1} ||E||_F^2 + ||M - M_r||_F^2
\]
Where $\mathcal{P}_{H, r}$ denotes a projection on to the best $r$-dimensional subspace of $H$ and $M_r$ is the best rank $r$ approximation to $M$.
\label{lem:css_inter}
\end{lemma}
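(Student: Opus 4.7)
The plan is to adapt the proof of Theorem 2.1 of Deshpande et al.~\cite{deshpande2006matrix}, which establishes the analogous statement when the sampling distribution is exactly $p_i = \|E_i\|^2/\|E\|_F^2$. Their proof produces, from the random set $S$, an explicit rank-$r$ matrix $Y$ whose column space lies in $\tU \cup \textrm{span}(S)$ and shows that $\mathbb{E}\|M - Y\|_F^2 \le \|M - M_r\|_F^2 + (r/s)\|E\|_F^2$. Since $\mathcal{P}_{\tU \cup \textrm{span}(S), r} M$ is by definition the best such rank-$r$ approximation, any bound for $Y$ transfers to it. My strategy is to reuse their construction of $Y$ verbatim, recompute the expectation under the approximate sampling distribution $\hat{p}_i$, and then apply Markov's inequality to obtain the claimed high-probability bound with the $1/\delta$ blow-up.

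The crucial observation is that Deshpande's construction of $Y$ relies on importance-weighted estimators of quadratic forms in $E$, for example $\frac{1}{s}\sum_{j \in S} \hat{p}_j^{-1} E_j E_j^{\top}$ for $EE^{\top}$. The first moment of such an estimator equals the true quadratic form regardless of which strictly positive distribution $\hat{p}$ is used to draw $S$, since $\hat{p}_j \cdot \hat{p}_j^{-1} \equiv 1$; hence the unbiasedness underlying Deshpande's construction is automatic. The distribution enters quantitatively only through second-moment bounds, which feature terms of the form $\|E_j\|^4/\hat{p}_j$. Using the hypothesis $\hat{p}_j \ge \frac{1-\alpha_1}{1+\alpha_2}\|E_j\|^2/\|E\|_F^2$, each such term is bounded by $\frac{1+\alpha_2}{1-\alpha_1}\|E_j\|^2\|E\|_F^2$, so the entire sampling-error contribution acquires precisely the multiplicative factor $\frac{1+\alpha_2}{1-\alpha_1}$; the deterministic residual $\|M - M_r\|_F^2$ is untouched.

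I expect the main obstacle to be careful bookkeeping rather than any new idea: verifying that the distortion $\frac{1+\alpha_2}{1-\alpha_1}$ attaches only to the sampling-error term $(r/s)\|E\|_F^2$ and does not creep into the residual $\|M - M_r\|_F^2$. This requires checking that any cross-terms between the "projection into $\tU$" and "sampling from $E$" portions of Deshpande's construction decouple cleanly under perturbed probabilities, and that Deshpande's inequality $\|E - E_r\|_F^2 \le \|M - M_r\|_F^2$ — a contraction argument since $E = (I - \mathcal{P}_{\tU}) M$ — is unaffected. Once the expectation bound $\mathbb{E}\|M - \mathcal{P}_{\tU \cup \textrm{span}(S), r} M\|_F^2 \le \|M - M_r\|_F^2 + \frac{r}{s}\frac{1+\alpha_2}{1-\alpha_1}\|E\|_F^2$ is established, Markov's inequality applied to the nonnegative quantity $\|M - \mathcal{P}_{\tU \cup \textrm{span}(S), r} M\|_F^2 - \|M - M_r\|_F^2$ delivers the theorem's $1-\delta$ statement.
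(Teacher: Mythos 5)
Your proposal is correct and follows essentially the same route as the paper: reuse Deshpande et al.'s construction with importance-weighted unbiased estimators (unbiasedness holds for any positive $\hat{p}$), observe that the lower bound on $\hat{p}_i$ inflates only the second-moment/sampling-error term by the factor $\frac{1+\alpha_2}{1-\alpha_1}$ while leaving $||M-M_r||_F^2$ untouched, and finish with Markov's inequality to trade the expectation bound for a $1/\delta$ factor. The paper applies Markov directly to the variance term of its constructed matrix $F$ rather than to the centered quantity you name, but this is an immaterial bookkeeping difference.
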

\begin{proof}
The proof closely mirrors that of Theorem 2.1 in~\cite{deshpande2006matrix}. 
The main difference is that we are using an estimate of the correct distribution, and this will result in some additional error. 

For completeness we provide the proof here.
We number the left (respectively right) singular vectors of $M$ as $u^{(j)}$ ($v^{(j)}$) and use subscripts to denote columns.
We will construct $r$ vectors $w^{(1)}, \ldots, w^{(r)} \in \tU \cup \textrm{span}(S)$ and use them to upper bound the projection.
In particular we have:
\[
||M - \mathcal{P}_{\tU \cup \textrm{span}(S),r}M||_F^2 \le ||M - \mathcal{P}_{W}M||_F^2
\]
so we can exclusively work with $W$. 

For each $i=1, \ldots, n_2$ and for each $l=1, \ldots s$ define:
\[
X_l^{(j)} = \frac{1}{\hat{p}_i}E_iv^{(j)}_i \textrm{ with probability } \hat{p}_i
\]
That is the $i$th column of the residual $E$, scaled by the $i$th entry of the $j$th right singular vector, and the sampling probability. 
Defining $X^{(j)} = \frac{1}{s}\sum_{l=1}^sX_l^{(j)}$, we see that:
\[
\mathbb{E}[X^{(j)}] = \mathbb{E}[X_l^{(j)}] = \sum_{i=1}^{n_2} \frac{\hat{p}_i}{\hat{p}_i} E_iv^{(j)}_i = Ev^{(j)}
\]
Defining $w^{(j)} = \mathcal{P}_{\tU}(M)v^{(j)} + X^{(j)}$ and using the definition of $E$, it is easy to verify that $\mathbb{E}[w^{(j)}] = \sigma_j u^{(j)}$. 
It is also easy to see that $w^{(j)} - \sigma_ju^{(j)}= X^{(j)} - Ev^{(j)}$.

We will now proceed to bound the second central moment of $w^{(j)}$. 
\[
\mathbb{E}[||w^{(j)} - \sigma_ju^{(j)}||^2] = \mathbb{E}[||X^{(j)}||^2] - ||Ev^{(j)}||^2
\]
The first term can be expanded as:
\[
\mathbb{E}[||X^{(j)}||^2] %% = \frac{1}{s^2}\sum_{l=1}^s\mathbb{E}[||X_l^{(j)}||^2] + \frac{2}{s^2}\sum_{1 \le l_1 < l_2 \le s} \mathbb{E}[X_{l_1}^{(j)T}X_{l_2}^{(j)}] 
= \frac{1}{s^2}\sum_{l=1}^s\mathbb{E}[||X_l^{(j)}||^2] + \frac{s-1}{s} ||Ev^{(j)}||^2
\]
So that the second central moment is:
\[
\mathbb{E}[||w^{(j)} - \sigma_ju^{(j)}||^2] = \frac{1}{s^2}\sum_{l=1}^s\mathbb{E}[||X_l^{(j)}||^2] - \frac{1}{s}||Ev^{(j)}||^2
\]
Now we use the probabilities $\hat{p}_i$ to evaluate each term in the summation:
\[
\mathbb{E}[||X_l^{(j)}||^2] = \sum_{i=1}^{n_2}\hat{p}_i \frac{||E^{(i)}v_i^{(j)}||^2}{\hat{p}_i^2} \le \sum_{i=1}^{n_2} \frac{(1+\alpha_2) v_i^{(j)2}||E||_F^2}{1-\alpha_1} = \frac{1+\alpha_2}{1-\alpha_1} ||E||_F^2
\]
This gives us an upper bound on the second central moment:
\[
\mathbb{E}[||w^{(j)} - \sigma_ju^{(j)}||^2] \le \frac{1}{s}\frac{1+\alpha_2}{1-\alpha_1}||E||_F^2
\]
To complete the proof, let $y^{(j)} = w^{(j)}/\sigma_j$ and define the matrix $F = (\sum_{j=1}^ky^{(j)}u^{(j)T})M$.
Since $y^{(j)} \in W$, the column space of $F$ is contained in $W$ so $||M - \mathcal{P}_W(M)||_F^2 \le ||M - F||_F^2$. 
\[
||M-F||_F^2 = \sum_{i=1}^r ||(M-F)v^{(i)}||^2 = \sum_{i=k+1}^{r}\sigma_{i}^2 + \sum_{i=1}^k||\sigma_iu^{(i)} - w^{(i)}||^2
\]
We now use Markov's inequality on the second term.
Specifically, with probability $\ge 1-\delta$ we have:
\begin{eqnarray*}
||M-F||_F^2 &\le& ||M - M_k||_F^2 + \frac{1}{\delta}\mathbb{E}[\sum_{i=1}^k||\sigma_iu^{(i)} - w^{(i)}||^2]
\le ||M - M_r||_F^2 + \frac{r}{\delta s}\frac{1+\alpha_2}{1-\alpha_1}||E||_F^2
\end{eqnarray*}
\end{proof}

\begin{lemma}
Suppose that $(1+\alpha_2)/(1-\alpha_1) \le c$ for some constant $c$ and for each of $L$ rounds of sampling.
Let $S_1, \ldots, S_L$ denote the sets of columns selected at each round and set $s = \frac{Lcr}{\delta \epsilon}$.
Then with probability $\ge 1-\delta$ we have:
\[
||M - \mathcal{P}_{\bigcup_{i=1}^L S_i, r}M||_F^2 \le \frac{1}{1-\epsilon}||M - M_r||_F^2 + \epsilon^L ||M||_F^2
\]
\label{lem:css_induct}
\end{lemma}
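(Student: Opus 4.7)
The natural approach is induction on the round index, tracking the sequence of residuals produced by repeatedly applying Lemma~\ref{lem:css_inter}. I would set $W_l = \textrm{span}(\bigcup_{i \le l} S_i)$ and define $F_l = ||M - \mathcal{P}_{W_l, r} M||_F^2$, with the convention $F_0 = ||M||_F^2$ (the empty subspace). The target estimate is then exactly $F_L \le \frac{1}{1-\epsilon}||M-M_r||_F^2 + \epsilon^L ||M||_F^2$.

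For the inductive step, apply Lemma~\ref{lem:css_inter} at round $l+1$ with $\tU = W_l$ and per-round failure probability $\delta/L$, using the hypothesis $(1+\alpha_2)/(1-\alpha_1) \le c$ to control the error in the approximate sampling distribution. This yields, with probability at least $1-\delta/L$,
\[
F_{l+1} \;\le\; \frac{rcL}{s\delta}\,||M - \mathcal{P}_{W_l}M||_F^2 + ||M - M_r||_F^2.
\]
Since projecting onto all of $W_l$ cannot leave a larger residual than projecting onto any $r$-dimensional subspace contained in it, $||M - \mathcal{P}_{W_l}M||_F^2 \le F_l$. Plugging in the prescribed sample size $s = Lcr/(\delta \epsilon)$ makes the leading coefficient exactly $\epsilon$, giving the clean one-step recursion
\[
F_{l+1} \;\le\; \epsilon F_l + ||M - M_r||_F^2.
\]

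Unrolling this recurrence from $F_0 = ||M||_F^2$ produces a geometric series in $\epsilon$, yielding $F_L \le \epsilon^L ||M||_F^2 + \bigl(\sum_{i=0}^{L-1}\epsilon^i\bigr)||M-M_r||_F^2$, which is bounded by the claimed expression. A union bound across the $L$ rounds charges each with probability $\delta/L$, keeping the total failure probability at $\delta$.

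The only real subtlety is the rank-$r$-versus-full-projection mismatch in Lemma~\ref{lem:css_inter}: its left-hand side uses a rank-$r$ constrained projection while the residual $E$ on its right-hand side uses the unconstrained projection. Fortunately the inequality $||M - \mathcal{P}_{W_l}M||_F^2 \le ||M - \mathcal{P}_{W_l, r}M||_F^2$ goes in exactly the direction needed to close the induction, since $F_l$ (a rank-$r$ quantity) upper-bounds the full-projection residual appearing on the RHS. Had one chosen to track $||M - \mathcal{P}_{W_l}M||_F^2$ as the inductive quantity, this bound would point the wrong way and the chain would break, so the definition of $F_l$ as the rank-$r$ residual is the one essential choice in the argument.
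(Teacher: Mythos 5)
Your proof is correct and follows essentially the same route as the paper: apply Lemma~\ref{lem:css_inter} once per round with failure probability $\delta/L$, note that the choice $s = Lcr/(\delta\epsilon)$ turns the leading coefficient into $\epsilon$, and close the induction (equivalently, unroll the recursion) to get the geometric series. Your explicit observation that $||M - \mathcal{P}_{W_l}M||_F^2 \le ||M - \mathcal{P}_{W_l,r}M||_F^2$ is exactly the step the paper leaves implicit when it substitutes the rank-$r$ residual for $||E||_F^2$, so it is a welcome clarification rather than a deviation.
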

\begin{proof}
The proof is by induction on the number of rounds $L$.
We will have each round of the algorithm fail with probability $\delta/L$ so that the total failure probability will be at most $\delta$. 
The base case follows from Lemma~\ref{lem:css_inter}.
At the $l$th round, the same lemma tells us:
\[
||M - \mathcal{P}_{\bigcup_{i=1}^l S_i, r}M||_F^2 \le ||M - M_r||_F^2 + \frac{lcr}{s\delta} ||E||_F^2
\]
Plugging in our choice of $s$ and the definition of $E$:
\[
||M - \mathcal{P}_{\bigcup_{i=1}^l S_i, r}M||_F^2 \le ||M - M_r||_F^2 + \epsilon ||M - \mathcal{P}_{\bigcup_{i=1}^{l-1}S_i,r}M||_F^2
\]
and applying the induction hypothesis we have:
\[
||M - \mathcal{P}_{\bigcup_{i=1}^L S_i, r}M||_F^2 \le ||M - M_r||_F^2 + \epsilon (\frac{1}{1-\epsilon}||M - M_r||_F^2 + \epsilon^{L-1}||M||_F^2)
\]
which gives us the desired result.
\end{proof}

To complete the proof, we just need to compute how many observations are necessary to ensure that $(1+\alpha_2)/(1-\alpha_1) \le c$.
We can do this by manipulating Theorem~\ref{thm:laura_new} and upper bounding the incoherences of the subspaces throughout the execution of the algorithm.

\begin{lemma}
We have:
\[
\frac{2}{5} \frac{||E_i||_2^2}{||E||_F^2} \le \hat{p}_i \le \frac{5}{2} \frac{||E_i||_2^2}{||E||_F^2}
\]
with probability $\ge 1-6\delta$ as long as the expected number of samples observed per column $m$ satisfies:
\[
m = \Omega\left(\frac{L^2 r^{3/2}\mu(U)}{\delta \epsilon} \log^2(n_1n_2Lr/\delta \epsilon)\right)
\]
\label{lem:noisy_laura_use}
\end{lemma}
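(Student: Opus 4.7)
The plan is to apply Theorem~\ref{thm:laura_new} once per column per round, so that the empirical projection residual concentrates tightly around its true squared norm, and then to normalize and translate that concentration into the claimed two-sided bound on $\hat p_i$.

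Concretely, at the current round the algorithm's candidate subspace $\tilde U$ lets us decompose each column as $c_i = \mathcal{P}_{\tilde U}c_i + E_i$ with $E_i \in \tilde U^\perp$, which is exactly the $y = x + v$ form required by Theorem~\ref{thm:laura_new}. Taking $y = c_i$, $U = \tilde U$, and $v = E_i$, the observable quantity $\|c_{i\Omega} - \mathcal{P}_{\tilde U_\Omega}c_{i\Omega}\|_2^2$ is sandwiched by the two envelopes of Equation~\ref{eq:detection_lwr_bd_new} around $(m/n)\|E_i\|_2^2$. I would choose $m$ large enough that the slack parameters $\alpha$, $\gamma$, and $d\mu(\tilde U)\beta/((1-\gamma)m)$ all fall below roughly $3/7$, which forces each numerator into the interval $(1\pm 3/7)(m/n)\|E_i\|_2^2$. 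Summing over $i$ propagates the same window onto $\|E\|_F^2$, and dividing yields $\hat p_i \in [2/5,\, 5/2]\cdot \|E_i\|_2^2/\|E\|_F^2$, as required.

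To match the explicit sample complexity in the statement I would bound the two incoherence parameters that drive Theorem~\ref{thm:laura_new}. The subspace $\tilde U$ is spanned by $O(Ls) = O(L^2 r/(\delta\epsilon))$ fully-sampled columns of $M = A + R$, so I would argue $\mu(\tilde U) = O(\mu(U))$ up to polylog slack by combining the assumed incoherence of the column space of $A$ with a standard Gaussian concentration statement for the noise part, both drawn from section~\ref{sec:noise_props}. A similar maximal-inequality argument gives $\mu(E_i) = O(r\mu(U)\cdot \textrm{polylog}(n_1 n_2))$. Substituting $d = O(L^2 r/(\delta\epsilon))$ into the condition $m \ge C \max\{\mu(v), d\mu(U), d\sqrt{\mu(U)\mu(v)}\}\log(d/\delta')$ from Theorem~\ref{thm:laura_new} leaves the $d\sqrt{\mu(U)\mu(v)}$ term dominant, producing $m = \Omega\!\left(\frac{L^2 r^{3/2}\mu(U)}{\delta\epsilon}\,\textrm{polylog}\!\left(\frac{n_1 n_2 L r}{\delta\epsilon}\right)\right)$ after absorbing the union bound over the $n_2$ columns and $L$ rounds into the polylog. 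The total failure budget of $6\delta$ splits cleanly as four failures from Theorem~\ref{thm:laura_new} plus two for the incoherence side-conditions.

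The hard part will be the bound on $\mu(E_i)$. In the noiseless analysis $E_i$ sits inside a subspace whose coherence is assumed bounded, but here it mixes a projection residual of an incoherent signal vector with Gaussian noise passed through the data-dependent projection $I - \mathcal{P}_{\tilde U}$, so a worst-case $\ell_\infty/\ell_2$ ratio is not immediate. I would handle this by conditioning on $\tilde U$ and using rotational invariance so that the noise component behaves like a fresh Gaussian vector in $\tilde U^\perp$, then applying a union-bounded Gaussian maximal inequality whose $\log$ cost merges into the polylog factor already present in the sample complexity.
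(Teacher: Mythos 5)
Your proposal follows essentially the same route as the paper's proof: apply Theorem~\ref{thm:laura_new} per column per round with a union bound, keep the slack terms $\alpha$, $\gamma$, and $d\mu(\tU)\beta/((1-\gamma)m)$ at constant level to trap the subsampled residual within constant factors of $(m/n_1)\|E_i\|_2^2$, and feed in incoherence bounds $\mu(\tU_l)=O(\mu(U)\,\mathrm{polylog})$ and $\mu(E_i)=O(r\mu(U)\,\mathrm{polylog})$ obtained by splitting each residual into its signal part in $U$ and a Gaussian part handled by maximal and $\chi^2$ concentration (the paper's Lemmas~\ref{lem:inc_subspace} and~\ref{lem:inc_column}), with the $d\sqrt{\mu(U)\mu(v)}$ term dominating to give the stated $m$ and the same $6\delta$ failure accounting. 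This matches the paper's argument in both structure and constants, so no changes are needed.
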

\begin{proof}
To establish the result, we will use the concentration results from Section~\ref{sec:noise_props} and the incoherence results form Section~\ref{sec:inc_props}. 
The goal will be to apply Theorem~\ref{thm:laura_new} with a union bound across all rounds and all columns, but we first need to bound the incoherences.

With a union bound, Lemma~\ref{lem:inc_column} shows that each column (once projected onto the orthogonal complement of one of the subspaces) has incoherence $O(r \mu(U) \log(n_1n_2L/\delta))$ with probability $\ge 1-\delta$. 
At the same time, Lemma~\ref{lem:inc_subspace} reveals that with probability $\ge 1-\delta$ all of the subspaces in the algorithm have incoherence at most $O(\mu(U) \log(n_1L/\delta))$.

We can now apply Theorem~\ref{thm:laura_new}. We will, as usual, take a union bound across all columns and all rounds, so each $\delta$ term in that lemma will be replaced with a $\delta/(n_1L)$.
Denote by $\tU_l$ the subspace projected onto during the $l$th round of the algorithm.
With $m$ as in the lemma, the condition that $m \ge 8/3 \textrm{dim}(\tU_l) \mu(\tU_l) \log\left(\frac{2rn_1L}{\delta}\right)$ is clearly satisfied, since $\textrm{dim}(\tU_l) \le \frac{L^2r}{\delta\epsilon}$ and $\mu(\tU_l) \le c \mu(U)\log(n_1L/\delta)$. 
We also have that:
\begin{eqnarray*}
\alpha &=& \sqrt{\frac{2 \mu(v)}{m} \log (\frac{n_1 L}{\delta})} + \frac{2}{3}\frac{\mu(v)}{m} \log(n_1 L/\delta)\\
&\le& c_1 \sqrt{\frac{r \mu(U) \log^2(n_1n_2L/\delta)}{m}} + c_2\frac{r\mu(U) \log^2(n_1n_2L/\delta)}{m} \le O(1)
\end{eqnarray*}
By boosting the size of $m$ by a constant, we can make $\alpha \le 1/4$. 
For $\gamma$ we have:
\[
\gamma = \sqrt{\frac{8\textrm{dim}(\tU_l)\mu(\tU_l)}{3m}\log(2\textrm{dim}(\tU_l)/\delta)} \le c \sqrt{\frac{L^2k}{\delta\epsilon} \frac{\mu(U)}{m} \log^2(\frac{n_1rL^3}{\delta^2\epsilon})} \le 1/3
\]
if we choose the constants correctly.
Finally we have:
\begin{eqnarray*}
\beta &=& 6 \log(n_1L\dim(\tU_l)/\delta) + \frac{4}{3}\frac{\dim(\tU_l) \mu(v)}{m} \log^2(n_1L\dim(\tU_l)/\delta)\\
& \le & \log(\frac{n_1r L^3}{\delta^2\epsilon}) + \frac{L^2 r^2 \mu(U)}{m \delta \epsilon} \log^3(\frac{n_1n_2rL^3}{\delta^2 \epsilon})
\end{eqnarray*}
which gives:
\[
\frac{\textrm{dim}(\tU_l) \mu(\tU_l)}{m} \frac{\beta}{(1-\gamma)} \le \frac{L^2r \mu(U)}{m \delta \epsilon} \log^2(\frac{n_1rL^3}{\delta^2 \epsilon}) + \frac{L^4r^3\mu(U)^2}{m^2 \delta^2 \epsilon^2} \log^4(\frac{n_1n_2rL^3}{\delta^2\epsilon}) \le O(1)
%% c \frac{T^2r^3}{\delta \epsilon} \frac{\mu(U)^2\log^3(n_1n_2T/\delta)}{m}\le O(1)
\]
again using our definition of $m$.
In particular, if we make this bound $\le 1/4$ we then have that:
\[
\frac{m}{n_1} (1-1/2)||v - \mathcal{P}_Sv||_2^2 \ge ||v_{\Omega} - \mathcal{P}_{S_{\Omega}}v_{\Omega}||_2^2 \ge \frac{m}{n_1} (1+1/4) ||v - \mathcal{P}_Sv||_2^2
\]
%% which wraps up the proof.
in which case:
\[
\hat{p}_i = \frac{||v_{i \Omega} - \mathcal{P}_{S_{\Omega}}v_{i \Omega}||_2^2}{\sum_i||v_{i \Omega} - \mathcal{P}_{S_{\Omega}}v_{i \Omega}||_2^2} \le \frac{5}{2}p_i
\]
along with the other direction. 
\end{proof}

We are essentially done proving the theorem. The total number of samples used is:
\[
n_2 m = \Omega\left(\frac{n_2 L^2 r^{3/2}\mu(U)}{\delta \epsilon} \log^3\left(\frac{n_1n_2L r}{\delta \epsilon}\right)\right)
\]

We also completely observe $\Omega(L^2r/\delta\epsilon)$ columns. 
In total this gives us the sample complexity bound in Theorem~\ref{thm:noisy}.
The failure probability is $\le 7\delta$ ($6\delta$ from Lemma~\ref{lem:noisy_laura_use} and $\delta$ from Lemma~\ref{lem:css_induct}).

So far we have recovered a subspace that can be used to approximate $M$.
Unfortunately, we cannot actually compute $\mathcal{P}_{U_L}M$ given limited samples.
Instead, for each column $c$, we compute $\hat{c} = U_L (U_{L \Omega}^T U_{L \Omega})^{-1} U_{\Omega L} c_{\Omega}$ and use $\hat{c}$ as our estimate of the column.
This is similar to another projection operation, and the error will only be a constant factor worse than before. 
\begin{lemma}
Let $c_i$ denote a column of the matrix $M$ and let $\hat{U}$ denote the subspace at the end of the adaptive algorithm.
Write $\hat{c} = \hat{U}(\hat{U}_{\Omega}^T\hat{U}_{\Omega})^{-1}\hat{U}_{\Omega}c$
Then with probability $\ge 1-2\delta$:
\[
||c - \hat{c}||^2 \le \left(1+ \frac{r\mu(\hat{U})\beta}{m (1-\gamma)^2}\right) ||\mathcal{P}_{\hat{U}^\perp}c||^2
\]
With $\beta$ and $\gamma$ defined as in Theorem~\ref{thm:laura_new}.
\end{lemma}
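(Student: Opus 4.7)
The plan is to decompose $c$ along $\hat{U}$ and $\hat{U}^\perp$, observe that the adaptive reconstruction $\hat{c}$ recovers the $\hat{U}$-component exactly, and then quantify the leakage from the $\hat{U}^\perp$-component that partial sampling introduces into the $\hat{U}$-component of $\hat c$.

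\textbf{Step 1 (decomposition).} Write $c = x + v$ with $x = \mathcal{P}_{\hat{U}} c$ and $v = \mathcal{P}_{\hat{U}^\perp} c$. Since $x \in \hat{U}$ we can write $x = \hat{U}\alpha$ for some $\alpha$, so $x_\Omega = \hat{U}_\Omega \alpha$ and substituting into the defining formula for $\hat c$ gives
\begin{equation*}
\hat c \;=\; \hat{U}(\hat{U}_\Omega^T \hat{U}_\Omega)^{-1}\hat{U}_\Omega^T(x_\Omega + v_\Omega) \;=\; x + w, \qquad w := \hat{U}(\hat{U}_\Omega^T \hat{U}_\Omega)^{-1}\hat{U}_\Omega^T v_\Omega \in \hat{U}.
\end{equation*}
Because $v \in \hat{U}^\perp$ is orthogonal to $w \in \hat{U}$, Pythagoras yields $\|c-\hat c\|^2 = \|v\|^2 + \|w\|^2$, reducing the problem to bounding $\|w\|^2$ by $\frac{r\mu(\hat U)\beta}{m(1-\gamma)^2}\|v\|^2$.

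\textbf{Step 2 (operator bound on $\|w\|^2$).} Let $A = \hat{U}_\Omega^T \hat{U}_\Omega$ and $b = \hat{U}_\Omega^T v_\Omega$. Since $\hat{U}$ has orthonormal columns, $\|w\|^2 = \|A^{-1}b\|^2 = b^T A^{-2} b \le \|A^{-1}\|_\mathrm{op} \cdot b^T A^{-1} b = \|A^{-1}\|_\mathrm{op} \, \|\mathcal{P}_{\hat{U}_\Omega} v_\Omega\|^2$. Thus the error splits into two familiar sampling estimates: the conditioning of $A$, and the magnitude of the projection of $v_\Omega$ onto $\mathrm{col}(\hat{U}_\Omega)$.

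\textbf{Step 3 (the two sampling estimates and assembly).} The conditioning of $A$ is precisely what drives the $\gamma$-parameter in Theorem~\ref{thm:laura_new}: the underlying matrix Bernstein argument gives $\sigma_{\min}(A) \ge (1-\gamma)m/n$, hence $\|A^{-1}\|_\mathrm{op} \le n/(m(1-\gamma))$, with probability $\ge 1-\delta$. For the projection term, apply Theorem~\ref{thm:laura_new} with input $y = v$ (so the theorem's $x$ is zero and its $v$ coincides with ours). Its lower bound reads $\|v_\Omega - \mathcal{P}_{\hat{U}_\Omega} v_\Omega\|^2 \ge \frac{m(1-\alpha) - d\mu(\hat U)\beta/(1-\gamma)}{n}\|v\|^2$; pairing this with the standard scalar Chernoff estimate $\|v_\Omega\|^2 \le (1+\alpha) m\|v\|^2/n$ and using $\|\mathcal{P}_{\hat{U}_\Omega} v_\Omega\|^2 = \|v_\Omega\|^2 - \|v_\Omega - \mathcal{P}_{\hat{U}_\Omega} v_\Omega\|^2$ gives
\begin{equation*}
\|\mathcal{P}_{\hat{U}_\Omega} v_\Omega\|^2 \;\le\; \frac{2\alpha m}{n}\|v\|^2 + \frac{d\,\mu(\hat U)\,\beta}{n(1-\gamma)}\|v\|^2.
\end{equation*}
Multiplying by $\|A^{-1}\|_\mathrm{op}$, using $d \le r$, and union bounding the two failure events produces the claimed inequality with failure probability $\le 2\delta$.

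\textbf{Main obstacle.} The delicate step is the final assembly: the multiplication leaves a residual $\frac{2\alpha}{1-\gamma}\|v\|^2$ alongside the target $\frac{r\mu(\hat U)\beta}{m(1-\gamma)^2}\|v\|^2$. This residual has to be absorbed into the stated bound by exploiting the fact that Theorem~\ref{thm:laura_new} is only invoked in the regime $m = \Omega(d\mu(\hat U)\log(d/\delta))$, which forces $\alpha \ll 1$, and that $\beta \ge 6\log(d/\delta)$ is bounded below. Tracking these constants carefully---possibly after a mild inflation of the constant inside $\beta$---is the main bookkeeping challenge in an otherwise routine argument.
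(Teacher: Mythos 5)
Your Steps 1 and 2 are exactly the paper's argument: decompose $c = x+v$ with $x\in\hat U$, $v\in\hat U^\perp$, note $x$ is reproduced exactly, and reduce to bounding $\|w\|^2$ with $w = \hat U(\hat U_\Omega^T\hat U_\Omega)^{-1}\hat U_\Omega^T v_\Omega$. But Step 3 contains a genuine gap. By routing the bound through $\|\mathcal{P}_{\hat U_\Omega}v_\Omega\|^2$ and then estimating that quantity as (upper bound on $\|v_\Omega\|^2$) minus (lower bound of Theorem~\ref{thm:laura_new}), you pick up the residual term $\frac{2\alpha}{1-\gamma}\|v\|^2$ that you flag in your "main obstacle." This term cannot be absorbed into the stated bound: $\alpha$ scales like $\sqrt{\mu(v)\log(1/\delta)/m}$, i.e.\ $m^{-1/2}$, whereas the claimed excess $\frac{r\mu(\hat U)\beta}{m(1-\gamma)^2}$ scales like $m^{-1}\log$; for $m$ in the regime the lemma is actually used, the residual can dominate, and since $\beta$ and $\gamma$ are pinned down by Theorem~\ref{thm:laura_new} in the lemma statement, "inflating the constant inside $\beta$" changes the statement rather than proving it. Your probability accounting also breaks: the detour needs the $\alpha$-event (Lemma~\ref{lem:laura_1_new}, cost $2\delta$) in addition to the $\beta$- and $\gamma$-events, so you cannot certify the claimed $1-2\delta$.

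The fix is to avoid the detour entirely, which is what the paper does: bound $\|w\|^2 = \|(\hat U_\Omega^T\hat U_\Omega)^{-1}\hat U_\Omega^T v_\Omega\|^2 \le \|(\hat U_\Omega^T\hat U_\Omega)^{-1}\|_2^2\,\|\hat U_\Omega^T v_\Omega\|^2$, then apply Lemma~\ref{lem:invertibility} to get $\|(\hat U_\Omega^T\hat U_\Omega)^{-1}\|_2 \le \frac{n_1}{(1-\gamma)m}$ and Lemma~\ref{lem:laura_2_new} directly to get $\|\hat U_\Omega^T v_\Omega\|^2 \le \beta\,\frac{m}{n_1^2}\,r\mu(\hat U)\,\|v\|^2$. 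The product is exactly $\frac{r\mu(\hat U)\beta}{m(1-\gamma)^2}\|v\|^2$, only two concentration events are consumed, and the $1-2\delta$ probability and the stated constant both come out exactly. Your Step 2 inequality $b^TA^{-2}b \le \|A^{-1}\|_{\mathrm{op}}\,b^TA^{-1}b$ is fine but unnecessary; bounding $b^TA^{-1}b \le \|A^{-1}\|_{\mathrm{op}}\|b\|^2$ and controlling $\|b\|^2$ by Lemma~\ref{lem:laura_2_new} is both simpler and sharp enough.
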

\begin{proof}
Decompose $c = x+y$ where $x \in \hat{U}$ and $y \in \hat{U}^\perp$. 
It's easy to see that $x = \hat{U}(\hat{U}_{\Omega}^T\hat{U}_{\Omega})^{-1} \hat{U}_{\Omega} x_{\Omega}$ so we are left with:
\[
||y - \hat{U}(\hat{U}_{\Omega}^T\hat{U}_{\Omega})^{-1}\hat{U}_{\Omega}y||^2 = ||y||^2 + ||\hat{U}(\hat{U}_{\Omega}^T\hat{U}_{\Omega})^{-1}\hat{U}_{\Omega}y||^2
\]
Because $y \in U^\perp$ so the cross term is zero.
The second term here is equivalant to:
\[
||(\hat{U}_{\Omega}^T\hat{U}_{\Omega})^{-1} \hat{U}_{\Omega}y||^2 \le ||(\hat{U}_{\Omega}^T\hat{U}_{\Omega})^{-1}||_2^2 ||\hat{U}_{\Omega} y||_2^2
\]
By Lemma 3 in~\cite{balzano2010high} the first term is upper bounded by $\frac{n_1^2}{(1-\gamma)^2m^2}$ while Lemma~\ref{lem:laura_2_new} reveals that the second term is upper bounded by $\beta\frac{m}{n_1^2}r \mu(\hat{U})||y||^2$.
Combining these two yields the result.
\end{proof}

We already showed that with our choice of $m$, the expression in the above Lemma is smaller than $5/4$.
Moreover the probability of failure simply adds $2\delta$ to the total failure probability.
Thus:
\[
||M - \hat{M}||_F^2 = \sum_{i} ||c_i - \hat{c}_i||_2^2 \le 5/4 ||M - \mathcal{P}_{U_T}M||_F^2
\]
and the last expression we bounded previously.

\subsection{Proving the Theorem}
To prove the main theorem, it is best to view $M$ as equal to $A$ on all of the unobserved entries.
In other words, if $\Omega$ is the set of all observations over the course of the algorithm, the random matrix $R$ is zero on $\Omega^C$.
Since we never observed $M$ on $\Omega^C$, we have no way of knowing whether $M$ was equal to $A$ on those coordinates.
It is therefore fair to write $M = A + R_{\Omega}$ where $R$ is zero on $\Omega^C$.

We expand the norm and then apply the main theorem:
\begin{eqnarray*}
||A - \hat{M}||_F^2 \le 3 ||M - \hat{M}||_F^2 + 3||R_{\Omega}||_F^2 \le \frac{5}{4}\left(\frac{3}{1-\epsilon}||M - M_r||_F^2 + 3\epsilon^L ||M||_F^2\right) + 3||R_{\Omega}||_F^2
\end{eqnarray*}
Now since $M_r$ is the best rank $r$ approximation to $M$ (in Frobenius norm) and since $A$ is rank $r$, we know that $||M - M_r||_F \le ||M - A||_F$.
With this substitution and setting $\epsilon=1/2, L=\log_2(n_1n_2)$ we will arrive at the result (below constants are denoted by $c$ and they change from line to line):
\begin{eqnarray*}
||A - \hat{M}||_F^2 &\le& c_1 ||M - A||_F^2 + \frac{c_2}{n_1n_2}||M||_F^2 + c_3 ||R_{\Omega}||_F^2
%% & \le & \frac{c_1}{n_1n_2} \left( c_2 ||A||_F^2 + c_3 ||R_{\Omega}||_F^2\right) + c_4 ||R_{\Omega}||_F^2\\
 \le \frac{c_1}{n_1n_2} ||A||_F^2 + c_2 ||R_{\Omega}||_F^2
\end{eqnarray*}
which holds as long as $n_1n_2$ is sufficiently large.

\section{Proof of Theorem~\ref{thm:lower_bound}}
We start by giving a proof in the matrix case, which is a slight variation of the proof by Candes and Tao \cite{candes2010power}.
Then we turn to the tensor case, where only small adjustments are needed to establish the result. 
We work in the Bernoulli model, noting that Candes' and Tao's arguments demonstrate how to adapt these results to the uniform-at-random sampling model.

\subsection{Matrix Case}
In the matrix case, suppose that $l_1 = \frac{n_1}{r}$ and $l_2 = \frac{n_2}{\mu_0 r}$ are both integers.
Define the following blocks $R_1, \ldots R_r \subset [n_1]$ and $C_1, \ldots C_r \subset [n_2]$ as:
\begin{eqnarray*}
R_i &=& \{l_1(i-1)+1, l_1(i-1)+2, \ldots l_1i\}\\
C_i &=& \{l_2(i-1)+1, l_2(i-1)+2, \ldots l_2i\}
\end{eqnarray*}
Now consider the $n_1\times n_2$ family of matrices defined by:
\begin{eqnarray}
\mathcal{M} = \{\sum_{k=1}^r u_kv_k^T | u_k = [1, \sqrt{\mu_0}]^n \otimes
\mathbf{1}_{R_k}, v_k = \mathbf{1}_{C_k}\}
\end{eqnarray}
$\mathcal{M}$ is a family of block-diagonal matrices where the blocks have size $l_1 \times l_2$.
Each block has constant rows whose entries may take arbitrary values in $[1, \sqrt{\mu_0}]$.
For any $M \in \mathcal{M}$, the incoherence of the column space can be computed as:
\begin{eqnarray*}
\mu(U) &=& \frac{n_1}{r} \max_{j \in [n_1]} ||\mathcal{P}_U e_j||_2^2 =
\frac{n_1}{r} \max_{k \in [r]}\max_{j \in [n_1]} \frac{(u_k^Te_j)^2}{(u_k^Tu_k)^2} 
\le \frac{n_1}{r} \max_{k \in [r]} \frac{\mu_0}{(n_1/r)} = \mu_0
\end{eqnarray*}
A similar calculation reveals that the row space is also incoherent with parameter $\mu_0$.

Unique identification of $M$ is not possible unless we observe at least one entry from each row of each diagonal block.
If we did not, then we could vary that corresponding coordinate in the appropriate $u_k$ and find infinitely many matrices $M' \in \mathcal{M}$ that agree with our observations, have rank and incoherence at most $r$ and $\mu_0$ respectively.
Thus, the probability of successful recovery is no larger than the probability of observing one entry of each row of each diagonal block.

The probability that any row of any block is unsampled is $\pi_1 = (1-p)^{l_2}$ and the probability that all rows are sampled is $(1-\pi_1)^{n_1}$.
This must upper bound the success probability $1-\delta$.
Thus:
\begin{eqnarray*}
-n_1\pi_1 \ge n_1 \log(1-\pi_1) \ge \log(1-\delta) \ge -2\delta
\end{eqnarray*}
or $\pi_1 \le 2\delta/n_1$ as long as $\delta < 1/2$. 
Substituting $\pi_1 = (1-p)^{l_2}$ we obtain:
\begin{eqnarray*}
\log (1-p) \le \frac{1}{l_2} \log\left(\frac{2\delta}{n_1}\right) = \frac{\mu_0
  r}{n_2}\log\left(\frac{2\delta}{n_1}\right)
\end{eqnarray*}
as a necessary condition for unique identification of $M$.

Exponentiating both sides, writing $p = \frac{m}{n_1n_2}$ and the fact that $1-e^{-x} > x - x^2/2$ gives us:
\begin{eqnarray*}
m \ge n_1\mu_0 r \log\left(\frac{n_1}{2\delta}\right) (1-\epsilon/2)
\end{eqnarray*}
when $\mu_0r/n_2 \log(\frac{n_1}{2\delta}) \le \epsilon < 1$.

\subsection{Tensor Case}
Fix $T$, the order of the tensor and suppose that $l_1 = \frac{n_1}{r}$ is an integer.
Moreover, suppose that $l_t = \frac{n_t}{\mu_0 r}$ is an integer for $1 < t \le T$.
%% As in the matrix case, we will 
Define a set of blocks, one for each mode and the family
\begin{eqnarray*}
B_i^{(t)} &=& \{l_t(i-1)+1, l_t(i-1)+2, \ldots, l_ti\} \forall i \in [r], t \in
[p]\\
\mathcal{M} & = & \left\{\sum_{i=1}^r \otimes_{t=1}^T a_i^{(t)}
  \left| \begin{aligned} &a_i^{(1)}
=[1,\sqrt{\mu_0}]^n \otimes \mathbf{1}_{B_i^{(t)}}\\ & a_i^{(t)}=
\mathbf{1}_{B_i^{(t)}}, 1 < t \le T\end{aligned} \right. \right\}
\end{eqnarray*}

This is a family of block-diagonal tensors and just as before, straightforward calculations reveal that each subspace is incoherent with parameter $\mu_0$.
Again, unique identification is not possible unless we observe at least one entry from each row of each diagonal block.
The difference is that in the tensor case, there are $\prod_{i \ne 1} l_i$ entries per row of each diagonal block so the probability that any single row is unsampled is $\pi_1 = (1-p)^{\prod_{i \ne 1} l_i}$.
Again there are $n_1$ rows and any algorithm that succeeds with probability $1-\delta$ must satisfy:
\begin{eqnarray*}
-n_1 \pi_1 \ge n_1 \log(1-\pi_1) \ge \log(1-\delta) \ge -2\delta
\end{eqnarray*}
Which implies $\pi_1 \le 2\delta/n_1$ (assuming $\delta < 1/2$).
Substituting in the definition of $\pi_1$ we have:
\begin{eqnarray*}
\log(1-p) \le \frac{1}{\prod_{i\ne j}l_i}\log \left(\frac{2\delta}{n_1}\right) =
\frac{\mu_0^{T-1}r^{T-1}}{\prod_{i\ne j}n_i}\log\left(\frac{2\delta}{n_1}\right)
\end{eqnarray*}

The same approximations as before yield the bound (as long as $\frac{\mu_0^{T-1}r^{T-1}}{\prod_{i\ne j} n_i}\log(\frac{n_1}{2\delta}) \le \epsilon < 1$):
\begin{eqnarray*}
m \ge n_1 \mu_0^{T-1}r^{T-1} \log\left(\frac{n_1}{2\delta}\right)(1-\epsilon/2)
\end{eqnarray*}

\section{Properties about Incoherence}
\label{sec:inc_props}
A significant portion of our proofs revolve around controlling incoherences of various subspaces used throughout the execution of the algorithms
The following technical lemmas will enable us to work with these quantities.
\begin{lemma}
Let $U_1 \subset \mathbb{R}^{n_1}, U_2 \subset \mathbb{R}^{n_2}, \ldots U_T
\subset \mathbb{R}^{n_T}$ be subspaces of dimension at most $d$, let $W_1
\subset U_1$ have dimension $d'$. Define $\mathbb{S} =
\textrm{span}(\{\otimes_{t=1}^T u_i^{(t)}\}_{i=1}^d)$. Then:
\begin{enumerate}[(a)]
\item $\mu(W_1) \le \frac{\textrm{dim}(U_1)}{d'}\mu(U_1)$.
\item{} $\mu(\mathbb{S}) \le d^{T-1} \prod_{i=1}^T\mu(U_i)$.
\end{enumerate}
\label{lem:incoherence_properties}
\end{lemma}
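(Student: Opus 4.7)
The plan is as follows. Part (a) falls out instantly from monotonicity of orthogonal projection: since $W_1 \subseteq U_1$, we have $\|\mathcal{P}_{W_1} e_j\|_2^2 \le \|\mathcal{P}_{U_1} e_j\|_2^2$ at every standard basis vector $e_j \in \mathbb{R}^{n_1}$. Unpacking the definition of coherence gives
\[
\mu(W_1) \;=\; \frac{n_1}{d'} \max_j \|\mathcal{P}_{W_1} e_j\|_2^2 \;\le\; \frac{n_1}{d'} \cdot \frac{\textrm{dim}(U_1)\mu(U_1)}{n_1} \;=\; \frac{\textrm{dim}(U_1)}{d'} \mu(U_1).
\]

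For part (b) I would reduce to (a) by sandwiching $\mathbb{S}$ inside the ambient (full) tensor product $\mathbb{T} := U_1 \otimes \cdots \otimes U_T$, regarded after vectorization as a subspace of $\mathbb{R}^{\prod_t n_t}$. It has $\textrm{dim}(\mathbb{T}) = \prod_t \textrm{dim}(U_t) \le d^T$, and since each generator $\otimes_t u_i^{(t)}$ lives in $\mathbb{T}$, we have $\mathbb{S} \subseteq \mathbb{T}$. The technical heart of the argument is the factorization
\[
\|\mathcal{P}_\mathbb{T}(e_{j_1} \otimes \cdots \otimes e_{j_T})\|_2^2 \;=\; \prod_{t=1}^T \|\mathcal{P}_{U_t} e_{j_t}\|_2^2,
\]
which I would derive by picking an orthonormal basis $\{v_i^{(t)}\}$ of each $U_t$ so that the elementary tensors $\{v_{i_1}^{(1)} \otimes \cdots \otimes v_{i_T}^{(T)}\}$ form an orthonormal basis of $\mathbb{T}$, then expanding the projection and using that the inner products $\langle v_i^{(t)}, e_{j_t}\rangle$ separate across modes. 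Bounding each mode-$t$ factor by $\textrm{dim}(U_t)\mu(U_t)/n_t$ and re-substituting into the definition of coherence gives
\[
\mu(\mathbb{T}) \;\le\; \prod_{t=1}^T \mu(U_t).
\]

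Finally, applying (a) to $\mathbb{S} \subseteq \mathbb{T}$ yields
\[
\mu(\mathbb{S}) \;\le\; \frac{\textrm{dim}(\mathbb{T})}{\textrm{dim}(\mathbb{S})}\mu(\mathbb{T}) \;\le\; \frac{d^T}{\textrm{dim}(\mathbb{S})} \prod_{t=1}^T \mu(U_t),
\]
which reduces to the claimed $d^{T-1}\prod_t \mu(U_t)$ once one substitutes $\textrm{dim}(\mathbb{S}) = d$. The main obstacle I anticipate is precisely this last dimension-count: the trivial bound $\textrm{dim}(\mathbb{S}) \le d$ goes the wrong way, so one must verify that the $d$ generating rank-one tensors are linearly independent. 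This is immediate in the paper's invocations of the lemma, where the $u_i^{(t)}$ come from a rank-$d$ canonical decomposition of the tensor being recovered; alternatively, the conclusion can be read with $d$ reinterpreted as $\textrm{dim}(\mathbb{S})$, which is the sharpest statement the argument naturally supports. The remaining pieces---the monotonicity in (a) and the Kronecker-product factorization in (b)---are routine once the framework is set up.
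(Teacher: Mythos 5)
Your proposal is correct and follows essentially the same route as the paper: part (a) by monotonicity of projections under $W_1 \subseteq U_1$, and part (b) by enlarging $\mathbb{S}$ to the full tensor product $U_1 \otimes \cdots \otimes U_T$, computing its coherence via the orthonormal basis of elementary tensors (giving $\prod_t \mu(U_t)$), and then invoking (a) with the dimension ratio $d^T/d$. The dimension-count caveat you flag ($\dim(\mathbb{S})=d$ requiring independence of the generating rank-one tensors) is likewise passed over silently in the paper's own proof, and your suggested reading (interpreting $d$ as $\dim(\mathbb{S})$, or noting independence in the applications) is the right way to resolve it.
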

\begin{proof}
For the first property, since $W_1$ is a subspace of $U_1$, $\mathcal{P}_{W_1} e_j = \mathcal{P}_{W_1}
\mathcal{P}_{U_1} e_j$ so $||\mathcal{P}_{W_1} e_j||_2^2 \le ||\mathcal{P}_{U_1}
e_j||_2^2$. The result now follows from the definition of incoherence.

For the second property, we instead compute the incoherence of:
\begin{eqnarray*}
\mathbb{S}' = \textrm{span}\left(\left\{\otimes_{t=1}^T u^{(t)}\right\}_{u^{(t)} \in U_t
  \forall t}\right)
\end{eqnarray*}
which clearly contains $\mathbb{S}$. Note that if $\{u_i^{(t)}\}$ is an
orthonormal basis for $U_t$ (for each $t$), then the outer product of all
combinations of these vectors is a basis for $\mathbb{S}'$. We now compute:
\begin{eqnarray*}
\mu(\mathbb{S}')
=&&\frac{\prod_{i=1}^T n_i}{\prod_{t=1}^T\textrm{dim}(U_t)}
\max_{k_1 \in [n_1], \ldots, k_T \in [n_T]}
||\mathcal{P}_{\mathbb{S}'} (\otimes_{t=1}^T e_{k_t})||^2\\
=&& \frac{\prod_{i=1}^T n_i}{\prod_{t=1}^T\textrm{dim}(U_t)} \max_{k_1, \ldots, k_T} \sum_{i_1, \ldots,
  i_T} \langle \otimes_{t=1}^T u_{i_t}^{(t)}, \otimes_{t=1}^T e_{k_t}\rangle^2\\
= && \frac{\prod_{i=1}^T n_i}{\prod_{t=1}^T\textrm{dim}(U_t)} \max_{k_1, \ldots, k_T}\sum_{i_1,
  \ldots,i_T} \prod_{t=1}^T (u_{i_t}^{(t)T}e_{k_t})^2\\
=&& \frac{\prod_{i=1}^T n_i}{\prod_{t=1}^T\textrm{dim}(U_t)} \prod_{j=1}^T \max_{k_j}
\sum_{i=1}^{r}(u_i^{(t)T}e_{k_j})^2
\le \prod_{t=1}^T\mu(U_t)
\end{eqnarray*}

Now, property (a) establishes that $\mu(\mathbb{S}) \le \frac{r^T}{r}
\mu(\mathbb{S}')$ which is the desired result. 
\end{proof}

\begin{lemma}
Let $U$ be the column space of $M$ and let $V$ be some other subspace of dimension at most $n_1/2 - k$.
Let $v_i = \mathcal{P}_{V^\perp} c_i$ for each column $c_i$. 
Then with probability $\ge 1 - \delta$:
\[
\max_i \mu(v_i) \le 3k \mu(U) + 24 \log(2n_1n_2/\delta) = O(k \mu(U)\log(n_1n_2/\delta))
\]
\label{lem:inc_column}
\end{lemma}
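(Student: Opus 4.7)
The plan is to exploit the noise model from Theorem~\ref{cor:noisy_mc}: each column decomposes as $c_i = a_i + r_i$ with $a_i \in U$ (the deterministic signal in the column space of $A$) and $r_i \sim \mathcal{N}(0, \sigma^2 I_{n_1})$, so that $v_i = \mathcal{P}_{V^\perp} a_i + \mathcal{P}_{V^\perp} r_i$. The high-level strategy is to upper bound $\|v_i\|_\infty^2$ and lower bound $\|v_i\|_2^2$ separately and then take their ratio to bound $\mu(v_i) = n_1\|v_i\|_\infty^2/\|v_i\|_2^2$. The randomness in ``with probability $\ge 1-\delta$'' will come entirely from the Gaussian noise $\{r_i\}$.

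For the numerator, I would first split via the triangle inequality into a signal piece $\|\mathcal{P}_{V^\perp} a_i\|_\infty$ and a noise piece $\|\mathcal{P}_{V^\perp} r_i\|_\infty$. For the signal, I use $|e_j^\top \mathcal{P}_{V^\perp} a_i| \le \|\mathcal{P}_{V^\perp} e_j\|_2 \|a_i\|_2$ together with $a_i \in U$ and the coherence bound $\mu(U) \le \mu_0$. For the noise, I note that $e_j^\top \mathcal{P}_{V^\perp} r_i \sim \mathcal{N}(0, \sigma^2 \|\mathcal{P}_{V^\perp} e_j\|_2^2)$ with variance at most $\sigma^2$, so a standard Gaussian tail bound combined with a union bound over the $n_1 n_2$ coordinates yields $\|\mathcal{P}_{V^\perp} r_i\|_\infty^2 \le 2\sigma^2 \log(2n_1n_2/\delta)$ uniformly in $i$ with probability at least $1-\delta/2$.

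For the denominator, I would use that, conditional on $a_i$, the component $\mathcal{P}_{V^\perp} r_i$ is a spherical Gaussian on the subspace $V^\perp$ of dimension at least $n_1/2 + k$ and is independent of $\mathcal{P}_{V^\perp} a_i$. Laurent--Massart style chi-squared concentration gives $\|\mathcal{P}_{V^\perp} r_i\|_2^2 \ge \tfrac{1}{2}\sigma^2(n_1/2+k)$ with high probability, so by orthogonality $\|v_i\|_2^2 \ge \|\mathcal{P}_{V^\perp} r_i\|_2^2 \ge \sigma^2 n_1/4$ (any contribution from the signal piece only helps). Another union bound over the $n_2$ columns controls the total failure probability.

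Combining the two estimates yields $\mu(v_i) \le n_1[2\|\mathcal{P}_{V^\perp} a_i\|_\infty^2 + 4\sigma^2 \log(2n_1n_2/\delta)]/[\sigma^2 n_1/4]$. The noise term immediately contributes $32\log(2n_1n_2/\delta)$ (tightenable to $24$ via a sharper Gaussian tail), and the signal term, once I translate $\|a_i\|_2^2$ into $\mu(U)$ via $a_i \in U$ and then into a multiple of $k\sigma^2$ by exploiting the gap condition $\dim(V) \le n_1/2 - k$ on the subspace, collapses to $3k\mu(U)$. The main obstacle I anticipate is bookkeeping across these two steps: getting the signal piece $\|\mathcal{P}_{V^\perp} a_i\|_\infty^2$ to clean up to exactly $k\mu(U)$ (rather than $r\mu(U) \|a_i\|_2^2/\sigma^2$) will require using the assumed dimension slack $n_1/2 - k$ in the $\|v_i\|_2^2$ lower bound to absorb the SNR ratio, and tracking the constants carefully through the Gaussian and chi-squared concentration inequalities to land exactly at $3k\mu(U) + 24 \log(2n_1n_2/\delta)$.
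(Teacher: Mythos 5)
There is a genuine gap in your plan, and it is structural rather than a matter of constants. Your strategy bounds the \emph{numerator} $\|v_i\|_\infty^2$ by a signal piece plus a noise piece, but lower bounds the \emph{denominator} $\|v_i\|_2^2$ by the noise energy alone ($\approx \sigma^2 n_1/4$). The signal contribution to the ratio then has the form $n_1\|\mathcal{P}_{V^\perp}a_i\|_\infty^2/(\sigma^2 n_1/4)$, which carries an irreducible factor of $\|a_i\|_2^2/\sigma^2$. Nothing in the lemma controls this signal-to-noise ratio: the statement must hold for arbitrary $\Sigma$ and arbitrary $\sigma$ (e.g.\ $\sigma \to 0$), and the dimension condition $\dim(V) \le n_1/2 - k$ is purely about subspace dimensions, so your hoped-for step of ``translating $\|a_i\|_2^2$ into a multiple of $k\sigma^2$ by exploiting the gap condition'' cannot be carried out. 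The paper avoids this entirely by never dividing signal by noise: it decomposes $v_i = x_i + r_i$ with $x_i \in U \cap V^\perp$ and $r_i \in U^\perp \cap V^\perp$ (an \emph{orthogonal} decomposition), and uses the elementary inequality $\frac{a_1+a_2}{b_1+b_2} \le \frac{a_1}{b_1}+\frac{a_2}{b_2}$ for nonnegative terms, so that $\mu(v_i) \le 3n_1\frac{\|x_i\|_\infty^2}{\|x_i\|_2^2} + 3n_1\frac{\|r_i\|_\infty^2}{\|r_i\|_2^2}$. The first ratio is scale-free and bounded by $3k\mu(U)$ because $x_i$ lies in (a subspace of) $U$; the second ratio is also scale-free because $\sigma^2$ cancels between the Gaussian max bound on the numerator and the chi-squared lower bound on $\|r_i\|_2^2$ (a Gaussian vector in a subspace of dimension at least $n_1 - d \ge n_1/2$), giving the $24\log(2n_1n_2/\delta)$ term after a union bound over columns.

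A secondary but real problem: your denominator step claims $\|v_i\|_2^2 \ge \|\mathcal{P}_{V^\perp}r_i\|_2^2$ ``by orthogonality,'' but $\mathcal{P}_{V^\perp}a_i$ and $\mathcal{P}_{V^\perp}r_i$ both live in $V^\perp$ and are not orthogonal to each other, so the cross term can be negative and the pointwise inequality fails. You could rescue that particular step with a noncentral-chi-squared stochastic dominance argument (or by using the paper's decomposition into $U\cap V^\perp$ and $U^\perp\cap V^\perp$, where orthogonality is genuine), but even then the main obstacle above remains: any bound that compares the signal's sup-norm against the noise's $\ell_2$ energy cannot produce the SNR-free bound $3k\mu(U) + 24\log(2n_1n_2/\delta)$.
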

\begin{proof}
Decompose $v_i = x_i + r_i$ where $x_i \in U \cap V^\perp$ and $r_i \in U^\perp \cap V^\perp$. 
Since each column is composed of a deterministic component living in $U$ and a random component, it must be the case that $r_i$ is a random gaussian vector living in $U^\perp \cap V^\perp$, which is a subspace of dimension at least $n_1 - d \ge n_1 - \textrm{dim}(U) - \textrm{dim}(V)$. 
We can now proceed with the bound:
\begin{eqnarray*}
\mu(v_i) &=& n_1 \frac{||v_i||_\infty^2}{||v_i||_2^2} \le 3n_1 \frac{||x_i||_\infty^2 + ||r_i||_{\infty}^2}{||x_i||_2^2 + ||r_i||_2^2}\\
& \le & 3n_1 \frac{||x_i||_\infty^2}{||x_i||_2^2} + 3n_1 \frac{||r_i||_\infty^2}{||r_i||_2^2}\\
& \le & 3k \mu(U) + \frac{6 \sigma^2 n_1 \log(2n_1n_2/\delta)}{\sigma^2 (n_1-d) - 2 \sigma^2 \sqrt{(n_1 - d) \log(n_2/\delta)}}
\end{eqnarray*}
%% The first line is the definition of incoherence, coupled with the fact that $||a+b||^2 \le 3||a||^2 + 3||b||^2$.
For the second line, we used that $\frac{\sum_i a_i}{\sum_i b_i} \le \sum_i \frac{a_i}{b_i}$ whenever $a_i,b_i \ge 0$ which is the case here.
Finally we use Lemma~\ref{lem:chi_square} on the denominator, \ref{lem:max_gaus} on the numerator, and a union bound over all $n_2$ columns.
For $(n_1-d)$ sufficiently large (as long as $\sqrt{(n_1-d) \log n_2/\delta} \le (n_1-d)/4$) and if $d \le n_1/2$ we can bound as:
\[
3k \mu(U) + \frac{12 n_1 \log(2n_1n_2/\delta)}{n_1-d} \le 3k \mu(U) + 24 \log(2n_1n_2/\delta)
\]
\end{proof}

\begin{lemma}
Let $I_l = \bigcup_{i=1}^l S_i$ and let $U_l = \textrm{span}(\{c_i\}_{i \in I_l})$ as in the execution of the noisy algorithm.
If $|I_l|\le n_1/2$ then with probability $\ge 1-\delta$, for all $l \in [L]$, we have:
\[
\mu(U_l) = O(\mu(U) \log(n_1L/\delta))
\]
\label{lem:inc_subspace}
\end{lemma}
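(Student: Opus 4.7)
The plan is to control $\mu(U_l)$ by embedding $U_l$ into a slightly larger but much more tractable subspace. Each selected column decomposes as $c_i = a_i + r_i$ with $a_i \in U$ and $r_i \sim \mathcal{N}(0, \sigma^2 I_{n_1})$, so $U_l = \mathrm{span}\{c_i : i \in I_l\} \subseteq W_l := U + \widetilde{V}_l$, where $\widetilde{V}_l := \mathrm{span}\{\mathcal{P}_{U^\perp} r_i : i \in I_l\}$. Since $W_l \supseteq U_l$, we have $\|\mathcal{P}_{U_l} e_j\|^2 \le \|\mathcal{P}_{W_l} e_j\|^2$, and because $\widetilde{V}_l \subseteq U^\perp$ the two pieces of $W_l$ are orthogonal, giving the clean decomposition
\begin{equation*}
\|\mathcal{P}_{W_l} e_j\|^2 = \|\mathcal{P}_U e_j\|^2 + \|\mathcal{P}_{\widetilde{V}_l} e_j\|^2.
\end{equation*}

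The first summand is controlled by the hypothesis $\mu(U) \le \mu_0$, which yields $\|\mathcal{P}_U e_j\|^2 \le r \mu(U) / n_1$. For the second, the key observation is that $\widetilde{V}_l$ is distributed as a uniformly random subspace of $U^\perp$: the Gaussian $r_i$'s are isotropic, and projection onto $U^\perp$ preserves isotropy on that block, so the span of the $\mathcal{P}_{U^\perp} r_i$ is a uniformly random subspace of $U^\perp$ whose dimension is $|I_l|$ almost surely (since $|I_l| \le n_1/2 \le \dim(U^\perp)$). Applying a standard concentration inequality for the norm of a random projection of a fixed vector (e.g., via the Johnson--Lindenstrauss lemma or Chi-square tail bounds analogous to Lemma~\ref{lem:chi_square}), and union bounding over $j \in [n_1]$ and $l \in [L]$, gives
\begin{equation*}
\max_{l,j} \|\mathcal{P}_{\widetilde{V}_l} e_j\|^2 \;\lesssim\; \frac{|I_l| + \log(n_1 L / \delta)}{n_1 - r}
\end{equation*}
with probability at least $1 - \delta$.

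Combining these two bounds and dividing by $\dim(U_l)/n_1$, using that $\dim(U_l) = |I_l|$ almost surely by genericity of the Gaussian perturbation, yields
\begin{equation*}
\mu(U_l) \;\le\; \frac{n_1}{|I_l|} \cdot O\!\left(\frac{r \mu(U) + |I_l| + \log(n_1 L / \delta)}{n_1}\right) \;=\; O(\mu(U) + \log(n_1 L/\delta)),
\end{equation*}
which is $O(\mu(U) \log(n_1 L/\delta))$ whenever $\mu(U) \ge 1$. A union bound over $l \in [L]$ inflates the log factor only by a constant.

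The main obstacle is the dependence between the adaptive sampling choices and the noise: the index set $I_l$ is selected using earlier partial observations of $M = A + R$, so the Gaussians $\{r_i\}_{i\in I_l}$ are not \emph{a priori} independent of the event $\{i \in I_l\}$. I would handle this either by conditioning on the sampling trajectory and noting that the unobserved coordinates of each $r_i$ remain Gaussian, or, more bluntly, by taking a union bound over the at most $\binom{n_2}{\le n_1/2}$ possible realizations of $I_l$, an $O(n_1 \log n_2)$ overhead inside the logarithm that is absorbed into the $\log(n_1 L/\delta)$ factor by a mild adjustment of $\delta$. Everything else is a routine assembly of the concentration estimates already developed in Section~\ref{sec:noise_props}.
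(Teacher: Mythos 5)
Your core argument is correct and follows the same skeleton as the paper's proof: embed $U_l$ in the orthogonal sum of $U$ and the span of the (projected) noise vectors, bound the $U$-part by $r\mu(U)/n_1$, bound the noise-part by roughly $|I_l|/n_1$, and multiply by $n_1/|I_l|$ (both you and the paper implicitly use $|I_l|\ge r$, which holds since each round draws $s\ge r$ columns, and $\dim(U_l)=|I_l|$ almost surely). Where you differ is the mechanism for the noise term: the paper writes $\|\mathcal{P}_{R_{I_l}}e_j\|^2 \le \|R_{I_l}\|_2^2\,\|(R_{I_l}^TR_{I_l})^{-1}\|_2^2\,\|R_{I_l}^Te_j\|^2$ and invokes the extreme-singular-value bound of Lemma~\ref{lem:spectral_norm} together with a $\chi^2$ bound (Lemma~\ref{lem:chi_square}) on the row norm, yielding $O(|I_l|\log(1/\delta)/n_1)$; you instead use rotation invariance to view $\widetilde{V}_l$ as a Haar-random $|I_l|$-dimensional subspace of $U^\perp$ and apply concentration for the projection of a fixed vector onto a random subspace, yielding the slightly tighter additive bound $(|I_l|+\log(n_1L/\delta))/(n_1-r)$. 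Your exact orthogonal-sum identity for $\mathcal{P}_{W_l}$ is in fact cleaner than the paper's chain of inequalities, and both routes deliver the stated $O(\mu(U)\log(n_1L/\delta))$.

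One concrete correction: your ``blunt'' fallback of union bounding over the at most $\binom{n_2}{\lfloor n_1/2\rfloor}$ realizations of $I_l$ does not work. The resulting $\Theta(n_1\log n_2)$ term is not a harmless inflation of the logarithm; it enters additively in the numerator of the projection bound, so the per-subset tail level needed makes that bound vacuous ($\gtrsim 1$), and after multiplying by $n_1/|I_l|$ you would only get $\mu(U_l) = O(\mu(U) + n_1\log(n_2)/|I_l|)$, which exceeds the claimed bound unless $|I_l| = \Omega(n_1)$ (typically $|I_l| = O(L^2 r/(\delta\epsilon)) \ll n_1$). Your primary suggestion --- condition on the selection trajectory and use the Gaussianity of the coordinates not involved in the selection --- is the right spirit and is actually more careful than the paper, which silently treats $R_{I_l}$ as a fresh i.i.d.\ Gaussian matrix independent of the adaptive column choices (the same issue is glossed over in Lemma~\ref{lem:inc_column}). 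So keep the conditioning argument and drop the combinatorial union bound.
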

\begin{proof}
It is clear that $U_l \subset \textrm{span}(\{c_i\}_{i \in I_l}) \bigcup \textrm{span}(\{r_i\}_{i \in I_l})$ which will make things much easier to analyze.
Note that $\textrm{span}(\{c_i\}_{i \in I_l}) \subset U$ the original incoherent subspace and let $R_{I_l}$ denote the random matrix of columns corresponding to $I_l$.
We then have:
\begin{eqnarray*}
||\mathcal{P}_{U_l}e_i||_2^2 &\le& ||\mathcal{P}_Ue_i||_2^2 + ||\mathcal{P}_{U^\perp} \mathcal{P}_{R_{I_l}} e_i||_2^2 \le ||\mathcal{P}_Ue_i||_2^2 + ||\mathcal{P}_{R_{I_l}} e_i||_2^2\\
& \le & \frac{r \mu(U)}{n_1} + ||R_{I_l}||^2_2 ||(R_{I_l}^TR_{I_l})^{-1}||^2_2 ||R_{I_l}^Te_i||_2^2\\
& \le & \frac{r \mu(U)}{n_1} + \frac{(\sqrt{n_1} + \sqrt{|I_l|} + \epsilon)^2}{(\sqrt{n_1} - \sqrt{|I_l|} - \epsilon)^4}(|I_l| + 2\sqrt{|I_l| \log(1/\delta)} + 2 \log(1/\delta))
\end{eqnarray*}
Now if $|I_l| \le n_1/2$ and $\delta$ is not exponentially small, the contribution from the random matrix is:
\[
\frac{(\sqrt{n_1} + \sqrt{|I_l|} + \sqrt{2 \log(2/\delta)})^2}{(\sqrt{n_1} - \sqrt{|I_l|} - \sqrt{2 \log(2/\delta)})^4}(|I_l| + 2 \sqrt{|I_l| \log(1/\delta)} + 2 \log(1/\delta)) = O(|I_l| \log(1/\delta)/n_1)
\]
So the total incoherence will be (note that $\textrm{dim}(U_l) = |I_l|$ with probability $1$ since $|I_l| \le n_1/2$):
\[
\mu(U_l) = \frac{n_1}{|I_l|} \max_i ||\mathcal{P}_{U_l}e_i|_2^2 \le \frac{n}{|I_l|} \left(\frac{r\mu(U)}{n} + O(|I_l|/n)\right) = O(\frac{r}{|I_l|} \mu(U) + 1) = O(\mu(U)\log(1/\delta))
\]
The failure probability here is $\delta n_1 L$ if there are $L$ rounds so the incoherence is:
\[
\mu(U_l) = O(\mu(U) \log(n_1 L))
\]
\end{proof}

\section{A Collection of Concentration Results}
\label{sec:noise_props}
We enumerate several concentration of measure lemmas that we use throughout our proofs.
Many of these are well known results and we provide the references to their proofs.

\subsection{Proof of Theorem~\ref{thm:laura_new}}
We improve on the result of Balzano \emph{et al.}~\cite{balzano2010high} to establish Theorem~\ref{thm:laura_new}.
The proof parallels theirs but with improvements to two key Lemmas.
The improvement stems from using Bernstein's inequality in lieu of standard Chernoff bounds in the concentration arguments and carries over into our sample complexity guarantees.
Here we state and prove the two lemmas and then sketch the overal proof.

\begin{lemma}%% (Improvement over Lemma 1 in~\cite{balzano2010high})
With the same notations as Theorem~\ref{thm:laura_new}, with probability $\ge 1-2\delta$.
\begin{eqnarray}
(1-\alpha)\frac{m}{n}||v||_2^2 \le ||v_{\Omega}||_2^2 \le (1+\alpha)\frac{m}{n}||v||_2^2
\end{eqnarray}
\label{lem:laura_1_new}
\end{lemma}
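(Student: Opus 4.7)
The plan is to recognize that this lemma is a straightforward subsampled-norm concentration statement, and the key improvement over Balzano \emph{et al.} is to use Bernstein's inequality instead of a Chernoff-type bound so that the variance of $v_i^2$ enters separately from its uniform bound. Under the "sampled uniformly with replacement" model, we can write
\begin{equation*}
\|v_{\Omega}\|_2^2 = \sum_{j=1}^{m} X_j, \qquad X_j = v_{\omega_j}^2,
\end{equation*}
where $\omega_1,\ldots,\omega_m$ are i.i.d.\ uniform on $[n]$. These $X_j$ are i.i.d., nonnegative, and bounded by $\|v\|_\infty^2$.

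The first step would be to compute the two moment quantities that Bernstein requires. We have $\mathbb{E}[X_j] = \|v\|_2^2/n$, so $\mathbb{E}\|v_{\Omega}\|_2^2 = (m/n)\|v\|_2^2$, which is the centering we want. For the variance, I would use the crude bound
\begin{equation*}
\operatorname{Var}(X_j) \le \mathbb{E}[X_j^2] \le \|v\|_\infty^2 \cdot \mathbb{E}[X_j] = \frac{\mu(v)\|v\|_2^2}{n} \cdot \frac{\|v\|_2^2}{n} = \frac{\mu(v)\|v\|_2^4}{n^2},
\end{equation*}
using the definition $\mu(v) = n\|v\|_\infty^2/\|v\|_2^2$. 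The uniform bound is $M := \|v\|_\infty^2 = \mu(v)\|v\|_2^2/n$.

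Next I would apply the two-sided Bernstein inequality to $\sum_{j=1}^m (X_j - \mathbb{E} X_j)$ with deviation $t = \alpha \cdot (m/n)\|v\|_2^2$. This gives a bound of the form
\begin{equation*}
\Pr\bigl(|\|v_{\Omega}\|_2^2 - \tfrac{m}{n}\|v\|_2^2| > t\bigr) \le 2\exp\!\left(-\frac{t^2/2}{m\operatorname{Var}(X_1) + Mt/3}\right).
\end{equation*}
Substituting the above bounds on $\operatorname{Var}(X_1)$ and $M$ reduces the exponent to $\alpha^2 m/\bigl(2\mu(v)(1 + \alpha/3)\bigr)$, and requiring this to be at least $\log(1/\delta)$ yields the quadratic inequality $\alpha^2 m \ge 2\mu(v)\log(1/\delta)\,(1 + \alpha/3)$. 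Solving (or, more conveniently, verifying) that the stated choice $\alpha = \sqrt{2\mu(v)\log(1/\delta)/m} + (2\mu(v)/3m)\log(1/\delta)$ makes this inequality hold completes both the upper and lower tail bounds, each with failure probability at most $\delta$, and a union bound produces the $2\delta$ in the statement.

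There is no real obstacle here beyond bookkeeping: the entire point of the lemma is to record the Bernstein version of the subsampled-norm concentration, since the Chernoff/Hoeffding-style argument used in Balzano \emph{et al.}~\cite{balzano2010high} would only give a bound depending on $\mu(v)^2/m$ rather than the mixture of $\mu(v)/m$ (variance) and $\mu(v)/m$ (range) terms that Bernstein produces. The only mildly delicate point is matching the algebra back to the particular form of $\alpha$ given in Theorem~\ref{thm:laura_new}, which is really just choosing to absorb the $(1+\alpha/3)$ factor into the linear correction.
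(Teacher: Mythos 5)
Your proof is correct and follows essentially the same route as the paper: both write $\|v_\Omega\|_2^2$ as a sum of i.i.d.\ terms $v_{\omega_j}^2$, bound the variance by $\|v\|_\infty^2\|v\|_2^2/n$ and the range by $\|v\|_\infty^2$, apply Bernstein with deviation $t=\alpha\frac{m}{n}\|v\|_2^2$, and verify that the stated $\alpha$ makes the resulting exponent at least $\log(1/\delta)$. The only cosmetic difference is that you present the failure probability as a union of two one-sided tails rather than a single two-sided Bernstein bound, which changes nothing.
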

\begin{proof}
The difference between Lemma~\ref{lem:laura_1_new} and Lemma 1 from~\cite{balzano2010high} is in the definition of $\alpha$.
Here we have reduced the relationship between $\mu(v)$ and $m$ from $\mu(v)^2/m$ to $\mu(v)/m$.
The proof is an application of Bernstein's inequality.

Let $X_i = v_{\Omega(i)}^2$ so that $\sum_{i=1}^m X_i = ||v_{\Omega}||_2^2$.
%% We have the following estimates on the variance and the bounds for $X_i$:
We can compute the variance and bound for $X_i$ as:
\[
\sigma^2 = \mathbb{E}[X_i^2] = \frac{1}{n}\sum_{i=1}^nv_i^4 \le \frac{1}{n}||v||_{\infty}^2||v||_2^2, \ \ 
M= \max |X_i| \le ||v||_{\infty}^2
\]
Now we apply Berstein's inequality:
\[
\mathbb{P}\left(\left|\sum_{i=1}^mX_i - \mathbb{E}[\sum_{i=1}^mX_i]\right| > t\right) \le 2 \exp\left(\frac{1}{2} \frac{-t^2}{m\sigma^2 + \frac{1}{3}Mt}\right)
\]
Noting that $\EE[\sum_{i=1}^mX_i] = \frac{m}{n}||v||_2^2$ and setting $t = \alpha \frac{m}{n}||v||_2^2$ the bound becomes:
\[
\mathbb{P}\left(\left|\sum_{i=1}^mX_i - \frac{m}{n}||v||_2^2\right| > \alpha\frac{m}{n}||v||_2^2\right) \le 2 \exp\left(\frac{-\alpha^2m||v||_2^2}{2n||v||_{\infty}^2(1+\alpha/3)}\right) \le 2 \exp\left(\frac{-\alpha^2 m}{2\mu(v)(1+\alpha/3)}\right)
\]
Finally plugging in the definition of $\alpha$ from the theorem shows that the right hand side is $\le 2\delta$.
\end{proof}

In similar spirit to Lemma~\ref{lem:laura_1_new} we can also improve Lemma 2 from~\cite{balzano2010high} using Bernstein's inequality:
\begin{lemma}%% (Improvement to Lemma 2 from~\cite{balzano2010high})
With the same notations as Theorem~\ref{thm:laura_new}, with probability at least $1-\delta$:
\begin{eqnarray}
||U_{\Omega}^Tv_{\Omega}||_2^2 \le \beta \frac{m}{n}\frac{d \mu(U)}{n}||v||_2^2
\end{eqnarray}
\label{lem:laura_2_new}
\end{lemma}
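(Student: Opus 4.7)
The plan is to adapt the scalar-Bernstein strategy used in the proof of Lemma~\ref{lem:laura_1_new}: apply Bernstein coordinate-by-coordinate to the random vector $U_{\Omega}^T v_{\Omega} \in \mathbb{R}^d$, then union-bound over the $d$ coordinates and sum the per-coordinate bounds.

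First, I would write
\[
U_{\Omega}^T v_{\Omega} \;=\; \sum_{i=1}^m X_i, \qquad X_i \;=\; v_{\Omega(i)} \, u_{\Omega(i)},
\]
where $u_k \in \mathbb{R}^d$ is the $k$th row of $U$. The key observation is that $v \in U^{\perp}$, so $U^T v = 0$ and therefore $\mathbb{E}[X_i] = \tfrac{1}{n}\sum_k v_k u_k = 0$. In particular, each coordinate $Z_j := (U_{\Omega}^T v_{\Omega})_j = \sum_{i=1}^m W_{ij}$ with $W_{ij} = v_{\Omega(i)} U_{\Omega(i), j}$ is a sum of i.i.d.\ zero-mean bounded random variables, so scalar Bernstein applies cleanly.

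Next I would record the per-coordinate variance and magnitude bounds:
\[
\sigma_j^2 \;:=\; \mathbb{E}[W_{ij}^2] \;=\; \tfrac{1}{n}\sum_k v_k^2 U_{k,j}^2, \qquad M_j \;:=\; \|W_{ij}\|_\infty \;\le\; \|v\|_\infty \max_k |U_{k,j}|.
\]
Bernstein then gives, with probability $\ge 1 - \delta/d$, a bound of the form
\[
Z_j^2 \;\le\; 4 m \sigma_j^2 \log(2d/\delta) \;+\; \tfrac{8}{9} M_j^2 \log^2(2d/\delta).
\]
Union-bounding over $j \in [d]$ and summing gives $\|U_{\Omega}^T v_{\Omega}\|_2^2 \le 4 m \big(\sum_j \sigma_j^2\big)\log(2d/\delta) + \tfrac{8}{9}\big(\sum_j M_j^2\big)\log^2(2d/\delta)$ with probability $\ge 1 - \delta$.

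Finally, I would convert these sums into the incoherence quantities. Using $\sum_k U_{k,j}^2 = 1$ (columns of $U$ are orthonormal) together with $\|u_k\|^2 \le d\mu(U)/n$, one gets
\[
\sum_j \sigma_j^2 \;=\; \tfrac{1}{n}\sum_k v_k^2 \|u_k\|^2 \;\le\; \tfrac{d\mu(U)}{n^2}\|v\|_2^2, \qquad \sum_j M_j^2 \;\le\; \|v\|_\infty^2 \cdot \tfrac{d^2 \mu(U)}{n} \;=\; \tfrac{d^2 \mu(U)\mu(v)}{n^2}\|v\|_2^2.
\]
Plugging back in yields an upper bound of $\frac{m d \mu(U)}{n^2}\|v\|_2^2$ times a bracket of the shape $\bigl[c_1 \log(d/\delta) + c_2 \tfrac{d\mu(v)}{m}\log^2(d/\delta)\bigr]$, which is precisely the form of $\beta$.

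The main obstacle is routine bookkeeping: getting the constants $6$ and $4/3$ to match the statement of $\beta$ requires care in how one inverts the Bernstein inequality and in which variance/boundedness estimates one uses (in particular whether one uses $\max_k U_{k,j}^2 \le d\mu(U)/n$ or a tighter per-column estimate). The overall structure, however, is forced by the proof of Lemma~\ref{lem:laura_1_new} and the improvement from Chernoff to Bernstein promised in the preamble.
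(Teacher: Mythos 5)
Your proposal is essentially the paper's own proof: the paper likewise writes $\|U_{\Omega}^T v_{\Omega}\|_2^2 = \sum_{j=1}^d (\sum_{i\in\Omega} u_{jk}v_k \mathbf{1}_{\Omega(i)=k})^2$, applies scalar Bernstein coordinate-by-coordinate with a union bound over the $d$ coordinates (using $\mathbb{E}[X_{ji}]=0$, $\sigma_j^2 = \frac{1}{n}\sum_k u_{jk}^2 v_k^2$, $|X_{ji}|\le \|u_j\|_\infty\|v\|_\infty$), and then converts $\sum_j\sigma_j^2$ and the boundedness term via $d\mu(U)/n$ and $\|v\|_\infty^2 \le \mu(v)\|v\|_2^2/n$, exactly as you do. The only differences are cosmetic bookkeeping of constants (the paper uses $\log(d/\delta)$ and a factor $3$ when squaring, giving the stated $6$ and $4/3$, versus your two-sided $\log(2d/\delta)$ variant), which you correctly flag as routine.
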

\begin{proof}
Again the improvement in our Lemma is in the expression $\beta$ where we have an improved dependence between $m$ and $\mu(y)$.
The proof is an application of Bernstein's inequality. Note that:
\begin{eqnarray*}
||U_{\Omega}^Tv_{\Omega}||_2^2 = \sum_{j=1}^d\left(\sum_{i\in\Omega} u_{ji}v_i\right)^2 = \sum_{j=1}^d\left(\sum_{i \in \Omega}X_{ji}\right)^2
\end{eqnarray*}
Where we have defined $X_{ji} = \sum_{k=1}^nu_{jk}v_j\mathbf{1}_{\Omega(i) = k}$.
We have:
\begin{eqnarray*}
\mathbf{E}[X_{ji}] = 0, \ \mathbf{E}[X_{ji}^2] = \frac{1}{n}\sum_{k=1}^n \left(u_{jk}v_k\right)^2 \triangleq \sigma_j^2, \ |X_{ji}| \le ||u_j||_{\infty}||v||_{\infty} \triangleq M
\end{eqnarray*}
We apply Bernstein's inequality and take a union bound, so that with probability $\ge 1-\delta$:
%% \[
%% \PP[\sum_{i=1}^m X_{ji} > t] \le \exp\left(\frac{-t^2}{2m\sigma_j^2 + \frac{2}{3}Mt}\right)
%% \]
%% So via a union bound we have that with probability $\ge 1-\delta$:
\begin{eqnarray*}
 \forall j=1,\ldots, d \ \ \sum_{i=1}^m X_{ji} &\le& \sqrt{2m\sigma_j^2\log(d/\delta)} + \frac{2}{3}M\log(d/\delta)\\
%% \left(\sum_{i=1}^m X_{ji}\right)^2 &\le& 3 \left(2m\sigma_j^2 \log(d/\delta) + \frac{4}{9} M^2 \log^2(d/\delta)\right)\\
\sum_{j=1}^d \left(\sum_{i=1}^m X_{ji}\right)^2 &\le& 3 \left(2m\left(\sum_{j=1}^d\sigma_j^2\right) \log(d/\delta) + \frac{4}{9} dM^2 \log^2(d/\delta)\right)\\
\end{eqnarray*}
Notice that:
\[
\sum_{j=1}^d \sigma_j^2 = \frac{1}{n}\sum_{j=1}^d\sum_{i=1}^nu_{ji}v_i \le \frac{1}{n}\sum_{i=1}^nv_i^2\sum_{j=1}^du_{ji}^2 \le \frac{1}{n}||v||_2^2 d \mu(U)/n
\]
Notice also that $||u_j||_{\infty}^2 \le d \mu(U)/n$. 
Plugging in these bounds, with probability $\ge 1-\delta$:
\begin{eqnarray*}
||U_{\Omega}^Tv_{\Omega}||_2^2 &\le& 3 \left(2 \frac{m}{n} \frac{d\mu(U)}{n} ||v||_2^2 \log(d/\delta) + \frac{4}{9} \frac{d\mu(U)}{n} d ||v||_{\infty}^2 \log^2(d/\delta)\right)\\
& \le & \frac{m}{n} \frac{d\mu(U)}{n} ||v||_2^2 \left(6 \log(d/\delta) + \frac{4}{3}\frac{d\mu(v)}{m} \log^2(d/\delta)\right)
\end{eqnarray*}
Where we used that $||v||_{\infty}^2 \le ||v||_2^2\mu(v)/n$ via the definition of incoherence. 
\end{proof}

It will also be essential for these projections matrices to be invertible even with missing observations, as this will allow us to reconstruct columns of the matrix.
\begin{lemma}[~\cite{balzano2010high}]
Let $\delta > 0$ and $m \ge \frac{8}{3}r \mu_0 \log(2r/\delta)$, Then:
\begin{eqnarray}
||(U_{\Omega}^TU_{\Omega})^{-1}||_2 \le \frac{n}{(1-\gamma)m}
\end{eqnarray}
with probability $\ge 1-\delta$, provided that $\gamma < 1$. 
In particular $U_{\Omega}^TU_{\Omega}$ is invertible.
\label{lem:invertibility}
\end{lemma}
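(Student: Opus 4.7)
The plan is to recognize $U_\Omega^T U_\Omega$ as a sum of i.i.d.\ rank-one PSD matrices and apply a matrix Chernoff bound to control its smallest eigenvalue. Concretely, index the rows of $U$ by $u_1,\ldots,u_n \in \mathbb{R}^r$ (so that $\sum_{k=1}^n u_k u_k^T = U^T U = I_r$), and write
\begin{equation*}
U_\Omega^T U_\Omega \;=\; \sum_{i=1}^m Y_i, \qquad Y_i \;=\; u_{\Omega(i)} u_{\Omega(i)}^T,
\end{equation*}
where the $\Omega(i)$ are drawn uniformly at random from $[n]$ with replacement. Each $Y_i$ is PSD, and by linearity $\mathbb{E}[\sum_i Y_i] = \frac{m}{n} I_r$, so $\lambda_{\min}(\mathbb{E}[\sum_i Y_i]) = m/n$.

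Next I would invoke the incoherence hypothesis to control the almost-sure operator norm of each summand. Since $\|u_k\|_2^2 = \|\mathcal{P}_U e_k\|_2^2 \le r\mu(U)/n \le r\mu_0/n$, we obtain the deterministic bound $\|Y_i\|_2 = \|u_{\Omega(i)}\|_2^2 \le r\mu_0/n \triangleq R$. Plugging $\mu_{\min} = m/n$ and $R = r\mu_0/n$ into Tropp's matrix Chernoff inequality yields, for any $0 \le \gamma < 1$,
\begin{equation*}
\Pr\!\left[\lambda_{\min}\!\left(U_\Omega^T U_\Omega\right) \le (1-\gamma)\tfrac{m}{n}\right] \;\le\; r \exp\!\left(-\tfrac{\gamma^2 \mu_{\min}}{2R}\right) \;=\; r \exp\!\left(-\tfrac{\gamma^2 m}{2 r \mu_0}\right).
\end{equation*}
Setting the right-hand side equal to $\delta$ and solving for $\gamma$ gives $\gamma = \sqrt{2 r \mu_0 \log(r/\delta)/m}$, which (up to the constant absorbed into the $8/3$ in the hypothesis, which ensures $\gamma < 1$) matches the definition of $\gamma$ used elsewhere in the paper.

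Finally, on the event that $\lambda_{\min}(U_\Omega^T U_\Omega) \ge (1-\gamma) m/n$, the matrix $U_\Omega^T U_\Omega$ is invertible and
\begin{equation*}
\|(U_\Omega^T U_\Omega)^{-1}\|_2 \;=\; \frac{1}{\lambda_{\min}(U_\Omega^T U_\Omega)} \;\le\; \frac{n}{(1-\gamma) m},
\end{equation*}
which is exactly the claimed bound. The only non-routine step is choosing the right form of the matrix Chernoff inequality; once that is in hand the rest is a direct plug-in, so I do not anticipate a real obstacle here beyond book-keeping of constants to align $\gamma$ with its definition in Theorem~\ref{thm:laura_new}.
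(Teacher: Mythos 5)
Your argument is correct, but it is worth noting that the paper does not prove this lemma at all: it is imported verbatim from Balzano \emph{et al.}~\cite{balzano2010high}, whose proof bounds $\|U_\Omega^T U_\Omega - \frac{m}{n}I\|_2$ via a noncommutative Bernstein inequality (that is where the $8/3$ inside $\gamma = \sqrt{\tfrac{8d\mu(U)}{3m}\log(2d/\delta)}$ comes from). Your route --- writing $U_\Omega^T U_\Omega = \sum_i u_{\Omega(i)}u_{\Omega(i)}^T$, using $\|u_k\|_2^2 = \|\mathcal{P}_U e_k\|_2^2 \le r\mu_0/n$, and applying Tropp's matrix Chernoff lower-tail bound to $\lambda_{\min}$ --- is a genuinely different (and arguably cleaner) concentration argument: Chernoff exploits positive semidefiniteness of the summands and controls only the lower tail, which is all that invertibility requires, whereas Bernstein gives two-sided control of the deviation. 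The one step you should make explicit rather than absorb into ``book-keeping of constants'' is the reconciliation of the two $\gamma$'s: your $\gamma' = \sqrt{2r\mu_0\log(r/\delta)/m}$ is \emph{smaller} than the $\gamma$ of Theorem~\ref{thm:laura_new}, so $(1-\gamma')\tfrac{m}{n} \ge (1-\gamma)\tfrac{m}{n}$ and your event implies $\|(U_\Omega^T U_\Omega)^{-1}\|_2 \le \tfrac{n}{(1-\gamma')m} \le \tfrac{n}{(1-\gamma)m}$; alternatively, plug the paper's $\gamma$ directly into your Chernoff exponent, which gives $r\exp(-\tfrac{4}{3}\log(2r/\delta)) \le \delta$. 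Either way the lemma as stated follows, and the hypothesis $m \ge \tfrac{8}{3}r\mu_0\log(2r/\delta)$ indeed guarantees $\gamma < 1$.
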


\begin{proof}[Proof of Theorem~\ref{thm:laura_new}]
Let $W_{\Omega}^TW_{\Omega} = (U_{\Omega}^TU_{\Omega})^{-1}$.
If Lemma~\ref{lem:invertibility} holds, $U_{\Omega}^TU_{\Omega}$ is invertible so %% we have $W_{\Omega}$ is well defined and we have:
\[
v_{\Omega}^TU_{\Omega}(U_{\Omega}^TU_{\Omega})^{-1}U_{\Omega}^Tv_{\Omega} = ||W_{\Omega}U_{\Omega}^Tv_{\Omega}||_2^2 \le ||W_{\Omega}||_2^2 ||U_{\Omega}^Tv_{\Omega}||_2^2 \le ||(U_{\Omega}^TU_{\Omega})^{-1}|| ||U_{\Omega}^Tv_{\Omega}||_2^2
\]

And therefore:
\[
||v_{\Omega} - \mathcal{P}_{U_{\Omega}}v_{\Omega}||_2^2 = ||v_{\Omega}||_2^2 - v_{\Omega}^TU_{\Omega}(U_{\Omega}^TU_{\Omega})^{-1}U_{\Omega}^Tv_{\Omega} \ge (1-\alpha)\frac{m}{n}||v||_2^2 - \frac{d \mu(U)}{n} \frac{\beta}{1-\gamma} ||v||_2^2
\]
yields the lower bound.
The upper bound follows from the same decomposition and Lemma~\ref{lem:laura_1_new}. %% the fact that $||v_{\Omega}||_2^2 \le (1+\alpha)m/n ||v||_2^2$ by Lemma~\ref{lem:laura_1_new}.
\end{proof}

%% \begin{theorem}\cite{balzano2010high}
%% Let $U$ be a $d$-dimensional subspace of $\mathbb{C}^{n}$ and $v \in U^\perp$. 
%% Fix $\delta > 0$ and let $\Omega$ be a randomly sampled index set of size $m \ge \frac{8}{3}d \mu(U) \log\left(\frac{2d}{\delta}\right)$.
%% Then with probability at least $1-4\delta$:
%% \begin{eqnarray}
%% \frac{m(1-\alpha) - d \mu(U) \frac{(1+\beta)^2}{(1-\gamma)}}{n} ||v -
%% \mathcal{P}_Uv||_2^2 \le ||v_{\Omega} - P_{U_{\Omega}} v_{\Omega}||_2^2
%% \label{eq:detection_lwr_bd}
%% \end{eqnarray}
%% And
%% \begin{eqnarray}
%% ||v_{\Omega} - \mathcal{P}_{U_{\Omega}}v_{\Omega}||_2^2 \le
%% (1+\alpha)\frac{m}{n} ||v - \mathcal{P}_U v||_2^2
%% \end{eqnarray}
%% Where $\alpha = \sqrt{\frac{2 \mu(v)^2}{m} \log (\frac{1}{\delta})}$, $\beta =
%% \sqrt{2 \mu(v) \log(\frac{1}{\delta})}$ and $\gamma = \sqrt{\frac{8d \mu(U)}{3m}
%%   \log(\frac{2d}{\delta})}$. 
%% \label{thm:laura}
%% \end{theorem}

\subsection{Concentration for Gaussian Vectors and Matrices}
We will also need several concentration results pertaining to gaussian random vectors and gaussian random matrices. 
The first of these will help us bound the $\ell_2$ norm of a gaussian vector:
\begin{lemma}\cite{laurent2000adaptive}
Let $X \sim \chi_d^2$.
Then with probability $\ge 1-2\delta$:
\[
-2 \sqrt{d \log (1/\delta)} \le X - d \le 2\sqrt{d \log(1/\delta)} + \log(1/\delta)
\]
\label{lem:chi_square}
\end{lemma}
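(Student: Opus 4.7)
The plan is to use the classical Cram\'er--Chernoff (moment generating function) method, which is exactly the argument of Laurent and Massart. Write $X = \sum_{i=1}^d Z_i^2$ with $Z_i$ i.i.d.\ standard Gaussian, so the two tails decouple into a straightforward computation once we know the MGF of a single squared Gaussian.

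First I would record the MGF $\mathbb{E}[\exp(\lambda Z_i^2)] = (1-2\lambda)^{-1/2}$ valid for $\lambda < 1/2$, and by independence obtain
\[
\mathbb{E}[\exp(\lambda(X-d))] = e^{-\lambda d}(1-2\lambda)^{-d/2}.
\]
For the upper tail I would apply Markov's inequality and optimize in $\lambda$: $\mathbb{P}(X-d \ge s) \le \exp\bigl(-\lambda s - \lambda d - \tfrac{d}{2}\log(1-2\lambda)\bigr)$. Using the elementary inequality $-\log(1-u) \le u + u^2/[2(1-u)]$ for $u \in (0,1)$ and plugging in $\lambda = u/(2(1+u))$ with $u = 2\sqrt{x/d} + 2x/d$ yields the clean bound $\mathbb{P}(X - d \ge 2\sqrt{dx} + 2x) \le e^{-x}$. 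Setting $x = \log(1/\delta)$ then gives the upper inequality in the lemma (in fact slightly tighter than stated, since $2\sqrt{dx} + 2x$ dominates $2\sqrt{dx} + x$).

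For the lower tail I would apply the same approach with $\lambda < 0$, using $-\log(1+v) \le -v + v^2/2$ for $v > 0$ to get $\mathbb{P}(d - X \ge s) \le \exp\bigl(\lambda s - \lambda d - \tfrac{d}{2}\log(1-2\lambda)\bigr)$ with $\lambda = -\sqrt{x/d}/(\sqrt{d}+\sqrt{x})$ or an analogous choice that produces $\mathbb{P}(d - X \ge 2\sqrt{dx}) \le e^{-x}$. Setting $x = \log(1/\delta)$ again gives the lower bound.

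Combining the two tail estimates by a union bound gives the two-sided deviation with failure probability at most $2\delta$. The only delicate step is the convex calculus involved in tuning $\lambda$ and selecting the correct auxiliary inequalities on $-\log(1-2\lambda)$; once those are in hand, everything else is algebra. Since the result is exactly Lemma 1 of Laurent--Massart, I would in practice just cite them and sketch the above derivation for completeness.
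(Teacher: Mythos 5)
Your method is exactly the one behind the cited result: the paper itself gives no proof of Lemma~\ref{lem:chi_square} beyond citing Laurent--Massart, and your Cram\'er--Chernoff/MGF derivation with the auxiliary bounds on $-\log(1-2\lambda)$ is precisely their Lemma~1. The lower-tail argument, yielding $\mathbb{P}(d - X \ge 2\sqrt{dx}) \le e^{-x}$, is fine and gives the left inequality as stated.

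There is, however, a reversed inequality in your concluding remark on the upper tail. What you derive is $\mathbb{P}\bigl(X - d \ge 2\sqrt{dx} + 2x\bigr) \le e^{-x}$, i.e.\ with probability $\ge 1-\delta$ one has $X - d \le 2\sqrt{d\log(1/\delta)} + 2\log(1/\delta)$. Since $2\sqrt{dx}+2x \ge 2\sqrt{dx}+x$, this is a \emph{weaker} conclusion than the lemma's claim $X - d \le 2\sqrt{d\log(1/\delta)} + \log(1/\delta)$, not a tighter one: a tail bound at a larger threshold does not control the probability of exceeding the smaller threshold. In fact the lemma as printed (with $+\log(1/\delta)$) is not the Laurent--Massart inequality and fails in general --- for $d=1$ the threshold $d + 2\sqrt{x} + x = (1+\sqrt{x})^2$ gives $\mathbb{P}(Z^2 \ge (1+\sqrt{x})^2) \asymp e^{-(1+\sqrt{x})^2/2}$, which exceeds $e^{-x}$ for large $x$. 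So the statement should be read with the constant $2\log(1/\delta)$ (a typo in the paper), which is exactly what your argument proves, and which suffices for every use of the lemma in the paper (only the lower tail and loose constants enter, e.g.\ in Lemma~\ref{lem:inc_column}). You should either correct the claim that your bound is ``slightly tighter than stated'' or note explicitly that you are proving the standard form with $2\log(1/\delta)$.
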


A gaussian random vector $r$ has incoherence that depends on $||r||^2_{\infty}$ so it is crucial that we can control the maximum of gaussian random variables.
\begin{lemma}
Let $X_1, \ldots, X_n \sim \mathcal{N}(0, \sigma^2)$.
Then with probability $\ge 1-\delta$:
\[
\max_i |X_i| \le \sigma \sqrt{2 \log(2n/\delta)}
\]
\label{lem:max_gaus}
\end{lemma}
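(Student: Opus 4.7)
The plan is to combine a standard one-sided Gaussian tail bound with a union bound over the $n$ variables. First I would recall that for a single $Z \sim \mathcal{N}(0,1)$ we have $\Pr(|Z| > t) \le 2 e^{-t^2/2}$, which follows from the Mills ratio or from the elementary Chernoff estimate $\Pr(Z > t) \le \inf_{\lambda > 0} e^{-\lambda t} \mathbb{E}[e^{\lambda Z}] = e^{-t^2/2}$ applied to both tails. Rescaling by $\sigma$ gives $\Pr(|X_i| > t) \le 2 e^{-t^2/(2\sigma^2)}$ for each $i$.

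Next, by a union bound over $i = 1, \ldots, n$,
\[
\Pr\!\left(\max_{i} |X_i| > t\right) \le \sum_{i=1}^n \Pr(|X_i| > t) \le 2n e^{-t^2/(2\sigma^2)}.
\]
Setting the right-hand side equal to $\delta$ and solving gives $t = \sigma \sqrt{2 \log(2n/\delta)}$, so with probability at least $1 - \delta$, $\max_i |X_i| \le \sigma \sqrt{2 \log(2n/\delta)}$, which is exactly the stated bound.

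There is essentially no obstacle here: the argument is a routine two-line calculation, and both ingredients (the scalar Gaussian tail and the union bound) are standard. The only minor point worth being careful about is the factor of $2$ coming from the two-sided tail, which is already built into the $2n$ in the logarithm. No independence of the $X_i$ is used, so the lemma applies even if the $X_i$'s are correlated, which is convenient for the places earlier in the appendix (for instance, Lemma~\ref{lem:inc_column}) where it is invoked on the coordinates of a Gaussian vector.
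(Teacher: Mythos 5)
Your proof is correct: the paper states this lemma without proof (it is a standard fact), and your Chernoff-tail-plus-union-bound argument is exactly the routine derivation that justifies it, including the correct handling of the factor $2n$ inside the logarithm. Your remark that no independence is needed is also accurate and consistent with how the lemma is used in Lemma~\ref{lem:inc_column}.
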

%% \begin{proof}
%% The proof follows from a gaussian tail bound and a union bound.
%% \end{proof}

Finally, we will be projecting on to perturbed subspaces so we will need to control the coherence of these subspaces.
The spectrum of the perturbation, will play a role in the coherence calculations.
\begin{lemma}\cite{vershynin2010introduction}
Let $R$ be a $n \times t$ whose entries are independent standard normal random variables.
Then for every $\epsilon \ge 0$, with probability $1-2\exp\{-\epsilon^2/2\}$, one has:
\[
\sqrt{n} - \sqrt{t} - \epsilon \le \sigma_{\min}(R) \le \sigma_{\max}(R) \le \sqrt{n} + \sqrt{t} + \epsilon
\]
\label{lem:spectral_norm}
\end{lemma}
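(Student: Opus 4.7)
}

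The plan is to combine two classical ingredients: a sharp bound on the expected extreme singular values of a standard Gaussian matrix, and Gaussian concentration for Lipschitz functions of i.i.d.\ Gaussians. Concretely, I would view $\sigma_{\max}(R)$ and $\sigma_{\min}(R)$ as functions of the vector of $nt$ i.i.d.\ $\mathcal{N}(0,1)$ entries of $R$, and establish the two-sided bounds by handling the expectation and the fluctuation separately.

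First, to control the expectations, I would appeal to Gordon's theorem (a consequence of Slepian's inequality applied to a suitably chosen pair of Gaussian processes). Writing $\sigma_{\max}(R)=\sup_{u\in S^{n-1},v\in S^{t-1}} u^T R v$ and $\sigma_{\min}(R)=\inf_{u\in S^{t-1}}\sup_{v\in S^{n-1}} v^T R u$, Gordon's comparison inequality gives $\mathbb{E}[\sigma_{\max}(R)] \le \sqrt{n}+\sqrt{t}$ and $\mathbb{E}[\sigma_{\min}(R)] \ge \sqrt{n}-\sqrt{t}$ (the latter whenever $n \ge t$). A self-contained alternative would be to use an $\varepsilon$-net argument: a $1/4$-net $\mathcal{N}$ of $S^{t-1}$ has cardinality at most $9^t$, for each fixed $v\in\mathcal{N}$ we have $\|Rv\|^2 \sim \chi_n^2$, and a union bound followed by the standard net-to-sphere passage recovers qualitatively the same bounds, up to absolute constants.

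Second, to upgrade the expectation bounds to high-probability bounds, I would invoke the Gaussian concentration inequality for Lipschitz functions (Borell's inequality / Gaussian isoperimetry). The map $R \mapsto \sigma_i(R)$ is $1$-Lipschitz with respect to the Frobenius norm on $\mathbb{R}^{n\times t}$, since for any two matrices $R,R'$ and any singular value index $i$, $|\sigma_i(R)-\sigma_i(R')| \le \|R-R'\|_{\mathrm{op}} \le \|R-R'\|_F$. Hence for every $\varepsilon \ge 0$,
\begin{equation*}
\mathbb{P}\bigl(|\sigma_i(R)-\mathbb{E}\sigma_i(R)| \ge \varepsilon\bigr) \le 2\exp(-\varepsilon^2/2).
\end{equation*}
Combining this with the bounds on $\mathbb{E}\sigma_{\max}(R)$ and $\mathbb{E}\sigma_{\min}(R)$ from the previous step and applying a union bound over the two tails yields the claimed two-sided inequality with failure probability at most $2\exp(-\varepsilon^2/2)$.

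The only mildly delicate point is the expectation bound on $\sigma_{\min}$: Slepian's inequality alone gives the max but not the min, so one really needs Gordon's extension (or, in the net-based approach, a careful lower bound on $\|Rv\|$ uniformly over the net combined with operator-norm control of the perturbation off the net). Once Gordon's bound is in hand, the Lipschitz-concentration step is essentially mechanical and produces the stated inequality verbatim.
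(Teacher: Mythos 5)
Your proposal is correct and is essentially the argument behind the cited result: the paper does not prove this lemma but quotes it from Vershynin, whose proof is exactly Gordon's comparison theorem for the expectations plus Gaussian concentration for the $1$-Lipschitz maps $R \mapsto \sigma_{\max}(R), \sigma_{\min}(R)$. The only bookkeeping nit is that to land on the stated failure probability $2\exp(-\epsilon^2/2)$ you should use the one-sided concentration bounds (each relevant tail $\le \exp(-\epsilon^2/2)$) before the union bound, since using the two-sided inequality for each singular value would give $4\exp(-\epsilon^2/2)$.
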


\end{document}